\newcommand\reallywidehat[1]{%
\savestack{\tmpbox}{\stretchto{%
  \scaleto{%
    \scalerel*[\widthof{\ensuremath{#1}}]{\kern.1pt\mathchar"0362\kern.1pt}%
    {\rule{0ex}{\textheight}}%WIDTH-LIMITED CIRCUMFLEX
  }{\textheight}% 
}{2.4ex}}%
\stackon[-6.9pt]{#1}{\tmpbox}%
}
\newcommand*\iftodonotes{\if@todonotes@disabled\expandafter\@secondoftwo\else\expandafter\@firstoftwo\fi}   % defines \iftodonotes{<true>}{<false>}, thanks to https://tex.stackexchange.com/questions/126559/conditional-based-on-packageoption
\newcommand{\nik}[2][]{}
\newcommand{\Nik}[2][]{}
\newcommand{\fran}[2][]{}
\newcommand{\Fran}[2][]{}
\newtheorem*{rep@theorem}{\rep@title}
\newcommand{\newreptheorem}[2]{%
\newenvironment{rep#1}[1]{%
 \def\rep@title{#2 \ref{##1}}%
 \begin{rep@theorem}}%
 {\end{rep@theorem}}}
\renewcommand{\textcolor}[2]{#2}
\def\eqref#1{equation~\ref{#1}}
\def\1{\bm{1}}
\def\d{\mathrm{d}}
\def\ora{\overrightarrow}
\def\ola{\overleftarrow}
\def\fP{\overrightarrow{\mathbb{P}}}
\def\bP{\overleftarrow{\mathbb{P}}}
\def\fX{{\bm{X}}}
\def\fY{{\bm{Y}}}
\def\bY{{\bm{Y}}}
\def\fZ{{\bm{Z}}}
\def\fW{{\bm{W}}}
\def\bW{{\bm{W}}}
\def\ddb{\overleftarrow{\mathrm{d}}}
\def\ddf{\overrightarrow{\mathrm{d}}}
\def\dd{{\mathrm{d}}}
\def\vx{{\bm{x}}}
\def\vy{{\bm{y}}}
\def\vz{{\bm{z}}}
\def\mW{{\bm{W}}}
\def\mY{{\bm{Y}}}
\DeclareMathAlphabet{\mathsfit}{\encodingdefault}{\sfdefault}{m}{sl}
\SetMathAlphabet{\mathsfit}{bold}{\encodingdefault}{\sfdefault}{bx}{n}
\def\gF{{\mathcal{F}}}
\def\gN{{\mathcal{N}}}
\def\gW{{\mathcal{W}}}
\def\gZ{{\mathcal{Z}}}
\def\sR{{\mathbb{R}}}
\newcommand{\E}{\mathbb{E}}
\def\calN{{\mathcal{N}}}
\let\log\relax
\DeclareMathOperator{\log}{ln}
\def\P{{\mathbb{P}}}
\def\Q{{\mathbb{Q}}}
\def\Rc{{\mathcal{R}}}
\newcommand{\KL}{D_{\mathrm{KL}}}
\newcommand{\Var}{\mathrm{Var}}
\DeclareMathOperator*{\argmin}{arg\,min}
\theoremstyle{plain}
\newtheorem{theorem}{Theorem}[section]
\newtheorem{proposition}[theorem]{Proposition}
\newtheorem{corollary}[theorem]{Corollary}
\theoremstyle{definition}
\newtheorem{framework}{Framework}
\newtheorem{assumption}[theorem]{Assumption}
\newtheorem{remark}{Remark}
\newenvironment{frmbis}[1]
  {%
   %\addtocounter{framework}{-1}%
   \begin{framework}}
  {\end{framework}}
\newenvironment{algocolor}{%
   \setlength{\parindent}{0pt}
   \itshape
    
}{}
\title{Transport meets Variational Inference: \\ Controlled Monte Carlo Diffusions}
\author{Francisco Vargas\textsuperscript{\ensuremath{*}},  Shreyas Padhy\textsuperscript{\ensuremath{*}} \\
% Department of Computer Science\\
University of Cambridge\\
Cambridge, UK \\
\texttt{\{fav25,sp2058\}@cam.ac.uk} 
% \AND
\And
% Shreyas Pahdy\\
% % Department of Engineering \\
% Cambridge University \\
% Cambridge, UK\\
% \texttt{sp2058@cam.ac.uk} 
% \AND
Denis Blessing \\
KIT \\
Karlsruhe, Germany \\
 \texttt{jl8142@kit.edu}
\And
Nikolas Nüsken\textsuperscript{\ensuremath{*}}  \\
Kings College London \\
London, UK \\ %\\
\texttt{nik.nuesken@gmx.de}
% \texttt{nikolas.nusken@kcl.ac.uk}
}
\begin{document}

\maketitle

% \ifarxiv
\begin{NoHyper}
\renewcommand{\thefootnote}{\ensuremath{*}}
\footnotetext{Equal contribution.}
\end{NoHyper}
% \fi

\begin{abstract}
Connecting optimal transport and variational inference, we present a principled and systematic framework for sampling and generative modelling centred around divergences on path space. Our work culminates in the development of \emph{Controlled Monte Carlo Diffusions} for \textcolor{magenta}{sampling and inference}, a score-based annealing technique that crucially adapts both forward and backward dynamics in a diffusion model. \textcolor{magenta}{On the way, we clarify the relationship between the EM-algorithm and iterative proportional fitting (IPF) for Schr{\"o}dinger bridges, providing a conceptual link between fields.} Finally, we show that CMCD has a strong foundation in the Jarzinsky and Crooks identities from statistical physics, and that it convincingly outperforms competing approaches across a wide array of experiments.

\end{abstract}

% \begin{abstract}
% Connecting optimal transport and variational inference, we present a principled and systematic framework for sampling and generative modelling centred around divergences on path space. Our work culminates in the development of the \emph{Controlled Monte Carlo Diffusion} sampler (CMCD) for Bayesian computation, a score-based annealing technique that crucially adapts both forward and backward dynamics in a diffusion model. On the way, we clarify the relationship between the EM-algorithm and iterative proportional fitting (IPF) for Schr{\"o}dinger bridges, deriving as well a regularised objective that bypasses the iterative bottleneck of standard IPF-updates. Finally, we show that CMCD has a strong foundation in the Jarzinsky and Crooks identities from statistical physics, and that it convincingly outperforms competing approaches across a wide array of experiments.

% \end{abstract}

\section{Introduction}

Optimal transport \citep{villani2009optimal} and
variational inference \citep{blei2017variational} have for a long time been separate fields of research. 
In recent years, many fruitful connections have been established \citep{liu2019understanding}, in particular based on dynamical formulations \citep{tzen2019neural}, and in conjunction with time reversals \citep{huang2021variational,song2020score}.
\textcolor{magenta}{The goal of this paper is twofold: In the first part, we enhance those relationships based on forward and reverse time diffusions, and associated Girsanov transformations, arriving at a unifying framework for generative modeling and sampling. In the second part, we build on this and develop a novel score-based scheme for sampling from unnormalised densities.} 
% As with many variational inference papers \citep{rezende2015variational,papamakarios2021normalizing, thin2021monte, doucet2022score, geffner2023langevin}, our goal will be to sample from un-normalized densities: let $\mu$ be a probability density on $\sR^d$ of the form
 To set the stage, we recall a classical approach \citep{kingma2013auto,rezende2015variational} towards generating samples from a target distribution $\mu(\vx)$, which is the goal both in generative modelling and sampling:

%Both in generative modelling and sampling the goal is to generate samples from a target distribution $\mu(\vx)$, be it the empirical data distribution or an unnormalised density, for example given by a Bayesian posterior.  	
\textbf{Generative processes, encoders and decoders.} We consider methodologies which can be implemented via the following generative process,  \vspace{-0.2cm}
\begin{align}
 \vz \sim \nu(\vz), \qquad 
 \vx|\vz \sim p^{\theta}(\vx | \vz), \label{eq:gen_model} \vspace{-0.1cm}
 \end{align}
transforming a sample $\vz \sim \nu(\vz)$ into a sample $\vx \sim \int p^\theta(\vx|\vz) \nu(\mathrm{d}\vz)$. Traditionally,  $\nu(\vz)$ is a simple auxiliary distribution, and the family of transitions $p^{\theta}(\vx | \vz)$ is parameterised flexibly and in such a way that sampling according to (\ref{eq:gen_model}) is tractable. Then we can frame the tasks of generative modelling and sampling as finding transition densities such that the marginal in $\vx$ matches the target distribution, \vspace{-0.29cm}
 \begin{equation}
 \label{eq:x marginal}
 \mu(\vx) = \int p^{\theta}(\vx | \vz) \nu(\mathrm{d}\vz).  \vspace{-0.1cm}
 \end{equation} 
\noindent To learn such a transition, it is helpful to introduce a reversed process \vspace{-0.18cm}
\begin{align} 
\vx \sim \mu(\vx), \qquad 
\vz|\vx \sim q^{\phi}(\vz | \vx), \label{eq:encoder} 
\end{align} 
relying on an appropriately parameterised backward transition $q^\phi(\vz|\vx)$. We will say that (\ref{eq:gen_model}) and  (\ref{eq:encoder}) are \emph{reversals of each other} in the case when their joint distributions coincide, that is, when \vspace{-0.1cm}
 \begin{equation}
 \label{eq:reversal}
q^\phi(\vz|\vx)\mu(\vx) = p^\theta(\vx|\vz) \nu(\vz). \vspace{-0.1cm}
\end{equation} To appreciate the significance of (\ref{eq:encoder}), notice that if (\ref{eq:reversal}) holds, then (\ref{eq:x marginal}) is implied by integrating both sides with respect to $\vz$. Building on this observation, it is natural to define the loss function \vspace{-0.2cm}
 \begin{equation} \label{eq:abstract loss} 	\mathcal{L}_D(\phi,\theta) := D \left( q^\phi (\vz|\vx)\mu(\vx)\big|\big| p^{\theta}(\vx | \vz)\nu(\vz) \right),
 \end{equation}
where $D$ is a divergence\footnote{As usual, divergences are characterised by the requirement that $D(\alpha \big|\big|\beta) \ge  0$, with equality iff $\alpha = \beta$.} between distributions yet to be specified. Along the lines of \citet{bengio2021gflownet,sohl2015deep,wu2020stochastic,liu2022let}, we have now laid the foundations for algorithmic approaches that aim at sampling from $\mu(\vx)$ by minimising $\mathcal{L}_D(\phi,\theta)$:
 \begin{framework}\label{prop:framework}
 Let $D$ be an arbitrary divergence, and assume that $\mathcal{L}_D(\phi,\theta) = 0$. Then we have \vspace{-0.2cm}
 \begin{equation}
 \label{eq:coupling}
\mu(\vx) \!= \!\!\int \!\!p^{\theta}(\vx | \vz) \nu(\mathrm{d}\vz)\; \quad  \text{and} \quad \; \nu(\vz)\!\! = \!\!\int \!\!q^{\phi}(\vz | \vx) \mu(\mathrm{d}\vx),
\end{equation}
 that is, $\nu(\vz)$ is transformed into $\mu(\vx)$ by $p^\theta(\vx|\vz)$, and $\mu(\vx)$ is transformed into $\nu(\vz)$ by $q^\phi(\vz|\vx)$. 
 \end{framework}
 \textcolor{magenta}{ \textbf{The sampling problem.} Let $\nu$ denote a probability density function on $\sR^d$ of the form ${\nu(\vz)=\frac{\hat{\nu}(\vz)}{Z}, \quad Z=\int_{\mathbb{R}^d} \hat{\nu}(\vz) \mathrm{d} \vz, } $}
% \begin{align}
% \textcolor{magenta}{\nu(\vz)=\frac{\hat{\nu}(\vz)}{Z}, \quad Z=\int_{\mathbb{R}^d} \hat{\nu}(\vz) \mathrm{d} \vz, } \nonumber
% \end{align}
\textcolor{magenta}{where $\hat{\nu}: \mathbb{R}^d \rightarrow \mathbb{R}^{+}$can be differentiated and evaluated pointwise but the normalizing constant $Z$ is intractable. We are interested in both estimating $Z$ and obtaining approximate samples from $\nu$ given we can sample from a more tractable density $\mu$.} 
\textcolor{magenta}{Framework \ref{prop:framework} provides us with an objective to tackle the sampling problem as once $\mathcal{L}_D(\phi,\theta)=0$,   we can generate samples from $\nu(\vz)$ via the variational distribution $q^\phi(\vz|\vx)$.}
 \textcolor{magenta}{Through variational inference and optimal transport, we discuss relationships to classical methods as well as shortcomings:}
 % In what follows we present a well-known choice for the divergence $D$ and discuss its shortcomings.
 
 \textbf{KL-divergence, ELBO and variational inference.} Choosing $D = \KL$ in (\ref{eq:abstract loss}), variational inference  (VI) and latent variable model based approaches  \citep{dempster1977maximum,blei2017variational,kingma2013auto} can elegantly be placed within Framework \ref{prop:framework}. Indeed, direct computation (see Appendix \ref{app:vi}) shows that $\mathcal{L}_{\KL}(\phi,\theta) = - \mathbb{E}_{\vx \sim \mu(\vx) }[\mathrm{ELBO}_x(\phi,\theta)] + \mathbb{E}_{\vx \sim \mu(\vx)}[\log \mu (x)]$, 
 %\vspace{-0.51cm}
 %\begin{align}
 %\mathcal{L}_{\KL}(\phi,\theta) = - &\mathbb{E}_{\vx \sim \mu(\vx) }  \overbrace{\left[\int \ln \frac{p^{\theta}(\vx | \vz)\nu(\vz)}{q^\phi (\vz|\vx) }q^\phi(\mathrm{d}\vz|\vx)\right]}^{=\mathrm{ELBO}_x(\phi,\theta)} + \int \log \mu(\vx) \mu(\mathrm{d}\vx),
 %\label{eq:ELBO}
 %\end{align}
 so that minimising $\mathcal{L}_{\KL}(\phi,\theta)$ is equivalent to maximising the expected evidence lower bound (ELBO), also known as the negative free energy \citep{blei2017variational}. This derivation is alternative to the standard approach via maximum likelihood and convex duality (or
 Jensen's inequality) \citep[Section 2.2]{kingma2021variational}, and directly accomodates various modifications by  replacing the $\KL$-divergence (see Appendix \ref{app:vi}).
 % We remind the reader that since $\mathcal{L}_{\KL}(\phi,\theta) \geq \ln Z$ it is typically used in the sampling literature to estimate the normalizing constant $Z$.
 % In particular, we recall\nik{maybe the IWAE stuff is actually new!?} in Appendix \ref{app:vi} that Importance Weighted Variational Autoencoders (IWAE) \citep{burda2015importance}  and variance-reduced gradient estimators \citep{richter2020vargrad} can be derived starting from (\ref{eq:abstract loss}). 

 % TODO: Removed
 % Other popular choices for $D$ include the $f$ and $\alpha$ divergences \citep{csiszar1967information,ali1966general, renyi1961measures}.
 	
\textbf{Couplings, (optimal) transport and nonuniqueness.}
Assuming (\ref{eq:reversal}) holds, it is natural to define the joint distribution $\pi(\vx,\vz) :=  q^\phi(\vz|\vx)\mu(\vx) = p^\theta(\vx|\vz) \nu(\vz)$, which is a coupling between $\mu(\vx)$ and $\nu(\vz)$. Viewed from this angle, the set of minimisers of $\mathcal{L}(\phi,\theta)$ stands in one-to-one correspondence with the set of couplings between $\mu(\vx)$ and $\nu(\vz)$, provided that the parameterisations are chosen flexibly enough. Under the latter  assumption, the objective in (\ref{eq:abstract loss}) admits an infinite number of minimisers, rendering algorithmic approaches solely based on Framework \ref{prop:framework} potentially unstable and their output hard to interpret. In the language of \emph{optimal transport} \citep{villani2003topics}, minimising $\mathcal{L}(\phi,\theta)$  enforces the marginal (\emph{`transport'}) constraints in (\ref{eq:coupling})  without a selection principle based on an appropriate cost function (\emph{`optimal'}). 
% \footnote{Couplings are unique only in the degenerate case when at least one of $\mu(\vx)$ or $\nu(\vz)$ is concentrated at a point, that is, $\mu(\vx) = \delta(\vx - \vx_0)$ or $\nu(\vz) = \delta(\vz - \vz_0)$.}

Methods such as VAEs \citep{kingma2013auto} parameterise $p^\theta(\vx|\vz)$ and $q^\phi(\vz|\vx)$ with a restricted family of distributions (such as Gaussians), thus restricting the set of couplings. Expectation maximisation (EM) minimises $\mathcal{L}(\phi,\theta)$ in a component-wise fashion, resolving nonquniqueness in a procedural manner (see Section \ref{sec:IPF}). Common diffusion models fix either $p^\theta(\vx|\vz)$ or $q^\phi(\vz|\vx)$, and thus select a coupling (Section \ref{sec:SDEs}). In this paper, we argue that the full potential of diffusion models can be unleashed by training the  forward and backward processes at the same time, but appropriate modifications that resolve the nonuniqueness inherent in Framework \ref{prop:framework} need to be imposed. To develop principled approaches towards this, we proceed as follows:

\textbf{Outline and contributions.} In Section \ref{sec:VAE-SDE} we recall hierarchical VAEs \citep{rezende2014stochastic} and, following \citet{tzen2019neural}, proceed to the infinite-depth limit described by the SDEs in (\ref{eq:SDEs}). Readers more familiar with VI and discrete time might want to take the development in Section \ref{sec:hierVAE} as an explanation of (\ref{eq:SDEs}); readers with background in stochastic analysis might take Framework \ref{frm2} as their starting point. In Proposition \ref{prop:RND} we provide a generalised form of the Girsanov theorem for forward-reverse time SDEs, crucially incorporating the choice of a reference process that allows us to reason about sampling and generation in a systematic and principled way. We demonstrate that a range of widely used approaches, such as score-based diffusions and path integral samplers, among others, are special cases of our unifying framework (Section \ref{sec:SDEs}). \textcolor{magenta}{Similarly in Section \ref{sec:schr} we unify optimal transport (OT) and VI under our framework by establishing a correspondence between expectation-maximisation (EM) and iterative proportional fitting (IPF)}. Going further, we show that this framework allows us to derive new methods:

% In Section \ref{sec:applications}, we utilise the flexibility in this formulation to recover recent approaches from the literature, allowing for slight variations and generalisations, and develop a novel score-based annealed flow technique which empirically outperforms state-of-the-art inference methods. 
\textcolor{magenta}{ In Section \ref{sec:annealing}, we derive a novel score-based annealed flow technique, the Controlled Monte Carlo Diffusion (CMCD) sampler, and show that it may be viewed as an infinitesimal analogue of the method from Section \ref{sec:schr}. Finally, we connect CMCD to the foundational identities by Crooks and Jarzynki in statistical physics, and show that it empirically outperforms a range of state-of-the-art inference methods in sampling and estimating normalizing constants (Section \ref{sec:experiments}).
}
% Motivated by this result and Section \ref{sec:VAE-SDE}, we suggest a regularised IPF-type objective circumventing the sequential nature of standard IPF.  

% Removed
% Finally, we explore our proposed regularised IPF objective across a series of toy generative modelling examples and a double-well-based rare event \citep{hartmann2013characterization} task.

% \begin{enumerate}
%     \item 
%     \item
%     Stochastic Differential Equations offer a flexible and practical family for parametrising transition densities such as $p^a, q^b$ as has been show cased in generative modelling through denoising diffusion models \citep{ho2020denoising,song2020score,de2021diffusion,huang2021variational} and sampling \citep{vargas2021bayesian, zhang2021path,vargas2023denoising}. Benefits include Nelson's relation, Girsanov theorem.
% \end{enumerate}

\section{From hierarchical VAEs to forward-reverse time diffusions}
\label{sec:VAE-SDE}
  
\subsection{Hierarchical VAEs \citep{rezende2014stochastic}}
\label{sec:hierVAE} 
A particularly flexible choice of implicitly parameterising $p^\theta(\vx|\vz)$ and $q^\phi(\vz|\vx)$ can be achieved via a hierarchical model with intermediate latents: We identify $\vx =: \vy_0$ and $\vz =: \vy_L$ with the `endpoints' of the layered augmentation  $(\vy_0, \vy_1 ,\ldots,\vy_{L-1},\vy_L) =: \vy_{0:L}$, and define\vspace{-0.27cm}
 \begin{align}
 \label{eq:full conditionals}
 q^\phi(\vy_L, \ldots, \vy_1 | \vy_0) := \prod_{l=1}^{L} q^{\phi_{l-1}}(\vy_l | \vy_{l-1}), \qquad p^\theta(\vy_0,\ldots, \vy_{L-1} | \vy_L) := \prod_{l=1}^{L} p^{\theta_l}(\vy_{l-1} | \vy_{l}),\vspace{-0.2cm}
 \end{align}
so that $q^\phi(\vz|\vx)$ and $p^\theta(\vx|\vz)$ can be obtained from (\ref{eq:full conditionals}) by marginalising over the auxiliary variables $\vy_1,\ldots,\vy_{L-1}$. Here, $\phi = (\phi_0,\ldots, \phi_{L-1})$ and $ \theta = (\theta_1,\ldots,\theta_{L})$ refer to sets of parameters to be specified in more detail below. Further introducing notation, we write $q^{\mu,\phi}(\vy_{0:L}) := q^\phi(\vy_{1:L}|\vy_0) \mu(\vy_0)$ as well as $p^{\nu,\theta}(\vy_{0:L}) := p^\theta(\vy_{0:L-1}|\vy_L) \nu(\vy_L)$ and think of those implied joint distributions as emanating from $\mu(\vx) = \mu(\vy_0)$ and $\nu(\vz) = \nu(\vy_L)$, respectively, moving `forwards' or `backwards' according to the specific choices for $\phi$ and $\theta$. In the regime when $L$ is large, the models in (\ref{eq:full conditionals}) are very expressive, even if the intermediate transition kernels are parameterised in a simple manner.
 We hence proceed by assuming Gaussian distributions,\vspace{-0.1cm}
\begin{align}
\label{eq:interm trans}
&q^{\phi_{l-1}}\!(\vy_l |\vy_{l-1}\!) \!= \!\gN(\vy_l | \vy_{l-1} \!\!+\!  \delta a^\phi_{l-1}(\vy_{l-1}),  \delta \sigma^2 \!I), \;\;p^{\theta_l}\!(\vy_{l-1} |\vy_{l}\!) \!= \!\gN(\vy_{l-1}| \vy_{l} \!\!+\! \delta b^\theta_l(\vy_{l}), \delta \sigma^2 \!I),
\end{align}
where $\sigma > 0$ controls the standard deviation, and $\delta>0$ is a small parameter, anticipating the limits $L \rightarrow \infty$, $\delta \rightarrow 0$ to be taken in Section 
\ref{sec:SDEs} below. The vector fields $a^\phi_l(\vy_l)$ and $b_l^\theta(\vy_l)$ introduced in (\ref{eq:interm trans}) should be thought of as parameterised by $\phi$ and $\theta$, but we will henceforth suppress this for brevity. 
% (in fact, those vector fields form a natural parameterisation in their own right). 

The models (\ref{eq:full conditionals})-(\ref{eq:interm trans}) could equivalently be defined via the Markov chains%\footnote{To be precise, the process defined in (\ref{eq:backward chain}) is a \emph{backward} Markov chain, see Appendix \ref{app:stochana}.}
\vspace{-0.2cm}
 \begin{subequations}
 \label{eq:Markov chains}
 \begin{align}
 \label{eq:forward chain}
\vy_{l+1} &= \vy_l + \delta a_l(\vy_l) + \sqrt{\delta} \sigma \xi_l, \qquad \vy_0 \sim \mu \implies \vy_{0:L} \sim q^{\mu,\phi}(\vy_{0:L}),
  \end{align} \vspace{-0.7cm}
   \begin{align}
 \label{eq:backward chain}
\vy_{l-1} = \vy_l + \delta b_l(\vy_l) + \sqrt{\delta} \sigma \xi_l, \qquad \vy_L \sim \nu \implies \vy_{0:L} \sim p^{\nu,\theta}(\vy_{0:L}),
 \end{align}
 \end{subequations}
where $(\xi_l)_{l=1}^L$ is an iid sequence of standard Gaussian random variables. As indicated, the forward process in (\ref{eq:forward chain}) may serve to define the distribution $q^{\mu,\phi}(\vy_{0:L})$, whilst the backward process in (\ref{eq:backward chain}) induces $p^{\nu,\theta}(\vy_{0:L})$. Note that the transition densities $p^\theta(\vx|\vz)$ and $q^\phi(\vz|\vx)$ obtained as the marginals of (\ref{eq:full conditionals}) will in general not be available in closed form. However, generalising slightly from Framework \ref{prop:framework}, we may set out to minimise the extended loss \vspace{-0.1cm}
\begin{equation}
\mathcal{L}^{\mathrm{ext}}_D(\phi,\theta) = D(q^{\mu,\phi}(\vy_{0:L}) || p^{\nu,\theta}(\vy_{0:L})),    
\end{equation}
where $D$ refers to a divergence on the `discrete path space' $\{\vy_{0:L}\}$. Clearly, $\mathcal{L}_D^\mathrm{ext}(\phi,\theta) = 0$ still implies (\ref{eq:coupling}), but is no longer equivalent. More specifically, in the case when $D = \KL$, the data processing inequality yields\vspace{-0.15cm}
 \begin{align}
 \label{eq:KL bound}
 \KL(q^{\mu,\phi}(\vy_{0:L}) || p^{\nu,\theta}(\vy_{0:L}))  \geq \KL \left( q^\phi (\vz|\vx)\mu(\vx)\big|\big| p^{\theta}(\vx | \vz)\nu(\vz) \right),
 \end{align}
so that $\mathcal{L}^{\mathrm{ext}}_{\KL}(\phi,\theta)$ provides an upper bound for $\mathcal{L}_{\KL}(\phi,\theta)$ as defined in (\ref{eq:abstract loss}). %\footnote{Due to the latent variables, we might have $\mathcal{L}^{\mathrm{ext}}_{\KL}(\phi,\theta) > 0$ even when the  marginal constraints in  (\ref{eq:coupling}) are satisfied.}
 	
\subsection{Diffusion models --  hierarchical VAEs in the infinite depth limit}
\label{sec:SDEs}
Here we take inspiration from Section \ref{sec:hierVAE} and \citet{tzen2019neural,li2020scalable,huang2021variational} to investigate the $L \rightarrow \infty$ limit, using stochastic differential equations (SDEs). To this end, we think of $l=0,\ldots,L$ as discrete instances in a fixed time interval $[0,T]$, equidistant with time step $\delta$, that is, we set $\delta = TL^{-1}$. The discrete paths $\vy_{0:L}$ give rise to continuous paths $(\fY_t)_{0 \le t \le T} \in C([0,T];\mathbb{R}^d)$ by setting $\fY_{\delta l} = \vy_l$ and linearly interpolating $\fY_{\delta l}$ and $\fY_{\delta(l+1)}$. To complete the set-up, we think of $a^\phi = (a^\phi_0,\ldots,a^\phi_{L-1})$ and $b^\theta = (b_1^\theta,\ldots, b^\theta_L)$ in (\ref{eq:interm trans}) as arising from time-dependent vector fields $a,b \in C^{\infty}([0,T]\times \mathbb{R}^d;\mathbb{R}^d)$ via $a^\phi_l(\vy_l) = a_{t\delta^{-1}}(\fY_{\delta l})$ and $b^\theta_l(\vy_l) = b_{t\delta^{-1}}(\fY_{\delta l})$. %\footnote{As in Section \ref{sec:hierVAE},  we shift attention towards the vector fields $a$ and $b$ as (infinite-dimensional) parameters, suppressing the dependence on $\phi$ and $\theta$.}

Taking the limit $\delta \rightarrow 0$, while keeping $T > 0$ fixed, transforms the Markov chains in (\ref{eq:Markov chains}) into continuous-time dynamics described by the SDEs \citep{tzen2019neural}
\vspace{-0.1cm}
\begin{subequations}
\label{eq:SDEs}
\begin{align}
\dd \fY_t &= a_t(\fY_t) \,\dd t + \sigma \ddf \fW_t, \quad \fY_0 \sim  \mu  \implies (\fY_t)_{0 \le t \le T} \sim \mathbb{Q}^{\mu,a} \equiv \fP^{\mu,a},
\label{eq:fwd_sde}
\end{align}\vspace{-0.5cm}
\begin{align}
\dd \bY_t &= b_t(\bY_t) \,\dd t + \sigma\ddb \bW_t, \quad \bY_T \sim  \nu \implies (\fY_t)_{0 \le t \le T} \sim \mathbb{P}^{\nu,b} \equiv \bP^{\nu,b}, 
\label{eq:back_sde}
\end{align}
\end{subequations}
where $\ddf$ and $\ddb$ denote forward and backward It{\^o} integration (see Appendix \ref{app:stochana} for more details and remarks on the notation), and $(\fW_t)_{0 \le t \le T}$ is a standard Brownian motion. In complete analogy with (\ref{eq:Markov chains}), the SDEs in (\ref{eq:SDEs}) induce the distributions $\mathbb{Q}^{\mu,a}$ and $\mathbb{P}^{\nu,b}$ on the path space $C([0,T];\mathbb{R}^d)$. Relating back to the discussion in the introduction, note that we maintain the relations $\fY_0 = \vx$ and $\fY_T = \vz$, and the transitions are parameterised by the vector fields $a,b$, in the sense that $p^\theta(\vx|\vz) = \P^{\nu,b^\theta}_0(\vx | \fY_T = \vz) = {\P}^{\delta_{\vz},b^\theta}_0(\vx)$ and $q^\phi(\vz|\vx) = \Q^{\mu,a^\phi}_T(\vz | \fY_0 = \vx) = {\Q}^{\delta_{\vx},a^\phi}_T(\vz)$.

The following well-known result \citep{anderson1982reverse,nelson1967dynamical} allows us to relate forward and backward path measures via a local (score-matching) condition for the reversal relation in (\ref{eq:reversal}). \footnote{The global condition $\ora{\P}^{\mu,a} \!= \! \ola{\P}^{\nu,b}$ is captured by the local condition (\ref{eq:score}) due to (\ref{eq:SDEs})'s Markovian nature.}
%\begin{proposition}[Nelson's relation]
%\label{prop:Nelson}
%For fixed $\mu$ and $a$, denote the time-marginals of the corresponding path measure by $\ora{\P}^{\mu,a}_t =:\rho^{\mu,a}_t$. Assuming that for all $t \in (0,T]$, $\rho_t$ admits a strictly positive and smooth density with respect to the Lebesgue measure, we have that $\ora{\P}^{\mu,a} = \ola{\P}^{\nu,b}$ if and only if  $\nu = \ora\P^{\mu,a}_T$ and\begin{align}
%\label{eq:score} \quad b_t = a_t - \sigma^2 \nabla \log \rho^{\mu,a}_t, \quad \text{for all  }  t \in (0,T].    
%\end{align}
%\end{proposition}

\begin{proposition}[Nelson's relation]
\label{prop:Nelson}
For $\mu$ and $a$ of sufficient regularity, denote the time-marginals of the corresponding path measure by $\ora{\P}^{\mu,a}_t =:\rho^{\mu,a}_t$. Then $\ora{\P}^{\mu,a} = \ola{\P}^{\nu,b}$ if and only if\vspace{-0.1cm}
\begin{align}
\label{eq:score}\nu = \ora\P^{\mu,a}_T \qquad \text{and}\qquad b_t = a_t - \sigma^2 \nabla \log \rho^{\mu,a}_t, \qquad \text{for all  }  t \in (0,T].  
\end{align}
\end{proposition}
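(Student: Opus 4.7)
The plan is to reduce the path-measure identity $\ora{\P}^{\mu,a} = \ola{\P}^{\nu,b}$ to a pair of local conditions on drifts and marginal densities, exploiting the fact that both SDEs in \eqref{eq:SDEs} are Markovian with common diffusion coefficient $\sigma$. Concretely, since two Markov path measures on $C([0,T];\mathbb{R}^d)$ with a shared diffusion coefficient agree if and only if their one-time marginals agree and their infinitesimal generators (interpreted forward, respectively backward, in time) coincide, it suffices to analyse the Fokker--Planck equations of the two flows of marginals.

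\textbf{Necessity ($\Rightarrow$).} Equating one-time marginals at $t = T$ immediately yields $\nu = \ora{\P}^{\mu,a}_T = \rho^{\mu,a}_T$. For the drift relation, I would write the Fokker--Planck equation satisfied by the forward marginal,
\begin{equation*}
\partial_t \rho_t^{\mu,a} \;=\; -\nabla\!\cdot\!(a_t\, \rho_t^{\mu,a}) + \tfrac{\sigma^{2}}{2}\Delta \rho_t^{\mu,a}, \qquad \rho_0^{\mu,a} = \mu,
\end{equation*}
and the analogous backward Kolmogorov equation for the marginal $\tilde\rho_t$ of $\ola{\P}^{\nu,b}$, obtained by time-reversing the backward SDE into a standard forward one via $Z_s := \bY_{T-s}$. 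Setting $\tilde\rho_t = \rho_t^{\mu,a}$ for every $t$, subtracting the two equations, and using the algebraic identity $\nabla\!\cdot\!(\rho\,\nabla\log\rho) = \Delta\rho$ to absorb the Laplacian into a transport term, one reads off the pointwise relation $b_t = a_t - \sigma^{2} \nabla\log\rho_t^{\mu,a}$.

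\textbf{Sufficiency ($\Leftarrow$).} Conversely, assume $\nu = \rho_T^{\mu,a}$ and $b_t = a_t - \sigma^{2} \nabla \log \rho_t^{\mu,a}$. Substituting these choices into the backward Kolmogorov equation and reversing the calculation above recovers exactly the forward Fokker--Planck equation with initial datum $\mu$; by uniqueness of solutions the two flows of marginals coincide. Equality of the full path measures then follows from the Markov property, since the one-time marginals together with the shared infinitesimal generator (in the appropriate forward / reversed sense prescribed by Anderson's time-reversal formula) uniquely determine a Markov law on $C([0,T];\mathbb{R}^d)$.

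\textbf{Main obstacle.} The subtle point is precisely this last upgrade from marginal and pointwise drift agreement to equality of path measures. It requires $\rho_t^{\mu,a}$ to be strictly positive and sufficiently smooth so that the score $\nabla\log\rho_t^{\mu,a}$ is well defined and integrable enough to invoke Girsanov / martingale-problem uniqueness, and it requires careful justification of the integration-by-parts used in the Fokker--Planck manipulations above. Under the ``sufficient regularity'' hypothesis of the proposition, these analytic gaps are bridged by the classical time-reversal results of \citet{anderson1982reverse} and Haussmann--Pardoux, which provide exactly the existence, smoothness, and reversal formulae needed to carry out the argument rigorously.
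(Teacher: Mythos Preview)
The paper does not supply its own proof of this proposition: it treats Nelson's relation as a well-known result, citing \citet{anderson1982reverse,nelson1967dynamical} in the main text and \citet{haussmann1985time,millet1989integration,follmer2006time,russo1996ito} in Appendix~\ref{app:stochana} for the rigorous justification under Assumption~\ref{ass:vector fields}. Your sketch is precisely the standard Fokker--Planck / time-reversal argument underlying those references, and the obstacle you flag (positivity and regularity of $\rho_t^{\mu,a}$ so that the score is well defined and the martingale-problem uniqueness applies) is exactly what those papers address. In that sense your proposal and the paper's deferred proof are aligned.

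One point in your necessity argument deserves tightening. From equality of the \emph{marginals} alone, subtracting the two Fokker--Planck equations only yields $\nabla\!\cdot\!\big((a_t - b_t - \sigma^2 \nabla\log\rho_t^{\mu,a})\,\rho_t^{\mu,a}\big)=0$, which does not force the bracketed vector field to vanish pointwise (divergence-free perturbations are allowed). The pointwise identity $b_t = a_t - \sigma^2\nabla\log\rho_t^{\mu,a}$ really comes from equality of the \emph{generators}, which is strictly stronger than equality of one-time marginals and which you correctly invoke in your opening reduction. So in the necessity direction you should argue directly that $\ora{\P}^{\mu,a}=\ola{\P}^{\nu,b}$ forces the forward generator of $\ora{\P}^{\mu,a}$ (drift $a_t$) to coincide with the forward generator of the time-reversed $\ola{\P}^{\nu,b}$ (drift $b_t + \sigma^2\nabla\log\rho_t$ by Anderson's formula), and then read off the drift relation from that---rather than from the Fokker--Planck subtraction. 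Your sufficiency direction is fine as written.
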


\begin{remark}
A similarly clean characterisation of equality between forward and backward path measures is not available for the discrete-time setting as presented in (\ref{eq:Markov chains}). In particular, Gaussianity of the intermediate transitions is not preserved under time-reversal. 
% and thus the parameterisations in (\ref{eq:interm trans}) are sub-optimal without the limit $\delta \rightarrow 0$. 
\end{remark} \vspace{-0.15cm}
A recurring theme in this work and related literature is the interplay between the score-matching condition in (\ref{eq:score}) and the global condition $D(\ora{\P}^{\mu,a}| \ola{\P}^{\nu,b}) = 0$, invoking Framework \ref{prop:framework}. To enable calculations involving the latter, we will rely on the following result:
\begin{proposition}[forward-backward Radon-Nikodym derivatives]
\label{prop:RND}
Let $\ora{\P}^{\Gamma_0,\gamma^+} = \ola{\P}^{\Gamma_T,\gamma^-}$ be a reference path measure (that is, $\Gamma_0$, $\Gamma_T$ and $\gamma^{\pm}$  define diffusions as in (\ref{eq:SDEs}) and are related as in Proposition \ref{prop:Nelson}), absolutely continuous with respect to both $\ora{\P}^{\mu,a}$ and $\ola{\P}^{\nu,b}$. Then, $\ora{\P}^{\mu,a}$-almost surely, the corresponding Radon-Nikodym derivative (RND) can be expressed as follows, \vspace{-0.13cm}
\begin{subequations}
\label{eq:RND fb}
\begin{align}
\label{eq:boundary terms}
\log \left(\frac{\d\fP^{\mu,a}}{\d\bP^{\nu,b}}\right)(\fY)  & = \log\left( \frac{\d \mu}{\d \Gamma_0}\right)(\fY_0) - \log \left(\frac{\d \nu}{\d \Gamma_T}\right)(\fY_T)
\\
\label{eq:forward path integral}
& + \tfrac{1}{\sigma^2}\!\int_0^T \!\!\left(a_t -  \gamma^+_t\right)(\fY_t) \!\cdot \!\left( \ora{\d} \fY_t - \tfrac{1}{2}\left( a_t + \gamma^+_t \right)(\fY_t) \, \d t\right)
\\
\label{eq:backward path integral}
&
-  \tfrac{1}{\sigma^2}\!\int_0^T\!\! \left(b_t - \gamma^-_t\right)(\fY_t) \!\cdot \!\left( \ola{\d} \fY_t - \tfrac{1}{2}\left( b_t + \gamma^-_t \right)(\fY_t) \, \d t\right). 
\end{align}
\end{subequations}
% \vspace{-0.7cm}
\end{proposition}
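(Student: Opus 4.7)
The strategy is to factor the Radon--Nikodym derivative through the reference measure and apply forward and backward Girsanov theorems to the two resulting pieces.

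\textbf{Factorisation through the reference.} Since $\fP^{\Gamma_0,\gamma^+} = \bP^{\Gamma_T,\gamma^-}$ by hypothesis and (under appropriate mutual absolute continuity, which we assume throughout) this common measure sits between $\fP^{\mu,a}$ and $\bP^{\nu,b}$, the chain rule gives
\begin{equation*}
\frac{\d \fP^{\mu,a}}{\d \bP^{\nu,b}} = \frac{\d \fP^{\mu,a}}{\d \fP^{\Gamma_0,\gamma^+}} \cdot \frac{\d \bP^{\Gamma_T,\gamma^-}}{\d \bP^{\nu,b}}.
\end{equation*}
After taking logarithms, the calculation therefore splits into a purely forward piece (changing drift $\gamma^+ \to a$ and initial law $\Gamma_0 \to \mu$) and a purely backward piece (changing drift $\gamma^- \to b$ and terminal law $\Gamma_T \to \nu$).

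\textbf{Forward and backward Girsanov.} In the first factor both measures are laws of forward It\^o SDEs, so disintegrating the initial law contributes the boundary term $\log(\d\mu/\d\Gamma_0)(\fY_0)$ and Girsanov's theorem contributes
\begin{equation*}
\tfrac{1}{\sigma^2}\!\int_0^T (a_t-\gamma^+_t)(\fY_t)\cdot(\ora{\d}\fY_t - \gamma^+_t(\fY_t)\,\d t) \; - \; \tfrac{1}{2\sigma^2}\!\int_0^T \|a_t-\gamma^+_t\|^2(\fY_t)\,\d t,
\end{equation*}
which collapses to the symmetric expression (\ref{eq:forward path integral}) by absorbing the $\gamma^+$ term of the stochastic integral into the quadratic correction. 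The second factor is handled analogously by the backward-It\^o Girsanov theorem (obtained by time-reversal $t\mapsto T-t$, reducing to the forward case), producing the boundary term $\log(\d\Gamma_T/\d\nu)(\fY_T) = -\log(\d\nu/\d\Gamma_T)(\fY_T)$ and the backward path integral (\ref{eq:backward path integral}) with the overall minus sign inherited from the direction of the change of measure. Summing the logarithms from the two factors reproduces (\ref{eq:boundary terms})--(\ref{eq:backward path integral}).

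\textbf{Main obstacle.} The delicate step is the backward Girsanov transformation: $\int_0^T (\cdot)\cdot\ola{\d}\fY_t$ is a martingale only with respect to the reverse filtration, and making the change of measure rigorous in that filtration (existence of the exponential martingale, verification of a Novikov-type integrability condition for $b-\gamma^-$) requires care. The cleanest route is to exploit the assumed equality $\fP^{\Gamma_0,\gamma^+}=\bP^{\Gamma_T,\gamma^-}$ together with Proposition \ref{prop:Nelson} to time-reverse the second factor into a standard forward Girsanov statement on $C([0,T];\mathbb{R}^d)$. The remaining regularity and integrability conditions on $\mu,\nu,\Gamma_0,\Gamma_T$ and on $a,b,\gamma^\pm$ are assumed to be sufficient for the stochastic integrals to be well defined and for the exponential local martingales above to be true martingales.
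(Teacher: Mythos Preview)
Your proposal is correct and follows essentially the same route as the paper: factor the RND through the reference using $\ora{\P}^{\Gamma_0,\gamma^+}=\ola{\P}^{\Gamma_T,\gamma^-}$, apply forward Girsanov to the first factor, and handle the second factor by a time-reversal argument that reduces backward Girsanov to the standard forward one. The paper makes the last step explicit by introducing the pathwise time-reversal operator $(\mathcal{R}\fY)_t:=\fY_{T-t}$ and the identity $\log\big(\d\ola{\P}^{\mu,a}/\d\ola{\P}^{\nu,b}\big)(\fY)=\log\big(\d\ora{\P}^{\mu,\mathcal{R}a}/\d\ora{\P}^{\nu,\mathcal{R}b}\big)(\mathcal{R}\fY)$, which is exactly the mechanism you gesture at in your ``Main obstacle'' paragraph.
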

\vspace{-0.45cm}
\begin{proof}
The proof relies on Girsanov's theorem \citep{ustunel2013transformation}, using the reference to relate the forward and backward processes.
For details, see Appendix \ref{app:proof}.    
\end{proof}
\vspace{-0.34cm}
\begin{remark}[Role of the reference process] According to Proposition \ref{prop:RND}, the Radon-Nikodym derivative between $\ora{\P}^{\mu,a}$ and $\ola{\P}^{\nu,b}$ can be decomposed into boundary terms (\ref{eq:boundary terms}), as well as forward and backward path integrals (\ref{eq:forward path integral}) and (\ref{eq:backward path integral}). Since the left-hand side of (\ref{eq:boundary terms}) does not depend on the reference $\Gamma_{0,T}$, $\gamma^{\pm}$, the expressions in (\ref{eq:RND fb}) are in principle equivalent for all choices of reference. The freedom in $\Gamma_{0,T}$ and $\gamma^{\pm}$ allows us to `reweight' between (\ref{eq:boundary terms}), (\ref{eq:forward path integral}) and (\ref{eq:backward path integral}), or even cancel terms.
%\footnote{Note that not all terms can be cancelled at the same time, since $\Gamma_{0,T}$, $\gamma_{\pm}$ cannot be chosen independently (they need to satisfy the conditions imposed by Proposition \ref{prop:Nelson}).} 
A canonical choice is the Lebesgue measure for $\Gamma_0$ and $\Gamma_T$, and $\gamma^{\pm} = 0$, see Appendix \ref{app:discussion}. 
% For constructing algorithms, we may however choose the reference diffusion in such a way that intractable quantities are suppressed, or so that the variance of associated estimators is reduced.
%We will come back to these points in Section \ref{sec:applications}.
\end{remark}
\vspace{-0.2cm}
\begin{remark}[Discretisation and conversion formulae]
\label{rem:conversion} The distinction between forward and backward integration in (\ref{eq:RND fb}) is related to the time points at which the integrands $\left(a_t - \gamma^+_t\right)(\fY_t)$ and $\left(b_t - \gamma^-_t\right)(\fY_t)$ would be evaluated in discrete-time approximations, e.g., \vspace{-0.1cm}
\begin{align}
\nonumber
\int_0^T \!\!a_t(\fY_t) \!\cdot\! \ora{\d} \fY_t \approx \sum_i a_{t_i}(\fY_{t_i}) \!\cdot \! (\fY_{t_{i+1}} - \fY_{t_i}), \;\;\int_0^T\!\! a_t(\fY_t) \!\cdot \!\ola{\d} \fY_t \!\approx\! \sum_i a_{t_{i+1}}(\fY_{t_{i+1}})\!\cdot\!  (\fY_{t_{i+1}} - \fY_{t_i}).
\nonumber
\end{align}\vspace{-0.15cm}
Alternatively, forward and backward integrals can be transformed into each other using the conversion \vspace{-0.15cm}
\begin{equation}
\label{eq:conversion}
\int_0^T \!\!a_t(\fY_t) \cdot \ora{\d} \fY_t = \int_0^T\!\! a_t(\fY_t) \cdot \ola{\d} \fY_t - \sigma^2\!\! \int_0^T \!\!(\nabla \cdot a_t)(\fY_t) \, \mathrm{d}t. 
\end{equation}
We refer to \citet{kunita2019stochastic} and Appendix \ref{app:stochana} for further details. In passing, we note that (\ref{eq:conversion}) allows us to eliminate the Hutchinson estimator \citep{hutchinson1989stochastic}from a variety of common  score-matching objectives, potentially reducing the variance of gradient estimators, see Appendix \ref{app:discussion}. 
\end{remark}
Framework \ref{prop:framework} can be translated into the setting of (\ref{eq:SDEs}), noting that (\ref{eq:KL bound}) continues to hold with appropriate modifications:\vspace{-0.1cm}
\begin{frmbis}{prop:framework}\label{frm2}
For a divergence $D$ on path space, minimise $D(\ora{\P}^{\mu,a}|\ola{\P}^{\nu,b})$. If $D(\ora{\P}^{\mu,a}|\ola{\P}^{\nu,b})=0$, then (\ref{eq:fwd_sde}) transports $\mu$ to $\nu$, and (\ref{eq:back_sde}) transports $\nu$ to $\mu$. \footnote{Concurrently \cite{richter2023improved} propose an akin framework to ours.}
\end{frmbis} \vspace{-0.1cm}
At optimality, $D(\ora{\P}^{\mu,a}|\ola{\P}^{\nu,b})=0$, Proposition \ref{prop:Nelson} allows us to obtain the scores associated to the learned diffusion via $\sigma^2 \nabla \log \rho_t^{\mu,a} = a_t - b_t$. In this way, Framework \ref{frm2} is closely connected to (and in some ways extends) score-matching ideas \citep{song2019generative,song2020score}. 
Indeed, recent approaches towards generative modeling and sampling can be recovered from Framework \ref{frm2} by making specific choices for the divergence $D$, the  parameterisations for $a$ and $b$, as well as for the reference diffusion $\ora{\P}^{\Gamma_0,\gamma^+} = \ola{\P}^{\Gamma_T,\gamma^-}$ in Proposition \ref{prop:RND}:

\textbf{Score-based generative modeling:} Letting $\mu$ be the target 
 and fixing the forward drift $a_t$, and, motivated by Proposition \ref{prop:Nelson}, parameterising the backward drift as $b_t = a_t - s_t$, we recover the SGM objectives in \citet{hyvarinen2005estimation, song2019generative, song2020score} from $D = \KL$; when $\ora{\P}^{\mu,a} = \ola{\P}^{\nu,b}$, the variable drift component $s_t$ will represent the score $\sigma^2 \nabla \log \rho^{\mu,a}_t$. Modifications can be obtained from the conversion formula (\ref{eq:conversion}), see Appendix \ref{sec:SGM}.
 
 %so that in the case when $\ora{\P}^{\mu,a} = \ola{\P}^{\nu,b}$, the variable drift component $s_t$ will represent the score $\sigma^2 \nabla \log \rho^{\mu,a}_t$. Then in the setting where $a_t$ is ergodic (e.g. OU process) we can recover the SGM approaches/objectives in \citep{hyvarinen2005estimation, song2019generative, song2020score}.
\textbf{Score-based sampling -- ergodic drift:} In this setting, $\nu$ becomes the target and we fix $b_t$ to be the drift of an ergodic (backward) process. Then choosing $\Gamma_{0,T} = \mu$, $\gamma^{\pm} = b$ allows us to recover the approaches in \citet{vargas2023denoising,berner2022optimal}. Possible generalisations based on Framework \ref{frm2} include IWAE-type objectives, see Appendix \ref{sec:sampling}. 

\textbf{Score-based sampling -- F\"ollmer drift:} Finally choosing $b_t(x) = x/t$ we recover F\"ollmer sampling \citep[Appendix \ref{sec:Foellmer};][]{follmer1984entropy, vargas2021bayesian, zhang2021path,huang2021schrodinger}.

%taken out: domain adaptation and filtering

%\textbf{Domain adaptation and stochastic filtering:} Setting $\ora{\P}^{\Gamma_0,\gamma^+}$ equal to the signal process in stochastic filtering \citep{reich2015probabilistic}, we can target the corresponding Bayesian posterior with log-likelihood $\log(\nu/\Gamma_T)$. Note that to obtain a tractable objective, we would need in a preliminary step to learn $\gamma^-$ using score matching techniques. 
% Similarly, we can design a reference process with $\Gamma_T = \nu$ in case the log-density of $\nu$ is not available, as in domain adaptation \citep{courty2017joint}.  
% \end{itemize}

%See Appendix \ref{app:calcs} for derivations and further details.

\section{Learning forward and backward transitions simultaneously}
\label{sec:applications}

Recall from the introduction that complete flexibility in $a$ and $b$ will render the minima of $D(\ora{\P}^{\mu,a}|\ola{\P}^{\nu,b})$ highly nonunique. Furthermore, the approaches surveyed at the end of the previous section circumvent this problem by fixing either $\ora{\P}^{\mu,a}$ or $\ola{\P}^{\nu,b}$. However, to leverage the full power of diffusion models, both $\ora{\P}^{\mu,a}$ or $\ola{\P}^{\nu,b}$ should be adapted to the problem at hand. In this section, we explore models of this kind, by imposing additional constraints on $a$ and $b$. We end this section by presenting our new CMCD sampler connecting it to prior methodology within VI \citep{doucet2022annealed,geffner2023langevin,papamakarios2017masked} and OT where we can view CMCD as an instance of entropy regularised OT in the infinite constraint limit \citep{bernton2019schr}.

% \textcolor{red}{slighly rewrite}  

\subsection{Connection to Entropic optimal transport}

\label{sec:schr}

One way of selecting a particular transition between $\mu$ and $\nu$ is by imposing an entropic penalty, encouraging the dynamics to stay close to a prescribed, oftentimes physically or biologically motivated, reference process. Using the notation employed in Framework \ref{prop:framework}, the \emph{static} Schr{\"o}dinger problem \citep{schrodinger1931uber,leonard2013survey} is given by \vspace{-0.15cm}
\begin{align}
\label{eq:static sch}
\pi^*(\vx,\vz) \in \argmin_{\pi(\vx,\vz)} \Big\{ \KL(\pi(\vx,\vz) || r(\vx,\vz)):  \pi_\vx(\vx) = \mu(\vx),  \pi_\vz(\vz) = \nu(\vz)  \Big\},  
\end{align}
where $r(\vx,\vz)$ is the Schr{\"o}dinger prior encoding additional domain-specific information. In an analogous way, we can introduce a regulariser to the path-space approach of Framework 1' to obtain the dynamic Schr\"odinger problem \vspace{-0.35cm}
% \vspace{-0.3cm}
% In the context of Framework \ref{frm2}, the \emph{dynamical} Schr{\"o}dinger problem can be presented in the form
\begin{equation}
\label{eq:dynamic sch}
\P^*\!\! \in\!\argmin_{\ora{\P}^{\mu,a}_{T} \;=\; \nu}\mathbb{E}_{\fY \sim \ora{\P}^{\mu,a}} \!\!\left[ \tfrac{1}{2 \sigma^2}\!\int_0^T \!\!\Vert a_t - f_t\Vert^2(\fY_t)  \,\d t \right]\!,     
\end{equation}
that is, the driving vector field $a_t$ determining $\P^*$ should be chosen in such a way that (i), the corresponding diffusion transitions from $\mu$ to $\nu$, and (ii), among such diffusions, the vector field $a_t$ remains close to the prescribed vector field $f_t$, in mean square sense. Under mild conditions, the solutions to (\ref{eq:static sch}) and (\ref{eq:dynamic sch}) exist and are unique. Further, the static and dynamic viewpoints are related through a mixture-of-bridges construction (assuming that the conditionals $r(\vz|\vx)$ correspond to the transitions induced by $f_t$), see \citep[Section 2]{leonard2013survey}.

% TODO: Removed
% To make this relationship plausible, we remark that the $\KL$-divergence in (\ref{eq:static sch}) can be translated into the time integral in (\ref{eq:dynamic sch}) using Girsanov's theorem. 

%  \footnote{This correspondence shows that the solution in (\ref{eq:static sch}) only depends on the conditionals $r(\vz|\vx)$, and not on the full distribution $r(\vx,\vz)$.}

\textbf{Iterative proportional fitting (IPF) and the EM algorithm.} 
\label{sec:IPF}
It is well known that approximate solutions for $\pi^*(\vx,\vz)$ and $\P^*$ can be obtained using alternating $\KL$-projections, keeping one of the marginals fixed in each iteration: 
Under mild conditions, the sequence defined by
\begin{subequations}
\label{eq:IPF}
\begin{align}
\label{eq:ipf1}
\pi^{2n+1}(\vx,\vz ) &=   \argmin_{\pi(\vx,\vz)} \left\{ \KL(\pi(\vx,\vz) || \pi^{2n}(\vx,\vz)): \,\, \pi_\vx (\vx)  = \mu(\vx) \right\},   
\\
\label{eq:ipf2}
\pi^{2n+2}(\vx,\vz)  & = \argmin_{\pi(\vx,\vz)} \left\{ \KL(\pi(\vx,\vz) || \pi^{2n+1}(\vx,\vz)): \,\, \pi_\vz(\vz)  = \nu(\vz) \right\}, \qquad n \ge 0, 
\end{align}
\end{subequations}
with initialisation $\pi^0(\vx,\vz) = r(\vx,\vz)$,  converges to $\pi^*(\vx,\vz)$ as $n \rightarrow \infty$ \citep{de2021diffusion}, and this procedure is commonly referred to as iterative proportional fitting (IPF) \citep{fortet1940resolution,kullback1968probability,ruschendorf1995convergence} or Sinkhorn updates \citep{cuturi2013sinkhorn}. IPF can straightforwardly be modified to the path space setting of (\ref{eq:dynamic sch}), and the resulting updates coincide with the F{\"o}llmer drift updates discussed in Section \ref{sec:Foellmer}, see \citep{vargas2021solving} and Appendix \ref{app:em_init}.

% In applications, however, IPF is faced with the following challenges: Firstly, the sequential nature of IPF, coupled with the need for each iteration to undergo comprehensive training as outlined in Section \ref{sec:Foellmer}, results in significant computational demands. Secondly, the reference distribution $r(\vx,\vz)$ (or the reference vector field $f_t$) enters the iterations in (\ref{eq:IPF}) only through the initialisation. As a consequence, numerical errors accumulate, and it is often observed that the Schr\"odinger prior is `forgotten' as IPF proceeds \citep{vargas2021solving,fernandes2021shooting,shi2023diffusion}. 

\textcolor{magenta}{To further demonstrate the coverage of our framework, we establish a connection between IPF and expectation-maximisation (EM) \citep{dempster1977maximum}, originally devised for finding maximum likelihood estimates in models with latent (or hidden) variables. According to \citet{neal1998view}, the EM-algorithm can be described in the setting from the introduction, and written in the form}
\begin{align}
\label{eq:EM} \theta_{n+1} = \argmin_\theta \mathcal{L}_{\KL}(\phi_{n},\theta), \qquad \phi_{n+1} = \argmin_\phi \mathcal{L}_{\KL} (\phi,\theta_{n+1}),  
\end{align}
with $\mathcal{L}_{\KL}$ defined as in (\ref{eq:abstract loss}). If the initialisations are matched appropriately, the following result establishes an exact correspondence between the IPF updates in (\ref{eq:IPF}) and the EM updates in (\ref{eq:EM}):
\begin{proposition}[EM $\iff$ IPF]
\label{prop:em ipf}
Assume that the transition densities $p^\theta(\vx|\vz)$ and  $q^\phi(\vz|\vx)$  are parameterised with perfect  flexibility,\footnote{In precise terms, we assume that for any transition densities $p(\vx|\vz)$ and $q(\vz|\vx)$, there exist $\theta_*$ and $\phi_*$ such that $p(\vx|\vz) = p^{\theta_*}(\vx|\vz)$ and $q(\vx|\vz) = q^{\phi_*}(\vx|\vz)$. %\textcolor{blue}{Whilst this assumption is typically violated in classical VAEs, SDE-based parameterisations (Section \ref{sec:SDEs}) satisfy it up to a negligible error \citep{liu2022let}}.
} and furthermore that the EM-scheme (\ref{eq:EM}) is initialised at $\phi_0$ in such a way that $q^{\phi_0}(\vz|\vx) = r(\vz|\vx)$. Then the IPF iterations in (\ref{eq:IPF}) agree with the EM iterations in (\ref{eq:EM}) for all $n \ge 1$, in the sense that 
\begin{align}
\label{eq:em ipf}
\pi^n(\vx,\vz) = q^{\phi_{(n-1)/2}}(\vz|\vx)\mu(\vx), \quad \text{for} \,\,  n \, \text{odd}, \quad 
\pi^n(\vx,\vz) = p^{\theta_{n/2}}(\vx|\vz)\nu(\vz), \quad \text{for} \,\, n \, \text{even}.  
\end{align}
\end{proposition}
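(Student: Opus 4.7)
The plan is to proceed by induction on $n$, with the key observation that under perfect flexibility each EM half-step is structurally identical to the corresponding I-projection appearing in IPF. First I would record two preliminary facts.
(i) \emph{I-projection with a fixed marginal}: for any reference joint $\pi^{\mathrm{ref}}(\vx,\vz)$, the solution of $\argmin\{\KL(\pi\|\pi^{\mathrm{ref}}):\pi_\vx = \mu\}$ is $\pi^{\mathrm{ref}}(\vz|\vx)\mu(\vx)$, and analogously for the $\vz$-marginal constraint. This is the standard chain-rule-for-KL computation and is precisely what defines IPF in closed form at each step. (ii) \emph{EM half-steps under perfect flexibility}: denoting $\pi_n^q(\vx,\vz):=q^{\phi_n}(\vz|\vx)\mu(\vx)$ and $\pi_n^p(\vx,\vz):=p^{\theta_n}(\vx|\vz)\nu(\vz)$, I would show
\[
p^{\theta_{n+1}}(\vx|\vz)=\pi_n^q(\vx|\vz),\qquad q^{\phi_{n+1}}(\vz|\vx)=\pi_{n+1}^p(\vz|\vx).
\]
The first identity is direct: as a function of $\theta$, $\mathcal{L}_{\KL}(\phi_n,\theta) = -\mathbb{E}_{\pi_n^q}[\log p^\theta(\vx|\vz)] + \text{const}$, so with perfect flexibility we pointwise equate $p^{\theta_{n+1}}(\vx|\vz)$ to the $\vx|\vz$-conditional of $\pi_n^q$. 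The second is less obvious because $q^\phi$ appears in both the argument and the measure of $\mathcal{L}_{\KL}(\cdot,\theta_{n+1})$, but decomposing the reference as $p^{\theta_{n+1}}(\vx|\vz)\nu(\vz) = p^{\theta_{n+1}}(\vz|\vx)\,\pi_{n+1}^p{}_\vx(\vx)$ and applying the chain rule yields
\[
\mathcal{L}_{\KL}(\phi,\theta_{n+1}) \;=\; \KL\bigl(\mu \,\big\|\, \pi_{n+1}^p{}_\vx\bigr) + \mathbb{E}_{\vx\sim\mu}\bigl[\KL\bigl(q^\phi(\vz|\vx)\,\big\|\,\pi_{n+1}^p(\vz|\vx)\bigr)\bigr],
\]
and pointwise minimization gives the stated identity.

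With these in hand, the base case $n=1$ is immediate: IPF produces $\pi^1(\vx,\vz)=\pi^0(\vz|\vx)\mu(\vx)=r(\vz|\vx)\mu(\vx)$, which by hypothesis equals $q^{\phi_0}(\vz|\vx)\mu(\vx)$, matching (\ref{eq:em ipf}). For the inductive step with $n$ odd, assume $\pi^n(\vx,\vz)=q^{\phi_{(n-1)/2}}(\vz|\vx)\mu(\vx)=\pi^q_{(n-1)/2}(\vx,\vz)$. Then IPF gives $\pi^{n+1}(\vx,\vz)=\pi^n(\vx|\vz)\nu(\vz)$, and the first EM identity above (applied at index $(n-1)/2$) says $\pi^n(\vx|\vz)=\pi^q_{(n-1)/2}(\vx|\vz)=p^{\theta_{(n+1)/2}}(\vx|\vz)$, delivering exactly the even-$n$ half of (\ref{eq:em ipf}). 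The case $n$ even is symmetric using the second EM identity.

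The main obstacle is the asymmetry in lemma (ii): the $\theta$-update is a plain conditional expectation problem, but the $\phi$-update requires moving $q^\phi$ out of the measure via the KL chain rule, and one must carefully verify that the resulting $\vz|\vx$ conditional of $\pi_{n+1}^p$ coincides with what IPF would produce at the next odd step — equivalently, that $\pi_{n+1}^p(\vz|\vx)=q^{\phi_n}(\vz|\vx)$ reconstructs the correct marginal $\vx$-density so that the index shift in (\ref{eq:em ipf}) is consistent. Beyond this bookkeeping, the argument is routine; the substantive content is simply that perfect flexibility collapses each EM variational problem into the closed-form I-projection defining IPF.
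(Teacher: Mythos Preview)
Your proposal is correct and follows essentially the same approach as the paper: induction on $n$, with each IPF half-bridge computed via the KL chain rule and matched to the corresponding EM half-step using perfect flexibility. The only difference is organizational---you frontload the two key identities as lemmas (i) and (ii), whereas the paper derives them inline within the odd/even inductive cases---and your closing worry about the ``obstacle'' is already resolved by your lemma (ii), since the identity you actually need at the even-to-odd step is $q^{\phi_{n/2}}(\vz|\vx)=\pi^p_{n/2}(\vz|\vx)$, which is exactly that lemma at the appropriate index.
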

% \begin{proof}
% See Appendix \ref{app:proof}.  \end{proof}
\textcolor{blue}{From the proof (Appenix \ref{app:proof}), it is clear that flexibility of parameterisations is crucial, and thus $\text{EM} \iff \text{IPF}$ fails for classical VAEs, but holds up to a negligle error for the SDE-parameterisations from Section \ref{sec:SDEs}, see also \citet{liu2022let}.}
Under this assumption, the key observation is that replacing forward-$\KL$ by reverse-$\KL$ in one or both of (\ref{eq:ipf1}) and (\ref{eq:ipf2}) does not -- in theory -- change the sequence of minimisers. %\textcolor{blue}{Note that the correspondence between the EM and IPF iterates can only be made true if the transition densities are sufficiently flexible, which is the case when the transition densities are parametrized with SDEs, if the transitions were overly restricted then the correspondence would no longer hold. }

\textcolor{magenta}{In practice favoring the EM objectives over IPF can offer an advantage as optimizing with respect to forward-$\KL$ and backward-$\KL$ encourages moment-matching and mode-seeking behavior, respectively, and so an alternating scheme as defined in (\ref{eq:EM}) might present a suitable compromise over optimizing a single direction of $\KL$'s, empirical exploration is left for future work.}

\textcolor{magenta}{Whilst EM and IPF might seem appealing for learning a sampler they both require sequentially solving a series of minimization problems, which we can only solve approximately; this is not only slow but also causes a sequential accumulation of errors arising from each iterate \citep{vargas2021solving,fernandes2021shooting}. In order to address both issues we will present a novel approach (CMCD) that similarly to IPF learns both the forward and backward processes whilst preserving the desired uniqueness property. However, in contrast to IPF it does so in an end-to-end fashion and performs updates simultaneously. As an alternative in Appendix \ref{app:HJB} we also discuss a regularised IPF objective and leave further empirical exploration for future work.
}

\subsection{Score-based annealing: the Controlled Monte Carlo Diffusion sampler}
\label{sec:annealing}

In this section, we fix a prescribed curve of distributions $(\pi_t)_{t \in [0,T]}$, whose scores $\nabla \log \pi_t$ (and unnormalised densities $\hat{\pi}_t$) are assumed to be available in tractable form; this is the scenario typically encountered in annealed importance sampling (IS) and related approaches towards computing posterior expectations \citep{neal2001annealed,reich2011dynamical,heng2015gibbs,heng2020controlled,arbel2021annealed,doucet2022score}. The \emph{Controlled Monte Carlo Diffusion sampler} (CMCD) learns the vector field $\nabla \phi_t$ in  \vspace{-0.2cm}
\begin{align}
\label{eq:controlled annealing} \d\fY_t  \!=\! \left(\sigma^2\nabla \log \pi_t(\fY_t)\! +\! \nabla \phi_t (\fY_t)\right) \d t + \sigma \!\sqrt{2 } \,\ora{\d} \fW_t, \qquad \fY_0 \!\sim \!\pi_0, 
\end{align}
so that (\ref{eq:controlled annealing}) produces the interpolation from the prior $\pi_0$ to the posterior $\pi_T$, i.e., $\ora{\P}^{\pi_0,\sigma^2\nabla \log \pi + \nabla \phi}_t = \pi_t$, for all $t \in [0,T]$. Note that if $\pi_t$ were constant in time ($\pi_t = \pi_0$), then $\phi = 0$ would reduce (\ref{eq:controlled annealing}) to equilibrium overdamped Langevin dynamics, preserving $\pi_0$. With $\pi_t$ varying in time, $\nabla \phi_t$ can be thought of as a control  enabling transitions between neighbouring densities $\pi_t$ and $\pi_{t + \delta t}$. 

To obtain $\nabla \phi_t$ we invoke Framework \ref{frm2}, but restrict $\ola{\P}^{\pi_T,b}$ to retain uniqueness. Proposition \ref{prop:Nelson} motivates the choice $b_t = (\sigma^2 \nabla \log \pi_t + \nabla \phi_t) - 2 \sigma^2 \nabla \log \pi_t = - \sigma^2 \nabla \log \pi_t + \nabla \phi_t$,\footnote{Note the additional factor of $2$ in Nelson's relation due to the noise scaling $\sigma\sqrt{2} \ora{\d}\fW_t$ in (\ref{eq:controlled annealing}).}
leading to\vspace{-0.1cm}
\begin{align}
\label{eq:annealed_div}
\!\!\!\!\mathcal{L}^{\mathrm{CMCD}}_D(\phi) := D \left( {\ora{\P}^{\pi_0,\sigma^2 \nabla \log \pi + \nabla \phi}}, {\ola{\P}^{\pi_T, - \sigma^2 \nabla \log \pi + \nabla \phi}} \right), 
\end{align}
which is valid for any choice of divergence $D$. The additional score constraint $b_t = a_t - 2\sigma^2 \nabla \log \pi_t$ restores uniqueness in Framework \ref{frm2} 
(see Appendix \ref{app:annealing} for a proof): 

\begin{algorithm}[t]  
\caption{ Controlled Monte Carlo Diffusions - Sampling and normalizing constant estimation} 
\label{alg:Fldvi2}
\begin{algorithmic}
\Require $\pi_0$, $\pi_T$, $\pi_t$, $\sigma$, $K$ step-sizes $\Delta t_{k}$,  network $f^{\phi}$ trained via minimising Eq \ref{eq:main objective}
\State $\fY_0 \sim \pi_0$
\State $ \ln \mW = -\log \pi_0(\fY_0)$
\For{$k=0$  to $K-1$}
\State $\fY_{t_{k+1}} \sim \gN\Big(\mY_{t_{k+1}} \big| \mY_{t_{k}} + (\sigma^2 \nabla \ln \pi_{t_k} +f^{\phi}_{t_k}) (\mY_{t_{k}})\Delta t_k, 2 \sigma^2 \Delta t_k \Big)$
\State $\ln \mW  = \ln \mW + \ln \frac{ \gN\Big(\mY_{t_{k}} \big| \mY_{t_{k+1}} + (\sigma^2\nabla  \ln \pi_{t_{k+1}} - f^{\phi}_{t_{k+1}})(\mY_{t_{k+1}})\Delta t_k, 2 \sigma^2 \Delta t_k \Big)}{\gN\Big(\mY_{t_{k+1}} \big| \mY_{t_{k}} + (\sigma^2 \nabla \ln \pi_{t_k} +f^{\phi}_{t_k}) (\mY_{t_{k}})\Delta t_k, 2 \sigma^2 \Delta t_k \Big)}$
\EndFor
\State $\ln \mW  = \ln \mW + \ln \pi_T(\fY_{T})$
\State \Return $( \text{Estimate of } \ln Z\approx \ln \mW ,  \text{Approximate sample } \fY_{T})$  
\end{algorithmic}
\end{algorithm}

\begin{proposition}[Existence and uniqueness]
\label{prop:annealing uniqueness}
Under mild conditions on $(\pi_t)_{t \in [0,T]}$, (\ref{eq:annealed_div}) admits a ($\pi_t$-a.e.) unique minimiser $\phi^*$, up to additive constants, satisfying $\mathcal{L}_{\mathrm{CMCD}}(\phi^*) = 0$. 
\end{proposition}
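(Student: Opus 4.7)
My plan is to reduce the condition $\mathcal{L}^{\mathrm{CMCD}}_D(\phi) = 0$ to a single linear elliptic equation for $\phi_t$ at each $t$, then invoke classical PDE existence together with a short integration-by-parts argument for uniqueness up to additive constants.

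First I would apply Proposition \ref{prop:Nelson} (adapted to the noise scaling $\sigma\sqrt{2}$ in (\ref{eq:controlled annealing}), so that Nelson's relation reads $b_t = a_t - 2\sigma^2 \nabla \log \rho^{\pi_0,a}_t$) to $a_t = \sigma^2 \nabla \log \pi_t + \nabla \phi_t$ and $b_t = -\sigma^2 \nabla \log \pi_t + \nabla \phi_t$. Subtracting yields $a_t - b_t = 2\sigma^2 \nabla \log \pi_t$, which must equal $2\sigma^2 \nabla \log \rho^{\pi_0,a}_t$; hence $\nabla \log \rho^{\pi_0,a}_t = \nabla \log \pi_t$, and since both sides are normalised probability densities, $\rho^{\pi_0,a}_t = \pi_t$ for all $t \in (0,T]$. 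The boundary condition $\rho^{\pi_0,a}_T = \pi_T$ is then automatic, and by Framework \ref{frm2} the zero-loss condition is equivalent to the forward SDE (\ref{eq:controlled annealing}) preserving the prescribed curve $(\pi_t)$.

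Next I would substitute $\rho_t = \pi_t$ into the Fokker-Planck equation associated to (\ref{eq:controlled annealing}). Using the identity $\pi_t \nabla \log \pi_t = \nabla \pi_t$, the score contribution combines with the Laplacian arising from the diffusion term and cancels exactly, leaving the continuity equation
\begin{equation}
\label{eq:cmcd-cont}
\partial_t \pi_t + \nabla \cdot (\pi_t \nabla \phi_t) = 0.
\end{equation}
Viewed for each fixed $t$, (\ref{eq:cmcd-cont}) is a weighted Poisson equation $\nabla \cdot (\pi_t \nabla \phi_t) = -\partial_t \pi_t$ whose unknown is $\phi_t$. For existence, under the assumed regularity on $(\pi_t)$ (smoothness, strict positivity, and $\partial_t \pi_t$ lying in an appropriate $\pi_t$-weighted space with mean zero), I would apply Lax-Milgram to the coercive bilinear form $(\phi,\psi) \mapsto \int \pi_t \nabla \phi \cdot \nabla \psi \, \mathrm{d}\vx$ on the weighted Sobolev quotient space modulo constants, with coercivity supplied by a weighted Poincar{\'e} inequality. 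For uniqueness up to constants, I would take two solutions $\phi^{(1)}_t,\phi^{(2)}_t$ of (\ref{eq:cmcd-cont}), test the difference $\psi_t = \phi^{(1)}_t - \phi^{(2)}_t$ against itself, and integrate by parts to obtain $\int \pi_t \|\nabla \psi_t\|^2 \, \mathrm{d}\vx = 0$, hence $\nabla \psi_t = 0$ $\pi_t$-a.e.

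The main obstacle is pinning down the precise ``mild conditions'' on $(\pi_t)$ and the associated functional-analytic setting on $\sR^d$: specifically, the class of curves for which the weighted Poincar{\'e} inequality holds (needed for coercivity) and sufficient decay at infinity (needed so that the integration-by-parts has no boundary contribution). Natural sufficient assumptions would be log-concavity of $\pi_t$ outside a compact set, or a uniform Bakry-{\'E}mery curvature-dimension bound, together with joint smoothness of $(t,\vx) \mapsto \pi_t(\vx)$ so that the pointwise-in-$t$ solutions $\phi_t$ assemble into a vector field $\nabla \phi_t$ regular enough for (\ref{eq:controlled annealing}) to define a well-posed SDE whose marginals actually coincide with $\pi_t$.
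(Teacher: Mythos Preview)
Your proposal is correct and follows essentially the same route as the paper: reduce $\mathcal{L}^{\mathrm{CMCD}}_D(\phi)=0$ via Nelson's relation to $\rho_t=\pi_t$, obtain the continuity equation $\partial_t\pi_t+\nabla\cdot(\pi_t\nabla\phi_t)=0$ from Fokker--Planck, then use coercivity of $\phi\mapsto-\nabla\cdot(\pi_t\nabla\phi)$ (granted by a Poincar\'e inequality for $\pi_t$) for existence and the integration-by-parts identity $\int\pi_t|\nabla(\phi-\widetilde\phi)|^2=0$ for uniqueness. The paper's ``mild conditions'' are exactly what you anticipated: smoothness of $\pi$, $\partial_t\pi_t\in L^2$, and a Poincar\'e inequality for each $\pi_t$.
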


Given the optimal vector field $\nabla \phi_t^*$,  we can produce samples from $\pi_T$ by simulating (\ref{eq:controlled annealing}). Following \citet{zhang2021path,vargas2023denoising},we can estimate $Z$ in $\pi_T = \hat{\pi}_T/Z_T$ unbiasedly via  \vspace{-0.1cm}
\begin{equation}
\label{eq:Z estimate}
Z = \E \left[  \frac{\d \ola{\P}^{\hat{\pi}_T, - \sigma^2 \nabla \log \pi + \nabla \phi}}{\d \ora{\P}^{\pi_0,\sigma^2 \nabla \log \pi + \nabla \phi}} \right] = \frac{\d \ola{\P}^{\hat{\pi}_T, - \sigma^2 \nabla \log \pi + \nabla \phi^*}}{\d \ora{\P}^{\pi_0,\sigma^2 \nabla \log \pi + \nabla \phi^*}}(\fY),  \end{equation} 
where the expectation is taken with respect to (\ref{eq:controlled annealing}), and is valid for any (possibly suboptimal) $\nabla \phi_t$. The right-hand side, on the other hand, shows that optimality  of $\nabla \phi^*_t$ yields a zero-variance estimator of $Z$, as the statement holds almost surely in $\fY$, without taking the expectation. To give a broader perspective, we give the following slight generalisation of a  well-known result from statistical physics:  
\begin{proposition}[Controlled Crooks' fluctuation theorem and Jarzynki's equality]\label{prop:crooks}
Following \citet{jarzynski1997nonequilibrium,chen2019stochastic},
define \emph{work} and \emph{free energy} as $\gW_T(\fY) := -\int_0^T \sigma^2\partial_t \ln \hat{\pi}_t(\fY_t)\,\d t$, $\gF_t := -{\sigma^2} \ln Z_t:=\sigma^2 \log(\hat{\pi_t}/\pi_t)$.
Then, we have the \emph{controlled Crooks' identity}\vspace{-0.1cm},$$\!\!\left(\frac{\d\fP^{\pi_0,\sigma^2 \nabla \ln {{\pi}} + \nabla \phi}}{\d\bP^{\pi_T, -\sigma^2 \nabla \ln  {\pi}+ \nabla \phi}}\!\right)\!\!(\fY)\! =\exp \left(-\tfrac{1}{\sigma^2}(\gF_T \!-\!\gF_0)+\! \tfrac{1}{\sigma^2} \gW_T(\fY)\! + \mathcal{C}^\phi_T(\fY)\right)\!,$$
where $\mathcal{C}^\phi_T(\fY) := -  \!\!\int_0^T \nabla \phi_t(\fY_t) \!\cdot\! \nabla \log {\pi}_t(\fY_t) \, \d t - \int_0^T \Delta \phi_t(\fY_t) \d t$.
By taking expectations and $\phi = 0$, this implies \emph{Jarzynski's equality} $\E_{\fP^{\pi_0,\sigma^2 \nabla \ln {{\pi}}}}[\exp(-\tfrac{1}{\sigma^2} \gW_T)] = Z_T/Z_0$.
%\begin{equation}
%\label{eq:jarzynski}
%\E_{\fP^{\pi_0,\sigma^2\nabla\ln \pi}} \left[e^{-\frac{1}{\sigma^2} \gW_T} \right] = \frac{\gZ_T}{\gZ_0}
%\end{equation}
\end{proposition}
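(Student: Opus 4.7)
The plan is to apply Proposition~\ref{prop:RND} with the canonical reference $\Gamma_0 = \Gamma_T = \mathrm{Leb}$ and $\gamma^\pm = 0$, adjusting for the noise scaling $\sigma\sqrt{2}$ in (\ref{eq:controlled annealing}): the effective diffusion constant is $2\sigma^2$, so every $\sigma^2$ appearing in Proposition~\ref{prop:RND} and in the conversion formula (\ref{eq:conversion}) is replaced by $2\sigma^2$. The boundary contribution (\ref{eq:boundary terms}) immediately becomes $\log \pi_0(\fY_0) - \log \pi_T(\fY_T)$. With $a_t = \sigma^2 \nabla \log \pi_t + \nabla \phi_t$ and $b_t = -\sigma^2 \nabla \log \pi_t + \nabla \phi_t$, one has $a_t - b_t = 2\sigma^2 \nabla \log \pi_t$, $a_t + b_t = 2\nabla \phi_t$, and thus $|a_t|^2 - |b_t|^2 = 4\sigma^2 \nabla \phi_t \cdot \nabla \log \pi_t$.

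\textbf{Collapsing the path integrals.} Converting the forward integral in (\ref{eq:forward path integral}) to a backward one via the adjusted (\ref{eq:conversion}), the sum of (\ref{eq:forward path integral}) and (\ref{eq:backward path integral}) simplifies to
\begin{equation*}
\int_0^T \nabla \log \pi_t(\fY_t) \cdot \ola{\d}\fY_t - \sigma^2 \int_0^T \Delta \log \pi_t(\fY_t)\,\d t - \int_0^T \Delta \phi_t(\fY_t)\,\d t - \int_0^T \nabla \phi_t(\fY_t) \cdot \nabla \log \pi_t(\fY_t)\,\d t,
\end{equation*}
using $\nabla \cdot a_t = \sigma^2 \Delta \log \pi_t + \Delta \phi_t$. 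To remove the leftover backward stochastic integral, apply It\^o's formula to $(t,x) \mapsto \log \pi_t(x)$ along the forward SDE (It\^o correction $\sigma^2 \Delta \log \pi_t$ for diffusion $2\sigma^2 I$) and then convert the resulting forward integral through (\ref{eq:conversion}), obtaining
\begin{equation*}
\int_0^T \nabla \log \pi_t(\fY_t) \cdot \ola{\d}\fY_t = \log \pi_T(\fY_T) - \log \pi_0(\fY_0) - \int_0^T \partial_t \log \pi_t(\fY_t)\,\d t + \sigma^2 \int_0^T \Delta \log \pi_t(\fY_t)\,\d t.
\end{equation*}

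\textbf{Cancellation and Crooks.} Substituting back, the boundary terms cancel against $\log\pi_0(\fY_0) - \log\pi_T(\fY_T)$, and the two $\sigma^2 \int \Delta \log \pi_t\,\d t$ contributions annihilate, leaving $-\int_0^T \partial_t \log \pi_t(\fY_t)\,\d t + \mathcal{C}^\phi_T(\fY)$. Splitting $\log \pi_t = \log \hat\pi_t - \log Z_t$ and using $\gF_t = -\sigma^2 \log Z_t$ rewrites the remaining integral as $\tfrac{1}{\sigma^2}\gW_T(\fY) + \log Z_T - \log Z_0 = \tfrac{1}{\sigma^2}\gW_T(\fY) - \tfrac{1}{\sigma^2}(\gF_T - \gF_0)$, producing the controlled Crooks identity. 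For Jarzynski, set $\phi = 0$ (so $\mathcal{C}^\phi_T \equiv 0$), invert the identity, and exploit $\E_\fP[\d\bP/\d\fP] = 1$ to obtain $\exp(\tfrac{1}{\sigma^2}(\gF_T - \gF_0))\,\E_\fP[\exp(-\tfrac{1}{\sigma^2}\gW_T)] = 1$, i.e., $\E_\fP[\exp(-\tfrac{1}{\sigma^2}\gW_T)] = Z_T/Z_0$.

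\textbf{Main obstacle.} The delicate step is bookkeeping of the factors of two generated by the noise scaling $\sigma\sqrt{2}$, which doubles the effective diffusion constant relative to both Proposition~\ref{prop:RND} and (\ref{eq:conversion}); miscounting any single factor spoils the precise cancellation of the Laplacian terms. Beyond that, the argument is mechanical: standard regularity of $(\pi_t)_{t \in [0,T]}$ and $\phi_t$ (smoothness and integrability of scores, Laplacians, and $\partial_t \log \pi_t$, plus the growth conditions needed for Novikov) suffices to justify Girsanov, It\^o's formula, and the forward-to-backward conversion.
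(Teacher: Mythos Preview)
Your proof is correct and follows essentially the same route as the paper: apply Proposition~\ref{prop:RND} with the Lebesgue reference and $\gamma^\pm=0$, simplify the path integrals using the conversion identities, and then eliminate the remaining stochastic integral via It\^o's formula applied to $t\mapsto\log\pi_t(\fY_t)$, with Jarzynski following from $\E[\d\bP/\d\fP]=1$. The only cosmetic difference is that the paper passes through the Stratonovich integral $\int_0^T \nabla\log\pi_t(\fY_t)\circ\d\fY_t$ (using (\ref{eq:Strato}) and the Stratonovich chain rule, which avoids the Laplacian bookkeeping), whereas you work with the backward It\^o integral and the standard forward It\^o formula plus conversion; the two are equivalent and your factor-of-two tracking is accurate.
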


The proof uses Proposition \ref{prop:RND}  to compute the RND ${\d\!\fP^{\pi_0,\sigma^2 \nabla \ln {\pi}+\nabla \phi}}\!/{\d\!\bP^{\pi_T, -\sigma^2 \nabla \ln {\pi}+\nabla \phi}}$, followed by applying It{\^o}'s formula to $t \!\mapsto\!\! \log \!\hat{\pi}_t(\!\fY_t)$, see Appendix \ref{app:fluc}. For $\phi = 0$, we recover Crooks fluctuation theorem \citep{crooks1999entropy}, but the additional control allows CMCD to suppress said fluctuations by adjusting the interaction term $\mathcal{C}^\phi_T(\fY)$. Indeed, prior works \citep{neal2001annealed,chopin2002sequential,vaikuntanathan2008escorted,hartmann2019jarzynski,zhang2021some} have used the Jarzynski equality to estimate $Z$ via importance sampling, but this approach might suffer from high variance, see \citep{del2006sequential}, \citep[Section 4.1.4]{stoltz2010free}. In contrast, the CMCD estimator version of (\ref{eq:Z estimate}) achieves zero variance if trained to optimality (see Appendix \ref{app:dnf} for a convenient discretised version). Finally, we would like to highlight that \cite{zhong2023time} concurrently proposes this generalisation of Crook's identity using different techniques in their sketch.

Our next result connects CMCD to Section \ref{sec:schr}, showing that minimising (\ref{eq:annealed_div}) can be viewed as jointly solving an infinite number of Schr{\"o}dinger problems on infinitesimal time intervals:
\begin{proposition}
[infinitesimal Schr\"odinger problems]
\label{prop:sch inf}
The minimiser $\phi^*$ can be characterised as follows: For $N \in \mathbb{N}$, partition the interval $[0,T]$ into $N$ subintervals of length $T/N$, and on each subinterval $[(i-1)T/N,iT/N]$, solve the Schr{\"o}dinger problem (\ref{eq:dynamic sch}) with marginals $\mu = \pi_{(i-1)T/N}$, $\nu = \pi_{iT/N}$ and prior drift $f_t = \nabla \log \pi_t$. Concatenate the solutions to obtain the drift $\nabla \phi^{(N)}$, defined on $[0,T]$. Then, $\nabla \phi^{(N)} \!\!\rightarrow \!\nabla \phi$ as $N \!\rightarrow \!\infty$ in the sense of $L^2([0,T] \times \mathbb{R}^d,\pi)$ (proof in Appendix \ref{app:infi schr})
\end{proposition}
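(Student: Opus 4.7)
The plan is to characterise $\nabla \phi^*$ via the Fokker--Planck (continuity) equation associated with (\ref{eq:controlled annealing}), identify $\nabla \phi^{(N)}$ as piecewise minimisers of a variational problem, and pass to the limit via a dynamic-OT / Benamou--Brenier argument.

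First, I would combine Proposition \ref{prop:annealing uniqueness} with the Fokker--Planck equation for (\ref{eq:controlled annealing}) to deduce that $\nabla \phi^*$ is the unique gradient field (up to additive constants in $\phi^*$) satisfying the continuity equation $\partial_t \pi_t + \nabla \cdot (\pi_t \nabla \phi_t^*) = 0$, since the score drift $\sigma^2 \nabla \log \pi_t$ and diffusion $\sigma^2 \Delta$ precisely cancel one another when $\rho_t = \pi_t$. This is the Benamou--Brenier velocity realising $(\pi_t)_{t \in [0,T]}$ deterministically; it is the unique gradient field minimising $\int_0^T \int \|v_t\|^2 \pi_t \,\mathrm{d}x\,\mathrm{d}t$ under that continuity constraint.

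Next, on each subinterval $I_i = [t_{i-1}, t_i]$ with $t_i = iT/N$, I would reparameterise the solution of (\ref{eq:dynamic sch}) (with prior drift $f_t$) by writing $a_t = f_t + \nabla \phi^{(N),i}_t$, using that optimal Schrödinger drifts differ from the prior by a gradient. The Schrödinger problem then becomes: minimise $\tfrac{1}{2\sigma^2}\mathbb{E}\!\int_{I_i} \|\nabla \phi^{(N),i}_t\|^2 \mathrm{d}t$ over gradient drifts whose Fokker--Planck marginals interpolate $\pi_{t_{i-1}}$ and $\pi_{t_i}$. Invoking the classical short-time / vanishing-entropy asymptotics for Schrödinger bridges (see \citet{leonard2013survey}), the minimiser on $I_i$ converges to the Benamou--Brenier geodesic connecting the two marginals, whose velocity agrees with $\nabla \phi^*_t$ on $I_i$ up to $O(T/N)$ in $L^2(\pi_t)$, using smoothness of $t \mapsto \pi_t$.

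Finally I would assemble the local estimates into global $L^2([0,T]\times\mathbb{R}^d,\pi)$ convergence. One route is direct: sum the local $L^2(\pi_t)$ errors over $i=1,\dots,N$ and apply dominated convergence. A cleaner alternative is a $\Gamma$-convergence argument: the functionals $J^{(N)}(\nabla\phi)=\sum_i \mathbb{E}\!\int_{I_i}\|\nabla\phi_t\|^2\mathrm{d}t$ restricted to drifts matching $\pi_{t_i}$ at the grid points $\Gamma$-converge to $J^{(\infty)}(\nabla\phi)=\mathbb{E}\!\int_0^T\|\nabla\phi_t\|^2\mathrm{d}t$ restricted to drifts realising the full marginal family $(\pi_t)_{t\in[0,T]}$, whose unique minimiser is $\nabla\phi^*$. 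A uniform bound on $J^{(N)}(\nabla\phi^{(N)})$ yields weak compactness in the appropriate $L^2$ space, and the $\Gamma$-convergence plus uniqueness upgrades subsequential to full-sequence convergence.

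The main obstacle is the short-interval asymptotics of the Schrödinger bridge: showing that on each $I_i$ the optimal Schrödinger velocity converges to the Benamou--Brenier velocity with a rate uniform in $i$ as $N\to\infty$. This step requires either a quantitative small-time expansion of the Schrödinger potentials (via the Hopf--Cole transform and a heat-kernel asymptotic) or a duality argument on entropy-regularised optimal transport with carefully tracked error terms; everything else in the proof is then essentially bookkeeping plus standard lower-semicontinuity in the space of drifts.
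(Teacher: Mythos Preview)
Your $\Gamma$-convergence alternative is essentially the paper's route, and it is the right one to lead with. The paper phrases it on path space rather than on drifts: it observes that both $\nabla\phi^{(N)}$ and $\nabla\phi^*$ minimise the \emph{same} functional $a\mapsto\KL(\ora{\P}^{\pi_0,\nabla\log\pi+a}\,|\,\ora{\P}^{\pi_0,\nabla\log\pi})$ (equivalently $\tfrac{1}{2\sigma^2}\E\int_0^T\|a_t\|^2\,\d t$ by Girsanov) over the nested constraint sets $\mathcal{M}^N(\pi)\supset\mathcal{M}^\infty(\pi)$, where $\mathcal{M}^N$ fixes marginals at the grid points and $\mathcal{M}^\infty$ fixes them for all $t$. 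The inclusion gives the uniform bound $\KL(\P^{(N)}|\P^{\mathrm{prior}})\le\KL(\P^*|\P^{\mathrm{prior}})$ for free; weak compactness of $\KL$ sublevel sets (Dupuis--Ellis) then yields a subsequential limit $\widetilde{\P}$, which is identified with $\P^*$ because it must match $\pi_t$ at every dyadic time and hence at all $t$. Drift convergence in $L^2$ follows from lower semicontinuity of $\KL$ plus Girsanov.

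Your primary route via short-time Schr{\"o}dinger asymptotics is therefore a detour. The ``main obstacle'' you flag---quantitative small-time expansions of Schr{\"o}dinger potentials with rates uniform in $i$---is simply not needed once you see the nested-constraint structure. Moreover, the intermediate comparison to the Benamou--Brenier \emph{geodesic} between $\pi_{t_{i-1}}$ and $\pi_{t_i}$ is a red herring: that geodesic is not the restriction of $(\pi_t)_{t\in I_i}$, so its velocity is not $\nabla\phi^*|_{I_i}$; you would need a second limit (geodesic velocity $\to$ tangent velocity as $|I_i|\to0$) coupled to the first, which is exactly the delicate double asymptotic you worry about. The paper's nested-constraint/compactness argument sidesteps all of this.
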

Note the similarity of this interpretation to the sequential Schr{\"o}dinger samplers of \citet{bernton2019schr}. Making specific choices for $D$ in \ref{eq:annealed_div}, we establish further connections to other methods:

1. For $D = \KL$, direct computation (see Appendix \ref{app:cmcd kl}) based on Proposition \ref{prop:RND} shows that\vspace{-0.15cm}
\begin{subequations}
\label{eq:main objective}
\begin{align}
\mathcal{L}^{\mathrm{CMCD}}_{\KL}\!(\phi)\! &=  \textcolor{magenta}{\E_{\fY \sim \ora{\P}^{\pi_0,\sigma^2 \nabla \log \pi + \nabla \phi}} \left[ \log \left( \frac{\d\ora{\P}^{\pi_0,\sigma^2 \nabla \log \pi + \nabla \phi}}{\d \ola{\P}^{\pi_T,-\sigma^2 \nabla \log \pi + \nabla \phi}} \right) (\fY) \right]} \nonumber\\ 
& = \!\E\! \left[ \sigma^2 \!\! \int_0^T \!\!\!\!|\nabla \log \pi_t(\fY_t)|^2  \d t \! + \! \tfrac{1}{\sigma \sqrt{2}} \! \int_0^T \!\!\!\! \left(\sigma^2 \nabla \log \pi_t \!- \!\nabla \phi_t \right)\!(\fY_t) \!\cdot \!\!\ola{\d} \!\!\fW_t - \log \hat{\pi}_T(\fY_T)\!\right] \! + \!\mathrm{const.}
\nonumber
\\
&\approx\! \!\E\!\!\left[\!\ln\! \frac{{\pi}_0(\mY_0)}{\hat{\pi}(\mY_T)} \!\!\prod_{k=0}^{K-1} \!\!\!\frac{\gN(\mY_{t_{k+1}} | \mY_{t_{k}} + (\sigma^2\nabla \ln \pi_{t_k} + \nabla \phi_{t_k}) (\mY_{t_{k}})\Delta t_k, 2 \sigma^2 \Delta t_k )}{\gN\!(\mY_{t_{k}} | \mY_{t_{k+1}}\!\! \!+ \!(\sigma^2 \nabla\! \ln \pi_{t_{k+1}}\! \!\!- \!\nabla\! \phi_{t_{k+1}})(\mY_{t_{k+1}})\Delta t_k, 2 \sigma^2 \Delta t_k )}\!\!\right]\!\!,
\tag{\ref*{eq:main objective}}
\end{align}
\end{subequations} the time-discretisation in the third line is derived in Appendix \ref{app:discr}.  \textcolor{magenta}{Our goal is then to numerically minimize $\mathcal{L}^{\mathrm{CMCD}}_{\KL}\!(\phi)\!$ wrt to $\phi$ (for a numerical minimisation scheme see Algorithm \ref{alg:Fldvi3})}. Note the first line in  (\ref{eq:main objective}) is akin to the optimal control type objectives of F{\"o}llmer and DDS samplers %\footnote{..., but note the backward integral whose expectation is not zero, c.f. Appendix \ref{app:stochana}.}
recalled at the end of Section \ref{sec:SDEs}, see also \citep{berner2022optimal}. Setting $\phi =0$ in the third line recovers Unadjusted Langevin Annealing (ULA), see, e.g., eq. (14) in \citep{thin2021monte} or eq. (21) in \citep{geffner2023langevin}; hence, we can view CMCD as a controlled version of ULA. Setting $\phi\!=\!\!0$ \emph{only in the numerator} is akin to Monte Carlo Diffusion (MCD), see Algorithm 1 and eq. (34) in \citep{doucet2022score}.   Finally, action matching \citep{neklyudov2022action} can be recovered from $D\!\!=\!\!\KL$ and Framework \ref{frm2} by choosing the reference $\ora{\P}^{\Gamma_0,\gamma^+}\!\!\!\! =\!\! \ola{\P}^{\Gamma_T,\gamma^-}$ in Proposition \ref{prop:RND} appropriately, see Appendix \ref{app:action matching}.

% Note that  controlling only the backwards process in this way is not sufficient to achieve the zero-variance property (\ref{eq:Z estimate}) of CMCD.

2. For the log-variance divergence $D_{\mathrm{Var}}(\Q,\P)\!=\!\!\Var\big(\! \ln \frac{\d \Q}{\d \P}\!\big)$, see Appendix \ref{app:vi}, we obtain \vspace{-0.15cm}
\begin{align}
%\label{eq:annealed variance}
\nonumber
% \mathcal{L}&(\phi) := D_{\mathrm{Var}} (\ora{\P}^{\pi_0,\sigma \nabla \log \pi + \nabla\phi} | \ola{\P}^{\pi_T,- \sigma \nabla \log \pi + \nabla \phi}) 
% \nonumber \\
&\mathcal{L}^{\mathrm{CMCD}}_{\mathrm{Var}}(\phi)\! = \Var \Bigg( \!\!\!\log \!\frac{\pi_T(\fY_T)} {\pi_0(\fY_0) }\!+ \!\int_0^T \!\!\!\!\!\Delta \phi_t(\fY_t)\, \d t \!\!-\!\! \sigma \sqrt{2 }\!\! \!\int_0^T\!\!\! \!\!\nabla \log \pi_t(\fY_t)\! \circ\! \d \fW_t\! - \sigma^2\! \!\int_0^T\!\!\! \!\!|\nabla \log \pi_t(\fY_t)|^2 \, \!\d t\!\!\Bigg) , 
\end{align}
see Appendix \ref{app:annealing}.
Here, $\circ \d \fW_t$ denotes Stratonovich integration, and the variance is taken with respect to samples from (\ref{eq:controlled annealing}). In the limit $\sigma \rightarrow 0$, log-Var CMCD enforces an integrated version of the instantaneous change of density formula $\partial_t \log \pi_t(\fY_t) = - \Delta \phi_t(\fY_t)$ for continuous-time normalising flows of the form $\dot{\fY_t} = \nabla \phi_t(\fY_t)$, \citep[Section 4]{papamakarios2021normalizing}.

\begin{figure}
    \centering
    \includegraphics[width=0.96\textwidth]{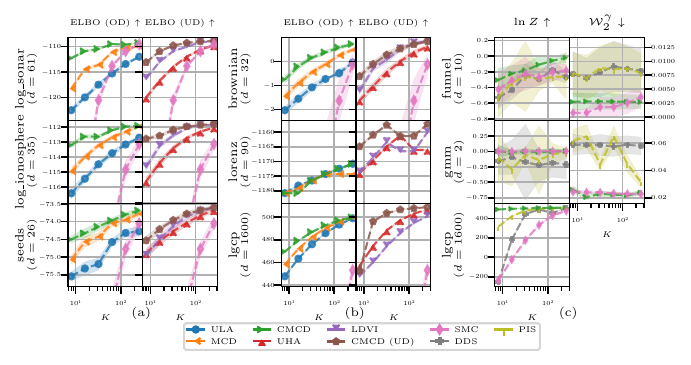}
    \vspace{-0.5cm}
    \caption{Figure panes a) and b) report ELBOs across methods and targets following the experimental setup in \cite{geffner2023langevin}, the (OD) and (UD) columns group over and under-damped methods seperately. Figure c) reports IS $\ln Z$ estimates and sample quality (where available) using eOT. Higher ELBO and $\ln Z$ denote better estimates, lower $\gW^\gamma_2$ signifies better sample quality.  }
    \label{fig:simple} \vspace{-0.5cm}
\end{figure}

% \textbf{Fluctuation theorem and the Jarzynski equality.} To shed further light on score-based annealed flows, we examine the RND involved in (\ref{eq:annealed variance}) when $\phi=0$.  Connecting (\ref{eq:Z estimate}) to well-established importance sampling (IS) estimators for partition functions \citep{jarzynski1997nonequilibrium,neal2001annealed,chopin2002sequential} we will motivate how our approach in Section \ref{sec:annealing} allows us to improve on said estimators. At the same time, we showcase an application of Proposition \ref{prop:RND}.

% To establish the aforementioned connection we present a novel derivation of the Jarzynski equality \citep{jarzynski1997nonequilibrium} and Crooks' fluctuation theorem \citep{crooks1999entropy} for diffusion processes which is more in line with Crooks' original proof. Our derivation appears to be more succinct and `pathwise', in contrast to previous approaches which combine the Feynman-Kac formula with the FPK equation \citep{chen2019stochastic,stoltz2010free,hartmann2019jarzynski,hummer2001free}.

% The proof directly uses Proposition \ref{prop:RND} and the conversion formula (\ref{eq:conversion}) to compute the RND ${\d\!\fP^{\pi_0,\sigma^2 \nabla \ln {\pi}}}\!/{\d\!\bP^{\pi_T, -\sigma^2 \nabla \ln {\pi}}}$, followed by applying It{\^o}'s formula to $t \!\!\mapsto\!\! \log \!\hat{\pi}_t(\!\fY_t)$, see Appendix \ref{app:fluc}.

% references for IS via time reversal \citep{zhang2021some, doucet2022score, hartmann2019jarzynski}
\begin{remark}[Further related work]
The task of learning the vector field $\nabla \phi_t$ so that (\ref{eq:controlled annealing}) reproduces $(\pi_t)_{t \in [0,T]}$ has been approached from various directions. \citet{reich2011dynamical,heng2015gibbs,reich2022data,vaikuntanathan2008escorted} explore methodologies that exploit the characterisation of $\nabla \phi_t$ in terms of the elliptic PDE (\ref{eq:continuity}) in Appendix \ref{app:drift prop}.
%, sequentially in time, that is, for fixed $t \in [0,T]$.  In contrast, the objective (\ref{eq:annealed variance}) can be optimised in one iteration to obtain
\citet{arbel2021annealed} propose to leverage normalising flows sequentially to minimise KL divergences between implied neighboring densities. In an appropriate limiting regime, they recover the SDE (\ref{eq:controlled annealing}), see Remark \ref{rem:AFT}. These approaches approximate $\nabla \phi_t$ sequentially in time, whilst CMCD learns $(\nabla \phi_t)_{t \in [0,T]}$ `all-at-once'. %More closely related to the objective in (\ref{eq:annealed variance}) are physics-inspired attempts to optimise the IS-variance in (\ref{eq:jarzynski}), see \citep{hartmann2019jarzynski,doucet2022score,zhang2021some}, however, these methods are based on the time reversal of ULA diffusions and unlike ours do not reproduce the interpolating distribution $(\pi_t)_{t \in [0,T]}$.
\end{remark}

% in contrast to \citet{arbel2021annealed}, our approach can be learned end to end rather than sequentially, furthermore, we do not have any of the architectural limitations imposed by the invertibility constraints on normalising flows which can hinder both expressiveness and computational efficiency.

% Finally the physics inspired methods \citep{zhang2021some, doucet2022score, hartmann2019jarzynski} seek to minimise the IS-variance via time reversal of ULA based diffusions, unlike our approach these do not induce an interpolating distirbution.

\section{Experiments}
\label{sec:experiments}

We now empirically demonstrate the performance of the proposed CMCD sampler (\ref{eq:main objective}) in both underdamped (detailed in Appendix \ref{app:annealing}) and overdamped (CMCD (OD)), Appendix \ref{sec:underdamped}) formulations on a series of sampling benchmarks. We first replicate the benchmarks from \citet{geffner2023langevin} on 6 standard target benchmark distributions.  Following the experimental methodology in \citet{geffner2023langevin}, we compare against two underdamped baselines, Unadjusted Langevin Annealing (ULA) \citep{wu2020stochastic, thin2021monte} and Monte Carlo Diffusion (MCD) \citep{doucet2022annealed}; and two overdamped baselines, Uncorrected Hamiltonian Annealing (UHA) \citep{geffner2021mcmc, ZhangAIS2021} and Langevin Diffusion Variational Inference (LDVI) \citep{geffner2023langevin}. Furthermore, we include comparisons of $\ln Z$ estimation on two datasets with known partition function, the \texttt{funnel} and \texttt{gmm}, and compare against baselines from \citet{vargas2023denoising}, PIS \citep{barr2020quantum, vargas2021bayesian, zhang2021path}, DDS \citep{vargas2023denoising}, and Sequential Monte Carlo Sampler (SMC) \citep{del2006sequential, zhou2016toward}.

We report the mean ELBO achieved by each method over 30 seeds of sampling, for Euler discretisation steps $K \in \{8, 16, 32, 64, 128, 256\}$, comparing the underdamped and overdamped baselines to their respective CMCD counterparts in Figure \ref{fig:simple}. We see that both overdamped and underdamped CMCD consistently outperform all baseline methods, especially at low $K$,  and in fact, across most targets overdamped CMCD outperforms the underdamped baselines. Figure \ref{fig:simple} also reports $\ln Z$ for two target distributions with known $Z$, comparing against PIS, DDS, and SMC. Again, CMCD recovers the log-partition more consistently, even at low $K$. Finally, as another measure of sample quality, we report the entropy-regularised OT distance ($\mathcal{W}_2^{\gamma}$) between obtained samples and samples from the target for \texttt{funnel} and \texttt{gmm}. Hyperparameter tuning and other experimental details can be found in Appendix \ref{app:cmcd_exps} and we provide a GitHub repository to reproduce our results \footnote{\url{ https://github.com/shreyaspadhy/CMCD}}.

% \ifarxiv

% \else 
% We also provide an anonymised code release\footnote{\url{https://anonymous.4open.science/r/CMCD-6BF6/}}
% \fi 

\section{Discussion}

Overall we have successfully introduced a novel variational framework bridging VI and transport using modern advances in diffusion models and processes. In particular, we have shown that many existing diffusion-based methods for generative modelling and sampling can be viewed as special instances of our proposed framework. Building on this, we have developed novel objectives for dynamic entropy regularised transports (based on a relationship between the EM and IPF algorithms) and annealed flows (with connections to fluctuation theorems due to Crook and Jarzynski, rooted in statistical physics). Finally, we have explored the CMCD inference scheme obtaining state-of-the-art results across a suite of challenging inference benchmarks. We believe this experimental success is partly due to our approach striking a balance between parametrising a flexible family of distributions whilst being constrained enough such that learning the sampler is not overly expensive \citep{tzen2019theoretical,vargas2023expressiveness}. Future directions can explore optimal schemes for the annealed flow $\pi_t$ \citep{goshtasbpour2023adaptive} and alternate divergences \citep{nusken2021solving, richter2023improved, midgley2022flow}.

% \textcolor{red}{can cite \cite{goshtasbpour2023adaptive} for future work}

\section{Acknowledgements}

We would like to thank Jeremy Heng, Arnaud Doucet, and Valentin De Bortoli for their insightful discussions and feedback which led to the improvement of this manuscript. We would also like to thank Michael S. Albergo and Eric Vanden-Eijnden for highlighting a typo in Proposition \ref{prop:crooks}.

\bibliography{references}
\bibliographystyle{iclr2024_conference}

\appendix
 
\section{Stochastic analysis for backward processes}
\label{app:stochana}

In this appendix, we briefly discuss background in stochastic analysis relevant to the SDEs in (\ref{eq:SDEs}), here repeated for convenience:
\begin{subequations}
 \label{eq:SDEs_app}
 \begin{align}
\dd \fY_t &= a_t(\fY_t) \,\dd t + \sigma \ddf \fW_t, \qquad \fY_0 \sim  \mu,
\label{eq:fwd_sde_app}
\\
\dd \bY_t &= b_t(\bY_t) \,\dd t + \sigma\ddb \bW_t, \qquad \bY_T \sim  \nu. 
\label{eq:back_sde_app}
\end{align}
\end{subequations}

Recall that the forward It\^o differential $\ora{\d}$ in (\ref{eq:fwd_sde_app}) is far more commonly denoted simply\footnote{...but in this paper we stick to the notation $\ora{\d}$ to emphasise the symmetry of the setting.} by $\d$, and theory for the  forward SDE (\ref{eq:fwd_sde_app}) is widely known \citep{karatzas1991brownian,oksendal2003stochastic}. In contrast, reverse-time SDEs of the form (\ref{eq:back_sde_app}) are less common and there are fewer textbook accounts of their interactions with forward SDEs. We highlight  \citet{kunita2019stochastic} for an in-depth treatment, and alert the reader to the fact that `backward stochastic differential equations' as discussed in \citet{zhang2017backward,chen2021likelihood}, for instance, are largely unrelated. We therefore refer to (\ref{eq:back_sde_app}) as a `reverse-time' SDE.

\begin{remark}[Notation]
We deliberately depart from some of the notation employed in the recent literature (see, for instance, \citet{huang2021variational,liu2022let}) by using $\fY_t$ in both (\ref{eq:fwd_sde_app}) and (\ref{eq:back_sde_app}), and not introducing an auxiliary process capturing the reverse-time dynamics. From a technical perspective, this is justified since $(\fY_t)_{0 \le t \le T}$ merely represents a generic element in path space, and full information is encoded in the path measures $\Q^{\mu,a} \equiv \ora{\P}^{\mu,a}$ and $\P^{\nu,b} \equiv \ola{\P}^{\nu,b}$. Importantly, placing (\ref{eq:fwd_sde_app}) and (\ref{eq:back_sde_app}) on an equal footing seems essential for a convenient formulation of Proposition \ref{prop:RND}. Slightly departing from the VAE-inspired notation from Section \ref{sec:hierVAE}, we equivalently refer to these path measures by $\ora{\P}^{\mu,a}$ and $\ola{\P}^{\nu,b}$, highlighting the symmetry of the setting in (\ref{eq:SDEs_app}).
\end{remark}

Intuitively, (\ref{eq:SDEs_app}) can be viewed as continuous time limits of the Markov chains defined in (\ref{eq:Markov chains}), or, in other words, the Markov chains (\ref{eq:Markov chains}) are the Euler-Maruyama discretisations for (\ref{eq:SDEs_app}), see \citet[Section 9.1]{kloeden1992stochastic}. Throughout, we impose the following:

\begin{assumption}[Smoothness and linear growth of vector fields]
\label{ass:vector fields}
All (time-dependent) vector fields in this paper belong to the set
\begin{subequations}
\begin{align*}
\mathcal{U} := \Bigg\{ a \in C^{\infty}([0,T] \times& \mathbb{R}^d;\mathbb{R}^d): \quad \text{there exists a constant } \, C>0 \,\, 
\\
& \text{ such that } \Vert a_t(\vx) - a_t(\vy)\Vert \le C \Vert \vx - \vy \Vert, \, \text{for all } t \in [0,T], \,\, \vx,\vy \in \mathbb{R}^d \Bigg\}.
\end{align*}
\end{subequations}
\end{assumption}

The preceding assumption guarantees existence and uniqueness for (\ref{eq:fwd_sde_app}) and (\ref{eq:back_sde_app}), and it allows us to use Girsanov's theorem in the proof of Proposition \ref{prop:RND} (Novikov's condition can be shown to be satisfied, see \citet[Section 8.6]{oksendal2003stochastic}). Furthermore, Assumption \ref{ass:vector fields} is sufficient to conclude Nelson's relation (Proposition \ref{prop:Nelson}), see \citet{haussmann1985time,millet1989integration,follmer2006time} and the discussion in \citet{russo1996ito}. Having said all that, it is possible to substantially weaken Assumption \ref{ass:vector fields} with more technical effort. Moreover, we can replace the constant $\sigma>0$ by $\sigma:[0,T]\times \mathbb{R}^d \rightarrow \mathbb{R}^{d \times d}$ throughout, assuming sufficient regularity, growth and invertibility properties, and amending the formulas accordingly.

The precise meaning of (\ref{eq:SDEs_app}) is given by the  integrated formulations
\begin{subequations}
\begin{align}
\fY_t & = \fY_0 + \int_0^t a_s(\fY_s) \, \d s + \int_0^t \sigma \ora{\d} \fW_s, \quad && \fY_0 \sim \mu, 
\\
\fY_t & = \fY_T - \int_t^T b_s(\fY_s) \, \d s - \int_t^T \sigma \ola{\d} \fW_s, \quad && \fY_T \sim \nu,
\end{align}
\end{subequations}
where the forward and backward integrals need defining. Roughly speaking, we have 
\begin{subequations}
\label{eq:fb integrals def}
\begin{align}
\int_{t_0}^{t_1} \fX_s \cdot \ora{\d} \fZ_s & = \lim_{`\Delta t \rightarrow 0'} \sum_{i} \fX_{t_i} \cdot (\fZ_{t_{i+1}} - \fZ_{t_{i}}),
\label{eq:fwd}
\\
\quad \int_{t_0}^{t_1} \fX_s \cdot \ola{\d} \fZ_s & = \lim_{`\Delta t \rightarrow 0'} \sum_{i} \fX_{t_{i+1}} \cdot (\fZ_{t_{i+1}} - \fZ_{t_{i}}),
\label{eq:bwd}
\end{align}
\end{subequations}
see Remark \ref{rem:conversion},
for `appropriate' processes $(\fX_t)_{0 \le t \le T}$ and $(\fZ_t)_{0 \le t \le T}$, and where the limit $\Delta t \rightarrow 0$ of vanishing step sizes needs careful analysis (see Remark \ref{rem:limits} below). The most salient difference between (\ref{eq:fwd_sde_app}) and (\ref{eq:back_sde_app}) is the fact that $\fX_{t_i}$ is replaced by $\fX_{t_{i+1}}$ in (\ref{eq:bwd}). 
\begin{remark}[Convergence of the limits in (\ref{eq:fb integrals def})]
\label{rem:limits}
If we only assume that $\fX,\fZ \in C([t_0,t_1];\mathbb{R}^d)$, possibly pathwise, that is, deterministically, then the limits in (\ref{eq:fb integrals def}) might not exist, or when they do, their values might depends on the specific sequence of mesh refinements. The  following approaches are available to make the definitions (\ref{eq:fb integrals def}) rigorous:
\begin{enumerate}
\item  It\^o calculus (see, for example, \citet[Chapter 9]{revuz2013continuous}) uses adaptedness and semimartingale properties for the forward integral in (\ref{eq:fwd}), but note that the definition is not pathwise (that is, the limit (\ref{eq:fwd}) is defined up to a set of measure zero). For the backward integrals in (\ref{eq:bwd}) and, importantly for us, in (\ref{eq:RN backward}), it can then be shown that the relevant processes are (continuous) reverse-time martingales (see \citet{kunita2019stochastic} for a discussion of the corresponding filtrations). The latter property is guaranteed under Assumption \ref{ass:vector fields}, see the discussion around Theorem 2.3 in \citet{russo1996ito}.
\item 
F{\"o}llmer's `It\^o calculus without probabilities' \citep{follmer2006calcul} is convenient, since it allows to us to perform calculations using (\ref{eq:SDEs_app}) and Proposition \ref{prop:RND} without introducing filtrations and related stochastic machinery. The caveat is that the results may in principle depend on the sequence of mesh refinements, but under Assumption \ref{ass:vector fields}, those differences only appear on a set of measure zero, see \citet{russo1995generalized,follmer2000ito}.
\item Similarly, the integrals in (\ref{eq:fb integrals def}) can be defined in a pathwise fashion using rough path techniques, see \citet[Section 5.4]{friz2020course}.
\end{enumerate}
\end{remark}

For the current paper, the following conversion formulas are crucial,
\begin{subequations}
\label{eq:conversion app}
\begin{align}
\label{eq:quad var}
\int_{0}^{t} \fX_s \cdot 
 \ola{\d} \fZ_s & - \int_0^t \fX_s \cdot \ora{\d} \fZ_s =  \langle \fX,\fZ\rangle_t,
\\
\label{eq:Strato}
\int_{0}^{t} \fX_s \cdot 
 \ola{\d} \fZ_s  & + \int_0^t \fX_s \cdot \ora{\d} \fZ_s =  2 \int_0^t \fX_s \circ \fZ_s,
 \end{align}
\end{subequations}
where $\langle \fX,\fZ\rangle$ is the quadratic variation process (if defined, see \citet{russo1995generalized}; see in particular equations (3) and (4) therein), and $\circ$ denotes Stratonovich integration. For solutions to (\ref{eq:SDEs_app}), we obtain (\ref{eq:conversion}) from (\ref{eq:quad var}). In particular, we can often trade backward integrals for divergence terms (see Appendix \ref{app:discussion}), using the (backward) martingale properties
\begin{subequations}
\begin{align}
\label{eq:martingale}
\mathbb{E} \left[ \int_0^t f_t(\fY_t) \cdot \ora{\d} \fW_t \right] = 0, \quad \text{if } (\fY_t)_{0 \le t \le T} \,\, \text{solves (}\ref{eq:fwd_sde_app}\text{)},
\\
\mathbb{E} \left[ \int_t^T f_t(\fY_t) \cdot \ola{\d} \fW_t \right] = 0, \quad \text{if } (\fY_t)_{0 \le t \le T} \,\, \text{solves (}\ref{eq:back_sde_app}\text{)}.
\end{align}
\end{subequations}

% For later versions
% To define $\fW$, we fix a standard Brownian motion $(\fW_t)_{t \in [0,T]}$ on a probability space $(\Omega,\mathcal{F},\mathbb{P})$ and consider the filtration $\mathcal{F}$ for the forward integral  $(\calF^{-}_i)_{i\in T}$, and the backwards filtration $\calF^{-}_t \!\!\subseteq \calF^{-}_s, \; s \!\leq\! t$ as per \cite{kunita2019stochastic}.

%\subsection{Novikov's Condition}

%Throughout, we assume that all vector fields enjoy sufficient boundedness and regularity properties such that all SDEs admit unique strong solutions, and so that Girsanov's theorem can be applied. For this, smoothness and linear bounds of the form $|a_t(x)| \le C(1 + |x|)$ are sufficient, for a time-independent constant $C$.
%In particular, these conditions imply Novikov's condition,
%\begin{align}
%  \E \left[ \exp\left(\frac{1}{2\sigma^2}\int_0^T ||\nabla a_t ||^2 \dd t \right)\right] < \infty.
%\end{align}

\section{Variational inference and divergences}
\label{app:vi}

Various concepts well-known in the variational inference community have direct counterparts in the diffusion setting. In this appendix we review a few that are directly relevant to this paper.

\noindent \textbf{Maximum likelihood.} Framework \ref{prop:framework} with $D=\KL$ leads via direct calculations to
\begin{align}
\mathcal{L}_{\KL}(\phi,\theta) = - &\mathbb{E}_{\vx \sim \mu(\vx) }  \overbrace{\left[\int \ln \frac{p^{\theta}(\vx | \vz)\nu(\vz)}{q^\phi (\vz|\vx) }q^\phi(\mathrm{d}\vz|\vx)\right]}^{=\mathrm{ELBO}_x(\phi,\theta)} + \int \log \mu(\vx) \mu(\mathrm{d}\vx),
 \label{eq:ELBO}
 \end{align}
so that maximising $\mathbb{E}_{\vx \sim \mu(\vx) }[\mathrm{ELBO}_x(\phi,\theta)]$ is equivalent to minimising $\mathcal{L}_{\KL}(\phi,\theta)$.

However, the traditional  approach \citep{blei2017variational,kingma2019introduction} towards the \emph{evidence lower bound} (ELBO) in (\ref{eq:ELBO}) is via maximum likelihood in latent variable models. Using the notation and set-up from the introduction, one can show using Jensen's inequality (or dual representations of the KL divergence), that 
\begin{equation}
\label{eq:likelihood bound}
\log \left( \int p_\theta(\vx,\vz) \, \d \vz  \right) = \log p_{\theta}(\vx) \ge \mathrm{ELBO}_{\vx}(\phi,\theta),
\end{equation}
with equality if and only if $q_\phi(\vz|\vx) = p_\theta(\vz|\vx)$. The bound in (\ref{eq:likelihood bound}) motivates maximising the (tractable) right-hand side, performing model selection (according to Bayesian evidence) and posterior approximation (in terms of the variational family $q_\phi(\vz|\vx)$) at the same time. The calculation in (\ref{eq:ELBO}) shows that this objective can equivalently be derived from Framework \ref{prop:framework} and connected to the KL divergence between the joint distributions $q_\phi(\vx,\vz)$ and $p_\theta(\vx,\vz)$. 

\noindent \textbf{Reparameterisation trick \citep{kingma2013auto,rezende2014stochastic}.} For optimising $\mathrm{ELBO}_{\vx}(\phi,\theta)$, it is crucial to select efficient low-variance gradient estimators. In this context, it has been observed that reparameterising $\vz \sim q_\phi(\vz|\vx)$ in the form $\vz = g(\epsilon,\phi,\vx)$, see \citet[Section 2.4.1]{kingma2019introduction}, substantially stabilises the training procedure. Here, $\epsilon$ is an auxiliary random variable with tractable `base distribution' that is independent of $\phi$ and $\vx$, and $g$ is a deterministic function (transforming $\epsilon$ into $\vz$), parameterised by $\phi$ and $\vx$. We would like to point out that many (although not all, see below) objectives in diffusion modelling are already reparameterised, since the SDEs (\ref{eq:SDEs}) transform the `auxiliary' variables $(\fW_t)_{0 \le t \le T}$ into $(\fY_t)_{0 \le t \le T}$. With this viewpoint, the vector field $a_t$ corresponds to the parameter $\phi$, $(\fW_t)_{0 \le t \le T}$ corresponds to $\epsilon$, and $g$ corresponds to the solution map associated to the SDE (\ref{eq:fwd_sde}), sometimes referred to as the It\^o  map. In this sense, the objectives (\ref{eq:sch objective}), (\ref{eq:Foellmer}) and (\ref{eq:DDS}) are reparameterised, but $\mathcal{L}^{\mathrm{CMCD}}_{\mathrm{Var}}$ from Section \ref{sec:annealing} is not if the gradients are detached as in \citep{nusken2021solving,richter2020vargrad,richter2023improved}. We mention in passing that \emph{sticking the landing} \citep{roeder2017sticking} offers a further variance reduction close to optimality, and that the same method can be employed for diffusion objectives, see \citet{vargas2021bayesian,xu2021infinitely}. 

\noindent \textbf{Reinforce gradient estimators.} As an alternative to the KL-divergence, \citet{nusken2021solving} investigated the family of \emph{log-variance divergences} 
\begin{equation}
\label{eq:vargrad}
D^u_{\mathrm{Var}}(q||p) = \Var_{\vx \sim u}\left( \log \frac{\d q}{\d p} (\vx)\right),
\end{equation}
parameterised by an auxiliary distribution $u$, in order to connect variational inference to backward stochastic differential equations \citep{zhang2017backward}. The fact that gradients of (\ref{eq:vargrad}) do not have to be taken with respect to $\vx$ (see Remark \citep{nusken2021solving,richter2020vargrad}) reduces the computational cost and provides additional flexibility in the choice of $u$, but the gradient estimates potentially suffer from higher variance since the reparameterisation trick is not available. The latter drawback is alleviated somewhat by the fact that particular choices of $u$ can be linked to control variate enhanced reinforce gradient estimators \citep{richter2020vargrad} that are particularly useful when reparameterisation is not available (such as in discrete models). We note that the same divergence has also been used as a variational inference objective in \citet{el2012bayesian}.

\noindent \textbf{Importance weighted autoencoders (IWAE).} \citet{burda2015importance} have developed a multi-sample version of $\mathrm{ELBO}_{\vx}(\phi,\theta)$ that achieves a tighter lower bound on the marginal log-likelihood in (\ref{eq:likelihood bound}). To develop similar objectives in a diffusion setting, we observe that for each $K \ge 1$,
\begin{equation}
\label{eq:IWAE KL}
\KL^{(K)} (q||p) = \mathbb{E}_{x_1,\ldots, x_K \overset{iid}{\sim} q} \left[ \log \left( \frac{1}{K} \sum_{i=1}^K \frac{\d q}{\d p}(x_i)\right) \right] 
\end{equation}
defines a generalised KL divergence\footnote{Indeed, by Jensen's inequality, we have that $\KL^{(K+1)}(q||p) \ge \KL^{(K)}(q||p)$, so that in particular $q \neq p$ implies $\KL^{(K)}(q||p) > 0$.} that reproduces the IWAE lower bound as per Framework \ref{prop:framework}, in the sense of equation (\ref{eq:ELBO}). To the best of our knowledge, the precise formulation in (\ref{eq:IWAE KL}) is new, but similar to the previous works \citet{hernandez2016black,li2016renyi,daudel2022alpha}. We exhibit an example of (\ref{eq:IWAE KL}) applied in a diffusion context in Section \ref{sec:sampling}, see Remark \ref{rem:IWAE sampling}.

\section{Connections to previous work}
\label{app:calcs}

\subsection{Discussion of equivalent expressions for $\KL(\ora{\P}^{\mu,a}||\ola{\P}^{\nu,b})$}
\label{app:discussion}
Notice that we can realise samples from  $\bP^{\nu, b}$ both via the reverse-time SDE in (\ref{eq:back_sde}) or via its time reversal given by the following forward SDE \citep{nelson1967dynamical,anderson1982reverse,haussmann1985time}:
\begin{align}
\dd \widehat{\fY}_t &= \left(b_{T-t}(\widehat{\fY}_t) -\sigma^2 \nabla \ln \ola{\rho}^{\nu,b}_{T-t}(\widehat{\fY}_t) \right)\dd t + \sigma \,\ora{\d} \fW_t, \;\;\ \quad \widehat{\fY}_0 \sim \nu,
\end{align}
using $\widehat{\fY}_t := \widehat{\fY}_{T-t}$.
This allows us to obtain an expression for $\KL (\fP^{\mu, a} | \bP^{\nu, b})$ via Girsanov's theorem:
\begin{align}
\KL (\fP^{\mu, a} || \bP^{\nu, b}) =  \KL(\ola{\rho}_0^{\nu,b}|| \nu) + \E\left[\tfrac{1}{2\sigma^2}\int_0^T \Big|\Big| a_t(\fY_t)  - \left(b_{t}(\fY_t) -\sigma^2 \nabla \ln \ola{\rho}^{\nu,b}_{t}(\fY_t) \right) \Big|\Big|^2 \dd t\right].
 \end{align}
However there are several terms here that we cannot estimate or realise in a tractable manner, one being the score $\nabla \ln \ola{\rho}^{\nu,b}_{t}$ and the other being sampling from the distribution $\ola{\rho}_0^{\nu,b}$. \footnote{When $\widehat{\fY}_t$ is an OU process and $\mu$ is Gaussian we are in the traditional DDPM setting \citep{song2020score} and these two quantities admit the classical tractable score matching approximations}
 	
In order to circumvent the score term, the authors \citet{vargasshro2021,chen2021likelihood} use the Fokker-Plank (FPK) equation and integration by parts, respectively, trading of the score with a divergence term, whilst  \citet{huang2021variational} use a variant of the Feynman Kac formula to arrive at an equivalent solution. From Proposition \ref{prop:RND}, we can avoid the divergence entirely and replace it by a backwards integral (making use of the conversion formula (\ref{eq:conversion}) and the fact that the ensuing forward integral is zero in expectation). As hinted at in Remark \ref{rem:conversion}, this replacement might have favourable variance-reducing properties, but numerical evidence would be necessary.

\subsection{Score-based generative modeling}
\label{sec:SGM}

Generative modeling is concerned with the scenario where $\mu(\vx)$ can be sampled from (but its density is unknown), and the goal is to learn a backward diffusion as in (\ref{eq:back_sde}) that allows us to generate further samples from $\mu(\vx)$, see \citet{song2020score}. We may fix a reference forward drift $a_t$, and, motivated by Proposition \ref{prop:Nelson}, parameterise the backward drift as $b_t = a_t - s_t$, so that in the case when $\ora{\P}^{\mu,a} = \ola{\P}^{\nu,b}$, the variable drift component $s_t$ will represent the score $\sigma^2 \nabla \log \rho^{\mu,a}_t$. When the diffusion associated to $a_t$ is ergodic and $T$ is large, $\ora{\P}^{\mu,a} = \ola{\P}^{\nu,b}$ requires that $\nu(\vz)$ is close to the corresponding invariant measure. Choosing $\gamma_t^- = a_t$, and, for simplicity $\sigma=1$, direct calculations using Proposition \ref{prop:RND} show that
\begin{align}
\label{eq:ism}
\mathcal{L}_{\mathrm{ISM}} (s) := \KL(\ora{\P}^{\mu,a}|| \ola{\P}^{\nu,a - s}) 
= \mathbb{E}_{\fY \sim \ora{\P}^{\mu,a}} \left[ \tfrac{1}{2} \int_0^T s_t^2(\fY_t) \, \mathrm{d}t + \int_0^T (\nabla \cdot s_t)(\fY_t) \, \d t \right]
+ \mathrm{const.} 
\end{align} 
recovers the implicit score matching objective \citep{hyvarinen2005estimation}, up to a constant that does not depend on $s_t$.
\begin{proof}
We start by noticing that the contributions in (\ref{eq:boundary terms}) and (\ref{eq:forward path integral}) do not depend on $s_t$, and can therefore be absorbed in the constant in (\ref{eq:ism}) Notice that the precise forms of $\Gamma_0$, $\Gamma_T$ and $\gamma^+$ are left unspecified or unknown, but this does not affect the argument. We find 
\begin{align*}
\KL(\ora{\P}^{\mu,a}|| \ola{\P}^{\nu,a - s}) &=  \mathbb{E}_{\fY \sim \ora{\P}^{\mu,a}} \left[  \int_0^T s_t (\fY_t) \cdot \left( \ola{\d} \fY_t - \tfrac{1}{2}\left( 2a_t - s_t \right)(\fY_t) \, \d t\right) \right] + \mathrm{const.}
\\
&= \mathbb{E} \left[\int_0^T s_t (\fY_t) \cdot \left( \sigma\ola{\d} \fW_t + \tfrac{1}{2}s_t(\fY_t) \, \d t\right) \right] + \mathrm{const.} \\
&= \mathbb{E} \left[ \tfrac{1}{2} \int_0^T s_t^2(\fY_t) \, \mathrm{d}t + \int_0^T (\nabla \cdot s_t)(\fY_t) \, \d t \right] + \mathrm{const.}, 
\end{align*}
where in the first line we use Proposition \ref{prop:RND} together with $b_t = a_t - s_t$ and $\gamma_t^- = a_t$, and to proceed to the second line we substitute $\ola{\d}\fY_t$ using the SDE in (\ref{eq:fwd_sde}). The last equality follows from the conversion formula between forward and backward It{\^o} integrals, see (\ref{eq:conversion}), and the fact that forward integrals with respect to Brownian motion have zero (forward) expectation, see (\ref{eq:martingale}).    
\end{proof}

Notice that the nonuniqueness in Framework \ref{frm2} has been circumvented by fixing the forward drift $a_t$; indeed $\mathcal{L}_{\mathrm{ISM}}$ is convex in $s$, confirming Note that using integration by parts, $\mathcal{L}_{\mathrm{ISM}}$ is equivalent to denoising score matching \citep{song2020sliced, song2020score}: 
\begin{align}
   \KL(\ora{\P}^{\mu,a}|| \ola{\P}^{\nu,a - s}) =  \mathbb{E}_{\fY \sim \ora{\P}^{\mu,a}} \left[ \tfrac{1}{2 \sigma^2} \int_0^T \left\Vert s_t(\fY_t) - \nabla \ln \rho^{\mu, a}_{t|0}(\fY_t| \fY_0) \right\Vert^2 \,  \d t \right] + \mathrm{const.}. 
\end{align}
Framework \ref{frm2} accommodates  modifications of (\ref{eq:ism}); in particular the divergence term in (\ref{eq:ism}) can be replaced by a  backward integral, see Appendix \ref{app:discussion} and Remark \ref{rem:conversion}.  Note that the settings discussed in this section are also akin to the formulations in \citet{kingma2021variational,huang2021variational}.

Finally, it is worth highlighting that this setting is not limited to ergodic models and can in fact accommodate finite time models in the exact same fashion as the F\"ollmer drift is used for sampling (Section \ref{sec:Foellmer}) by using a Doob's transform \citep{rogers2000diffusions} based SDE for $\ora{\P}^{\mu,a}$ as opposed to the classical VP-SDE see Example 2.4 in \citet{ye2022first}.

\subsection{Score-based sampling}
\label{sec:sampling}

Consider the setting when $\nu(\vz)$ is a target distribution that can be evaluated pointwise up to a normalisation constant. In order to construct a diffusion process that transports an appropriate auxiliary distribution $\mu(\vx)$ to $\nu(\vz)$, one approach is to fix a  drift $b_t$ in the backward diffusion (\ref{eq:back_sde}), and then learn the corresponding forward diffusion (\ref{eq:fwd_sde}) by minimising $a \mapsto D(\ora{\P}^{\mu,a}|\ola{\P}^{\nu,b})$. Tractability of this objective requires that $\mu := \ola{\P}^{\nu,b}_0$ be known explicitly, at least approximately. In the following we review possible choices.

\paragraph{The F{\"ollmer drift}.}  \label{sec:Foellmer}
Choosing $b_t(x) = x/t $, one can show using Doob's transform \citep[Theorem 40.3(iii)]{rogers2000diffusions}, that $\ola{\P}^{\nu,b}_0(\vx) = \delta(\vx)$, for any terminal distribution $\nu(\vz)$. Hence, minimising $a \mapsto \KL(\ora{\P}^{\delta_0,a}|\ola{\P}^{\nu,b})$ leads to a tractable objective. In particular consider the choice $\Gamma_0 = \delta_0$, $\gamma^{+} = 0$, corresponding to a standard Brownian motion, then it follows that $\gamma^{-} = \frac{x}{t}$, $\Gamma_T = \gN(0, T \sigma^2)$ and thus via Proposition \ref{prop:RND}:
\begin{align}
\label{eq:Foellmer}
    \KL(\ora{\P}^{\delta_0,a}|\ola{\P}^{\nu,b}) = \mathbb{E} _{\fY \sim \ora{\P}^{\mu,a}}\!\left[\frac{1}{\sigma^2}\!\!\int_0^T \!\!a^2(\fY_t)\, \d t +\! \log\! \left(\frac{\d \gN(0, T \sigma^2)}{\d \nu}\right)\!\!(\fY_T) \right] + \mathrm{const.}, 
\end{align}
in accordance with \citep{dai1991stochastic,vargas2021bayesian,zhang2021path}. For further details, see \citet{follmer1984entropy, vargas2021bayesian, zhang2021path,huang2021schrodinger}. As hinted at in Appendix \ref{app:vi}, replacing $\KL$ in (\ref{eq:Foellmer}) by the log-variance divergence (\ref{eq:vargrad}) leads to an objective that directly links to BSDEs, see \citep[Section 3.2]{nusken2021solving}.

% \textcolor{red}{cite \citet[Theorem 40.3(iii)]{rogers2000diffusions}}

\paragraph{Ergodic diffusions.}

\citet{vargas2023denoising,berner2022optimal} fix a backward drift $b_t$ that induces an ergodic backward diffusion, so that for large $T$, the marginal at initial time $\ola{\P}^{\nu,b}_{t = 0}$ is close to the corresponding invariant distribution, and in particular (almost) independent of $\nu(\vz)$.\footnote{\citet{vargas2023denoising} chose a  (time-inhomoegenous) backward Ornstein-Uhlenbeck process, so that $\ola{\P}^{\nu,b}_{t = 0}$ is close to a Gaussian, but generalisations are straightforward.} Defining $\mu := \ola{\P}^{\nu,b}_{t = 0}$, \citet{vargas2023denoising,berner2022optimal} set out to minimise the denoising diffusion sampler loss  $\mathcal{L}_{\mathrm{DDS}}(f) := \KL(\ora{\P}^{\mu,b+\sigma^2 f} | \ola{\P}^{\nu,b})$. Choosing the reference process to be $\Gamma_{0,T} = \mu$, $\gamma^{\pm} = b$ (that is, the reference process is at stationarity, with invariant measure $\mu(\vz))$, direct calculation based on (\ref{eq:RND fb}) shows that
\begin{align}
\label{eq:DDS}
\mathcal{L}_{\mathrm{DDS}}(f) =
\mathbb{E} _{\fY \sim \ora{\P}^{\mu,b+ \sigma^2 f}}\!\left[ \sigma^2 \!\!\int_0^T \!\!f^2(\fY_t)\, \d t +\! \log\! \left(\frac{\d \Gamma_T}{\d \nu}\right)\!\!(\fY_T) \right],  
\end{align}

\begin{remark}[IWAE-objective]
\label{rem:IWAE sampling}
In line with (\ref{eq:IWAE KL}), we may also consider the multi-sample objective \begin{align*}
\mathcal{L}^{(K)}_{\mathrm{DDS}}(f) & := \KL^{(K)}(\ora{\P}^{\mu,b+\sigma^2 f} | \ola{\P}^{\nu,b})
\\
& = \mathbb{E} _{\fY^1,\ldots,\fY^K \overset{iid}{\sim} \ora{\P}^{\mu,b+ \sigma^2 f}}\!\left[ \log \left(\tfrac{1}{K}\sum_{i=1}^K\exp\left(\sigma^2 \!\!\int_0^T \!\!f^2(\fY^i_t)\, \d t +\! \log\! \left(\frac{\d \Gamma_T}{\d \nu}\right)\!\!(\fY^i_T) \right)\right)\right]
\end{align*}
\end{remark}

\begin{proof}
We start by noticing that the choice $\gamma^-_t = b_t$ cancels the terms in (\ref{eq:backward path integral}), and the choice $\Gamma_0 = \mu$ cancels the first term in (\ref{eq:boundary terms}). Using $a_t = b_t +  \sigma^2 f_t$, we therefore obtain
\begin{subequations}
\begin{align}
\mathcal{L}_{\mathrm{DDS}}(f) &= \KL(\ora{\P}^{\mu,b+ \sigma^2 f} | \ola{\P}^{\nu,b}) \\
& = \mathbb{E} \left[\sigma^2 \int_0^T f_t(\fY_t) \cdot \left( (b_t + f_t)(\fY_t) \, \d t - \tfrac{1}{2}(2b_t + f_t)(\fY_t) \, \d t\right) + \log \left( \frac{\d \Gamma_T}{\d \nu}\right)(\fY_T)  \right]
\nonumber
\\
%\label{eq:DDS}
& = \mathbb{E} \left[ \sigma^2 \int_0^T f^2_t(\fY_t)\, \d t + \log \left( \frac{\d \Gamma_T}{\d \nu}\right)(\fY_T)\right].
\end{align}
\end{subequations}
As is implicit in \citet{berner2022optimal}, it is also possible to choose $\gamma^{\pm} = 0$ for the reference process, with $\Gamma_0 = \Gamma_T = \mathrm{Leb}$, the Lebesgue measure on $\mathbb{R}^d$. We notice in passing that although the Lebesgue measure is not normalisable, it is invariant under Brownian motion (the forward and backward drifts are both zero), and the arguments can be made rigorous by a limiting argument (take Gaussians with diverging variances), or by using the techniques in \citet[Appendix A.1]{leonard2013survey}.
By similar calculations as above, we obtain 
\begin{subequations}
\begin{align}
\mathcal{L}_{\mathrm{DDS}}(f) & = \mathbb{E} \left[  \sigma^2 \int_0^T f^2_t(\fY_t)\, \d t - \tfrac{1}{\sigma} \int_0^T b_t(\fY_t) \cdot \ola{\d} \fW_t + \log \mu(\fY_0) - \log \nu(\fY_T)\right]
\\
\label{eq:DDS2}
& = \mathbb{E} \left[  \sigma^2 \int_0^T f^2_t(\fY_t)\, \d t - \int_0^T (\nabla \cdot b_t)(\fY_t) \, \d t - \log \nu(\fY_T)\right] + \mathrm{const.},
\end{align}
\end{subequations}
where we overload notation and denote the Lebesgue densities of $\mu$ and $\nu$ with the same letters. 
In the second line we have used the conversion  formula in (\ref{eq:conversion}), together with the fact that the forward It{\^o} integrals are forward martingales \citep{kunita2019stochastic}, and therefore have zero expectation. Comparing (\ref{eq:DDS}) and (\ref{eq:DDS2}), we notice the additional divergence term, due to the fact that the choice $\gamma^- = 0$ does not cancel the terms in (\ref{eq:RN backward}). See also the discussion in Appendix \ref{app:discussion}.     
\end{proof}

Finally we note that whilst the work in \citet{berner2022optimal} focuses on exploring a VP-SDE-based approach which is ergodic, their overarching framework generalises beyond ergodic settings, notice this objective is akin to the KL expressions in \citet[Proposition 1]{vargas2021machine} and \citet[Proposition 9]{liu2022deep}.
% see Appendix \ref{app:score sampling}. We note that  \citet{berner2022optimal} independently suggest the same objective, but (implicitly) use a different reference process (see Appendix \ref{app:score sampling}).

%\begin{equation}
%D_{\mathrm{Var}} (\ora{\P}^{\pi_0,\nabla \log \pi + \nabla\phi} | \ola{\P}^{\pi_0,- \nabla \log \pi + \nabla \phi})  
%\end{equation}

%\begin{equation}
%\KL(\ora{\P}^{\pi_0,  \nabla \log \pi + \nabla \phi} | \ola{\P}^{\pi_T,\nabla \phi}) = \mathbb{E} \left[ \int_0^T |\nabla \phi_t(\fY_t)|^2 \, \d t - \log \pi_T(\fY_T) + \tfrac{1}{\sqrt{2}} \int_0^T \left( \nabla \phi_t - \nabla \log \pi_t)(\fY_t) \cdot \ola{\d}\fW_t \right)\right]    
%\end{equation}

\subsection{Action matching \citep{neklyudov2022action}}
\label{app:action matching}
Similar to our approach in Section \ref{sec:annealing}, \citet{neklyudov2022action} fix a curve of distributions $(\pi_t)_{t \in [0,T]}$. In contrast to us, they assume that samples from $\pi_t$ are available, for each $t \in [0,T]$ (but scores and unnormalised densities are not). Still, we can use Framework \ref{frm2} to rederive their objective:  

Akin to the proof of Proposition \ref{prop:annealing uniqueness}, under mild conditions on $(\pi_t)_{t \in [0,T]}$, there exists a unique vector field $\nabla\phi^*_t$ that satisfies the Fokker-Planck equation 
\begin{equation}
\partial_t \pi_t + \nabla \cdot (\pi_t \nabla \phi^*_t) = \tfrac{\sigma^2}{2}\Delta \pi_t.     
\end{equation}
We can now use the reference process $\ora{\P}^{\pi_0,\nabla \phi^*}$(that is, $\Gamma_0 = \pi_0$, $\gamma_t^+ = \nabla \phi_t^*$, $\Gamma_T = \pi_T$, $\gamma_t^- = \nabla \phi_t^* - \sigma^2 \nabla \log \pi_t$) to compute the objective
$$\psi \mapsto \KL(\ora{\P}^{\pi_0,\nabla \psi}|| \ola{\P}^{\pi_T, \nabla \psi - \sigma^2 \nabla \log \pi}),$$
relying on the same calculational techniques as in Sections \ref{sec:SGM} and \ref{sec:sampling} (the particular choice of reference process cancels the terms in (\ref{eq:boundary terms})). Notice that the parameterisation in this objective constrains the target diffusion to have time-marginals $\pi_t$, just as in Section \ref{sec:annealing}. By direct calculation, we obtain (up to a factor of $2/\sigma^2$) the action-gap in equation (5) in \citet{neklyudov2022action}. Indeed, we see that
\begin{subequations}
\begin{align*}
 \KL(\ora{\P}^{\pi_0,\nabla \psi}|| &\ola{\P}^{\pi_T, \nabla \psi - \sigma^2 \nabla \log \pi})   = \mathbb{E}_{\ora{\P}^{\pi_0,\nabla \psi}} \left[ \log \left( \frac{\d \ora{\P}^{\pi_0,\nabla \psi}}{\d \ola{\P}^{\nabla \psi - \sigma^2 \nabla \log \pi}}\right) \right]
\\
& = \mathbb{E} \Bigg[ \tfrac{1}{\sigma^2} \int_0^T \left( \nabla \psi_t - \nabla \phi^*_t \right)^2 (\fY_t) \, \d t - \tfrac{1}{\sigma} \int_0^T (\nabla \psi_t - \nabla \phi_t^*)(\fY_t) \cdot \ola{\d} \fW_t \\
&\quad\quad \quad - \int_0^T \nabla \log \pi_t(\fY_t) \cdot (\nabla \psi_t - \nabla \phi^*_t)(\fY_t)\, \d t\Bigg]
\\
& = \mathbb{E} \left[ \tfrac{1}{\sigma^2} \int_0^T \left( \nabla \psi_t - \nabla \phi^*_t \right)^2 (\fY_t) \, \d t \right], 
\end{align*}
\end{subequations}
where in the last line we have used the conversion formula (\ref{eq:conversion}) together with  (\ref{eq:martingale}) to compute
\begin{subequations}
\begin{align*}
\mathbb{E} \left[\tfrac{1}{\sigma} \int_0^T (\nabla \psi_t - \nabla \phi_t^*)(\fY_t) \cdot \ola{\d} \fW_t \right]
& = \mathbb{E} \left[ \int_0^T (\nabla \cdot (\nabla \psi_t - \nabla \phi_t^*))(\fY_t) \, \d t \right] 
\\
=  \int_0^T  \int_{\mathbb{R}^d}(\nabla \cdot (\nabla \psi_t - \nabla \phi_t^*))(\vx) \pi_t(\d \vx)\, \d t &  = - \int_0^T  \int_{\mathbb{R}^d}(\nabla \psi_t - \nabla \phi_t^*)(\vx)\cdot \nabla \log \pi_t (\vx) \pi_t(\d \vx)\, \d t  
\\
& = \mathbb{E} \left[\int_0^T \nabla \log \pi_t(\fY_t) \cdot (\nabla \psi_t - \nabla \phi^*_t)(\fY_t)\, \d t\right]
\end{align*}
\end{subequations}
and cancel the two last terms in the penultimate line.
\section{Controlled Monte Carlo Diffusions (Section \ref{sec:annealing})}
\label{app:annealing}

\subsection{Derivation of $\mathcal{L}^{\mathrm{CMCD}}_{\KL}$}
\label{app:cmcd kl}
The proof uses Proposition \ref{prop:RND}, choosing $\Gamma_0 = \Gamma_T$ to be the Lebesgue measure, with $\gamma^+ = \gamma^- = 0$ (but notice that $\sigma$ in (\ref{eq:RND fb}) needs to be replaced by $\sigma \sqrt{2}$ due to the scaling in (\ref{eq:controlled annealing})). We compute
\begin{align}
\nonumber
& \mathcal{L}^{\mathrm{CMCD}}_{\KL}\!(\phi) = \E_{\fY \sim \ora{\P}^{\pi_0,\sigma^2 \nabla \log \pi + \nabla \phi}} \left[ \log \left( \frac{\d\ora{\P}^{\pi_0,\sigma^2 \nabla \log \pi + \nabla \phi}}{\d \ola{\P}^{\pi_T,-\sigma^2 \nabla \log \pi + \nabla \phi}} \right) (\fY) \right]  \\
\nonumber
 = & \E \left[ \log \pi_0(\fY_0) -\log \pi_T(\fY_T) \right] \\
\nonumber
& + \E \left[ \tfrac{1}{2 \sigma^2} \int_0^T (\sigma^2 \nabla \log \pi_t + \nabla \phi_t)(\fY_t) \cdot \left( \ora{\d} \fY_t - \tfrac{1}{2}(\sigma^2 \nabla \log \pi_t + \nabla \phi_t)(\fY_t) \, \d t  \right) \right]
\\
\nonumber
& - \E \left[ \tfrac{1}{2 \sigma^2} \int_0^T (-\sigma^2 \nabla \log \pi_t + \nabla \phi_t)(\fY_t) \cdot \left( \ola{\d} \fY_t - \tfrac{1}{2}(-\sigma^2 \nabla \log \pi_t + \nabla \phi_t)(\fY_t) \, \d t  \right) \right]
\\
\nonumber
 = & \E \left[ \log \pi_0(\fY_0) -\log \pi_T(\fY_T) \right]
\\
\nonumber
& + \E \left[ \tfrac{1}{2 \sigma^2} \int_0^T (\sigma^2 \nabla \log \pi_t + \nabla \phi_t)(\fY_t) \cdot \ora{\d} \fY_t \right] - \E \left[ \tfrac{1}{2 \sigma^2} \int_0^T (-\sigma^2 \nabla \log \pi_t + \nabla \phi_t)(\fY_t) \cdot \ola{\d} \fY_t \right]
\\
\nonumber
& - \tfrac{1}{\sigma^2} \E \left[ \int_0^T (\sigma^2 \nabla \log \pi_t \cdot \nabla \phi_t) (\fY_t) \, \d t \right]
\\
\nonumber
= & \E \left[ \log \pi_0(\fY_0) -\log \pi_T(\fY_T) \right] 
\\
\nonumber
& +
\E \left[ \sigma^2  \int_0^T |\nabla \log \pi_t(\fY_t)|^2  \d t  + \tfrac{1}{\sigma \sqrt{2}}  \int_0^T  \left(\sigma^2 \nabla \log \pi_t - \nabla \phi_t \right)(\fY_t) \cdot \ola{\d} \fW_t \right], 
\end{align}
where in the last equality we have inserted the dynamics (\ref{eq:controlled annealing}) and used the martingale property (\ref{eq:martingale}). Notice that the expectation of the backward integral is not zero, see Appendix \ref{app:stochana}.
\subsection{Derivation of $\mathcal{L}^{\mathrm{CMCD}}_{\mathrm{Var}}$}

In this section, we first verify the expression for $\mathcal{L}^{\mathrm{CMCD}}_{\mathrm{Var}}$ in Section \ref{sec:annealing}, using Proposition \ref{prop:RND}, and choosing $\Gamma_0 = \Gamma_T$ to be the Lebesgue measure, $\gamma^+ = \gamma^- = 0$. We recall that although the Lebesgue measure in not normalisable, the arguments can be made rigorous using the techniques in \citet[Appendix A]{leonard2013survey}.

%\begin{subequations}
%\begin{align}
The Radon-Nikodym derivative (RND) along (\ref{eq:controlled annealing}) reads
\begin{subequations}
\begin{align}
&\left(\log \frac{\d \ora{\P}^{\pi_0,\sigma^2 \nabla \log \pi + \nabla \phi}}{\d \ola{\P}^{\pi_T, - \sigma^2 \nabla \log \pi + \nabla \phi}} \right)(\fY)  = (\log \pi_0) (\fY_0) - (\log \pi_T) (\fY_T) 
\nonumber
\\
& +\!\tfrac{1}{2 \sigma^2} \!\!\int_0^T \!\!\!\!\!(\sigma^2 \nabla \!\log \pi_t \!+\! \nabla \phi_t)(\fY_t) \!\! \left(\!(\!\sigma^2 \nabla \!\log \pi_t \!+\! \nabla \phi_t)(\fY_t) \, \d t \!+\! \sqrt{2 }\sigma  \ora{d} \fW_t\! -\! \tfrac{1}{2} (\sigma^2 \nabla\! \log \pi_t \! + \!\nabla \phi_t)(\fY_t) \, \d t \!\right)
\nonumber
\\
&-\!\tfrac{1}{2 \sigma^2} \!\!\!\int_0^T \!\!\!\!\!\!(\!-\sigma^2 \nabla\! \log \pi_t\! +\! \nabla \phi_t)(\fY_t) \!\!\left(\!\! (\!\sigma^2 \nabla \!\log \pi_t\! + \!\nabla \phi_t)(\fY_t) \, \d t \!+\! \sqrt{2 } \sigma \ola{d} \fW_t \!-\! \tfrac{1}{2} \!(\!\nabla \phi_t \!-\!\sigma^2 \nabla \log \pi_t  )(\fY_t) \, \d t\! \right)
\nonumber
\\
& = (\log \pi_0) (\fY_0) - (\log \pi_T) (\fY_T) \nonumber
+ \sigma^2 \int_0^T | \nabla \log \pi_t(\fY_t)|^2 \, \d t
\\
& \quad\quad\quad+ {\tfrac{\sigma}{\sqrt{2}}} \left(\int_0^T \nabla \log \pi_t(\fY_t) \cdot \ora{\d} \fW_t + \int_0^T \nabla \log \pi_t(\fY_t) \cdot \ola{\d} \fW_t \right)\nonumber\\
&\quad\quad\quad+ \tfrac{1}{\sigma \sqrt{2 }} \left(\int_0^T \nabla \phi(\fY_t) \cdot \ora{\d} \fW_t - \int_0^T \nabla \phi(\fY_t) \cdot \ola{\d} \fW_t \right).\nonumber
\end{align}
\end{subequations}
Using (\ref{eq:Strato}), we obtain
\begin{equation*}
{\tfrac{\sigma}{\sqrt{2} }}\left(\int_0^T \nabla \log \pi_t(\fY_t) \cdot \ora{\d} \fW_t + \int_0^T \nabla \log \pi_t(\fY_t) \cdot \ola{\d} \fW_t \right) = \sqrt{2}\sigma \int_0^T \nabla \log \pi_t(\fY_t) \circ \d \fW_t. 
\end{equation*}
Furthermore, from (\ref{eq:conversion}) we see that 
\begin{equation}
\label{eq:Laplace approx}
\tfrac{1}{\sigma \sqrt{2 }} \left(\int_0^T \nabla \phi(\fY_t) \cdot \ora{\d} \fW_t - \int_0^T \nabla \phi(\fY_t) \cdot \ola{\d} \fW_t \right) = -\int_0^T \Delta \phi_t(\fY_t) \, \d t,
\end{equation}
from which the claim follows.

\begin{remark}[Estimating $\mathcal{L}^{\mathrm{CMCD}}_{\mathrm{Var}}$ without second derivatives]
\label{rem:kick out Laplace}
Using (\ref{eq:Laplace approx}), we can equivalently write the RND as 
\begin{subequations}
\begin{align}
\nonumber
& \left(\log \frac{\d \ora{\P}^{\pi_0,\sigma^2 \nabla \log \pi + \nabla \phi}}{\d \ola{\P}^{\pi_T, - \sigma^2 \nabla \log \pi + \nabla \phi}} \right)(\fY)  =
\log \pi_T(\fY_T) - \log \pi_0(\fY_0) 
\\
& - \tfrac{1}{\sigma \sqrt{2 }} \left(\int_0^T \nabla \phi(\fY_t) \cdot \ora{\d} \fW_t - \int_0^T \nabla \phi(\fY_t) \cdot \ola{\d} \fW_t \right) \nonumber
\\
&-\!\! \sigma \sqrt{2 } \!\int_0^T\!\!\! \!\!\nabla \log \pi_t(\fY_t)\! \circ\! \d \fW_t\! - \sigma^2\! \!\int_0^T\!\!\! |\nabla \log \pi_t(\fY_t)|^2 \, \d t,\!\! \nonumber
\end{align}
\end{subequations}
so that $\mathcal{L}^{\mathrm{CMCD}}_{\mathrm{Var}}$ can be estimated without the need to evaluate $\Delta \phi$. Note that the identity (\ref{eq:Laplace approx}) is similar to a finite difference approximation of $\Delta \phi$  along the process $\fY_t$. 
\end{remark}

\subsection{Existence and uniqueness of the drift}
\label{app:drift prop}

Before proceeding to the proof of Propostion \ref{prop:annealing uniqueness}, we state the following assumption on the curve of distributions $(\pi_t)_{t \in [0,T]}$:

\begin{assumption}
\label{ass:annealing}
Assume that $\pi \in C^{\infty}([0,T]\times \mathbb{R}^d; \mathbb{R})$, and that for all $t \in [0,T]$ 
\begin{enumerate}
\item the time derivative $\partial_t \pi_t$ is square-integrable, that is, $\partial_t \pi_t(t,\cdot) \in L^2(\mathbb{R}^d)$,
\item $\pi_t$ satisfies a Poincar{\'e} inequality, that is, there exists a constant $C_t>0$ such that
\begin{equation}
\label{eq:Poincare}
\Var_{\pi_t}(f) \le C_t \int_{\mathbb{R}^d} |\nabla f|^2 \d \pi_t,    
\end{equation}
for all $f \in C_b^1(\mathbb{R}^d)$. 
\end{enumerate}
\end{assumption}
Note that at the boundary $\partial [0,T] = \{0,T\}$, we agree to denote by $\partial_t \pi_t$ the `inward-pointing derivative' and interpret $C^{\infty}([0,T]\times \mathbb{R}^d; \mathbb{R})$ in that way.
We remark that the Poincar\'e inequality (\ref{eq:Poincare}) is satisfied under relatively mild conditions on the tails of $\pi_t$  (for instance, Gaussian tails) and control of its derivatives, see, e.g., \citet[Chapter 4]{bakry2014analysis}. Under Assumption \ref{ass:annealing}, we can prove Proposition \ref{prop:annealing uniqueness} as follows:

\begin{proof}[Proof of Proposition \ref{prop:annealing uniqueness}] The Fokker-Planck equation associated to (\ref{eq:controlled annealing}) is given by 
\begin{equation}
\label{eq:continuity}
\partial_t \pi_t + \nabla \cdot (\pi_t \nabla \phi_t) = 0.
\end{equation}
The operator $\phi \mapsto - \nabla \cdot (\pi_t \nabla \phi)$ is essentially self-adjoint in $L^2(\mathbb{R}^d)$, and, by (\ref{eq:Poincare}) coercive on $L_0^2(\mathbb{R}^d) := \{f \in L^2(\mathbb{R}^d) : \,\, \int f \d x = 0\}$. Therefore, there exists a unique solution $\phi_t^* \in L^2_0(\mathbb{R}^d)$ to (\ref{eq:continuity}), for any $t \in [0,T]$. This solution is smooth by elliptic regularity. By Proposition \ref{prop:Nelson} and our general framework, any minimiser $\widetilde{\phi}$ of  (\ref{eq:annealed_div}) necessarily satisfies (\ref{eq:continuity}) as well. We then obtain
\begin{equation}
\nonumber
\nabla \cdot (\pi_t \nabla (\phi_t - \widetilde{\phi_t})) = 0.
\end{equation}
Multiplying this equation by $\phi_t - \widetilde{\phi}_t$, integrating, and integrating by parts shows that $\int \Vert \nabla (\phi - \widetilde{\phi})\Vert^2 \, \d \pi_t = 0$, proving the claim.
\end{proof}

\begin{remark}[Relation to previous work]
\label{rem:AFT}
Note we can carry out a change of variables to equation (\ref{eq:continuity}), 
\begin{align}
\nonumber
    \partial_t \ln  \pi_t  = -\pi_t^{-1}( \nabla \pi_t  \cdot \nabla \phi_t + \pi_t \Delta \phi ) = -\nabla \ln \pi_t \cdot  \nabla \phi  - \Delta \phi,
\end{align}
yielding the PDE
\begin{align}
\nonumber
    \partial_t \ln  \pi_t   +\nabla \ln \pi_t \cdot  \nabla \phi  + \Delta \phi = 0,
\end{align}
which when considered in terms of the unnormalised flow $\hat{\pi}_t = Z_t \pi_t$ coincides with PDE in \citet{vaikuntanathan2008escorted,arbel2021annealed}:
\begin{align}
\nonumber
    \partial_t \ln  \hat{\pi}_t   +\nabla \ln \hat{\pi}_t \cdot  \nabla \phi  + \Delta \phi -  \E_{\pi_t}[\partial_t \ln \hat{\pi}_t] = 0.
\end{align}
In particular, we note that the Markov chain proposed in \citet{arbel2021annealed} converges to our proposed parameterisation in equation (\ref{eq:controlled annealing}) (see equation (12) in \cite{arbel2021annealed}).
\end{remark}

\subsection{Infinitesimal Schr\"odinger bridges (proof of Proposition \ref{prop:sch inf})}
\label{app:infi schr}

Throughout this proof, we assume that the Schr\"odinger problems on the intervals $[iT/N,(i+1)T/N]$, $i=0,\ldots,N-1$ admit unique solutions, with drifts of regularity specified in Assumption \ref{ass:vector fields}, see \cite[Proposition 2.5]{leonard2013survey} for sufficient conditions. We also work under Assumption \ref{ass:annealing}, so that the drift $\nabla \phi^*$ exists and is unique by Proposition \ref{prop:annealing uniqueness}.

Given the interpolation $(\pi_t)_{t \in [0,T]}$, we define the constraint sets

\begin{subequations}
\label{eq:UN}
\begin{align}
\mathcal{M}^N(\pi) := \Bigg\{ & a \in \mathcal{U}^N: \quad \ora{\P}^{\pi_0, \nabla \log \pi + a}_{t_i} = \pi_{t_i} \,\,
& \text{at times } \quad t_i = \tfrac{iT}{N}, \quad i = 0,\ldots,N  \Bigg\},
\nonumber
\tag{\ref*{eq:UN}}
\end{align}
\end{subequations}
as well as
\begin{subequations}
\label{eq:Cinfty}
\begin{align}
\mathcal{M}^\infty(\pi) := \Bigg\{ & a \in \mathcal{U}: \,\, \ora{\P}^{\pi_0, \nabla \log \pi + a}_{t} = \pi_{t} \quad \text{for all } \quad t \in [0,T]  \Bigg\}.
\end{align}
\end{subequations}
In (\ref{eq:UN}), the set $\mathcal{U}^N$ is given by
\begin{subequations}
\begin{align*}
\mathcal{U}^N := \Bigg\{a \in C([0,T] \times&\mathbb{R}^d;\mathbb{R}^d): \quad  a \in C^{\infty}([\tfrac{iT}{N},\tfrac{(i+1)T}{N}] \times \mathbb{R}^d;\mathbb{R}^d), \quad \text{for all } i=0,\ldots,N-1,
\\
\quad   \exists L >0 
& \text{ such that } \Vert a_t(\vx) - a_t(\vy)\Vert \le L \Vert \vx - \vy \Vert, \, \text{for all } t \in [0,T], \,\, \vx,\vy \in \mathbb{R}^d \Bigg\},
\end{align*}
\end{subequations}
and we recall that $\mathcal{U}$ has been defined in Assumption \ref{ass:vector fields}. 

By the construction in Proposition \ref{prop:sch inf}, the drift $\nabla 
\phi^{(N)}$ can be characterised by
\begin{subequations}
\label{eq:phiN}
\begin{align}
\nabla \phi^{(N)} & \in \argmin_{a \in \mathcal{M}^N(\pi)} \E_{\fY \sim \ora{\P}^{\pi_0,\nabla \log \pi + a}} \left[ \tfrac{1}{2 \sigma^2} \int_0^T \Vert a_t (\fY)\Vert^2 \, \d t\right]
\\
&  = \argmin_{a \in  \mathcal{M}^N(\pi)} \KL(\ora{\P}^{\pi_0,\nabla \log \pi + a }| \ora{\P}^{\pi_0,\nabla \log \pi}),
\end{align}
\end{subequations}
where the second line follows from Girsanov's theorem, see the proof of Proposition \ref{prop:RND}.

We now claim that the CMCD drift $\nabla \phi^*$, by definition the minimiser in (\ref{eq:annealed_div}), can be characterised in a similar way by
\begin{subequations}
\label{eq:phiinf}
\begin{align}
\label{eq:phistar}
\nabla \phi^* & \in \argmin_{a \in \mathcal{M}^\infty(\pi)} \E_{\fY \sim \ora{\P}^{\pi_0,\nabla \log \pi + a}} \left[ \tfrac{1}{2 \sigma^2} \int_0^T \Vert a_t (\fY)\Vert^2 \, \d t\right]
\\
&  = \argmin_{a \in  \mathcal{M}^\infty(\pi)} \KL(\ora{\P}^{\pi_0,\nabla \log \pi + a }| \ora{\P}^{\pi_0,\nabla \log \pi}).
\end{align}
\end{subequations}
Indeed, the constraint $\ora{\P}^{\pi_0, \nabla \log \pi + a}_{t} = \pi_{t}$ for all $t \in [0,T]$ implies that $a$ satisfies the Fokker-Planck equation $\partial_t \pi_t + \nabla \cdot (\pi_t a_t) = 0$. By the Helmholtz decomposition \cite[Section 2.5.4]{figalli2021invitation}, minimisers of  $a_t \mapsto \int a_t^2 \d \pi_t$ are of gradient form, thus (\ref{eq:phistar}) holds.

%By Proposition \ref{prop:annealing uniqueness}, the minimiser is characterised by the PDE \ref{eq:continuity}. It is a classical fact \citep{villani2003topics,villani2009optimal} that solutions to the PDE (\ref{eq:continuity}) can be obtained from $\nabla \phi^* = v^*$ and 
%\begin{equation}
%v^*  \in \argmin_v \mathcal{G}(v) := \argmin_v  \int_0^T \int_{\mathbb{R}^d} |v_t(x)|^2 \pi_t(x) \ \d x \, \d t,      
%\end{equation}
%subject to $\partial_t \pi_t + \nabla \cdot (\pi_t v) = 0$.

%From \citep[Section 4.6]{chen2021stochastic}, we see that $\nabla \psi^{(N)}$ can similarly be characterised as $\nabla \psi^{(N)} = v^*$, where $v^*$ minimises the same functional subject to the constraint
%\begin{equation}
%\label{eq:constraints}
%\partial_t \rho_t + \nabla \cdot (\rho_t v_t) = 0, \qquad \rho_{iT/N} = \pi_{iT/N}, \qquad i = 0,\ldots,N,
%\end{equation}
%i.e. the density driven by $v$ satisfies the same marginal constraint as $(\pi_t)_{t \in [0,T]}$ at the discrete time points $0,1/T, 2/T,...,T$. As $N \rightarrow \infty$, the constraint for the two minimisation problems become equivalent, and so we have convergence of $\nabla \psi^{(N)}$ against $\nabla \phi$. 

Comparing (\ref{eq:phiN}) and (\ref{eq:phistar}), it is plausible to infer the convergence $\nabla \phi^{(N)} \rightarrow \nabla \phi^*$, as the marginal constraints at the discrete time points $0,1/T, 2/T,...,T$ become dense and approach the continuous-time constraint in (\ref{eq:Cinfty}). 

To make this more precise, we note that since $\mathcal{M}^\infty(\pi) \subset \mathcal{M}^N(\pi)$ for all $N \in \mathbb{N}$, we have that 
\begin{equation}
\KL(\ora{\P}^{\pi_0,\nabla \log \pi + \nabla \phi^{(N)} }| \ora{\P}^{\pi_0,\nabla \log \pi}) \le \KL(\ora{\P}^{\pi_0,\nabla \log \pi + \nabla \phi^* }| \ora{\P}^{\pi_0,\nabla \log \pi}),   
\end{equation}
for all $N \in \mathbb{N}$. Since $\KL(\cdot| \ora{\P}^{\pi_0,\nabla \log \pi})$ has weakly compact sublevel sets \cite[Lemma 1.4.3]{dupuis2011weak}, we can extract a subsequence $\ora{\P}^{\pi_0,\nabla \log \pi + \nabla \phi^{(N_k)}}$ that converges weakly towards a path measure  $\widetilde{\P} \in \mathcal{P}(C([0,T];\mathbb{R}^d))$. To show that indeed  $\widetilde{\P} = \ora{\P}^{\pi_0,\nabla \log \pi + \nabla \phi^{*}}$, it is sufficient to note that by the constraints in (\ref{eq:UN}) those measures necessarily have the same finite-dimensional marginals, and to combine this observation with the continuity statement of Theorem 2.7.3 in \cite{billingsley2013convergence}, as well as the uniqueness from Proposition \ref{prop:annealing uniqueness}. The convergence of the drifts in the sense of $L^2([0,T]\times \mathbb{R}^d;\mathbb{R}^d)$ now follows from the lower semicontinuity of $\KL$ in combination with Girsanov's theorem.

% TODO: Conceptual MOTIVATION
% For PIS is quite easy it starts at a delta which is a bit impractical for mode collapse reasons and also numerically unstable

% For DDS whilst DDS starts at a Gaussian the scale of that Gaussian affects the scaling of the SDE , so it’s hard to actually tune sigma for DDS as there is a difficult trade off . with caus the Gaussian does not affect the SDE at all. Furthermore DDS is an equilibrium method whilst CAIS is not it converges in finite time abs this should be better for small K (edited) 

% For LCVI and MCD these methods are equilibrium based but also they don’t really learn a new/better sampler when they fit the NN ultimately the NN is just used to make the importance weights have less variance , our approach exploits this same NN to learn a better sampler grounded by theoretical insights

\begin{algorithm}[t]
\begin{algocolor}
\caption{ Controlled Monte Carlo Diffusions - Training}
\label{alg:Fldvi3}
\begin{algorithmic}
\Require $\pi_0$, $\pi_T$, $\pi_t$, $\sigma$, K step-sizes $\Delta t_{k}$,  network $f^{\phi}$
\For{$i$  in $\mathrm{epochs}$}
\State $ \ln \mW_T, \mY_T \sim$ \textbf{Algorithm \ref{alg:Fldvi2}}($\pi_0$, $\pi_T$, $\pi_t$, $\sigma$, $\{\Delta t_{k}\}_k$, $f^{\phi}$)
\State Gradient descent step $\nabla_\phi -\ln \mW_T$
\EndFor
\State \Return $f^\phi$
\end{algorithmic}
\end{algocolor}
\end{algorithm}

\subsection{Discretisation and Objective}
\label{app:discr}

In the setting of CMCD with KL divergence we can use the  EM approximations to the RND presented in Proposition \ref{prop:euler} to express the objective as:
\begin{align}
\mathcal{L}^{\mathrm{CMCD}}_{\KL}(\phi)  \! \approx\! \E\!\left[\!\ln\! \frac{{\pi}_0(\mY_0)}{\hat{\pi}(\mY_T)} \!\!\prod_{k=0}^{K-1} \!\!\frac{\gN(\mY_{t_{k+1}} | \mY_{t_{k}} + (\sigma^2\nabla \ln \pi_{t_k} + \nabla \ln \phi_{t_k}) (\mY_{t_{k}})\Delta t_k, 2 \sigma^2 \Delta t_k )}{\gN\!(\mY_{t_{k}} | \mY_{t_{k+1}}\!\! \!+ \!(\sigma^2 \nabla\! \ln \pi_{t_{k+1}}\! \!\!- \!\nabla \ln \phi_{t_{k+1}})(\mY_{t_{k+1}})\Delta t_k, 2 \sigma^2 \Delta t_k )}\!\!\right],
\end{align}
where the expectation is taken wrt to the EM approximation of the SDE in (\ref{eq:controlled annealing}), that is:
\begin{align}
  \mY_{t_{k+1}}  \sim  \gN( \mY_{t_{k}} + (\sigma^2 \nabla \ln \pi_{t_k} + \nabla \ln \phi_{t_k}) (\mY_{t_{k}})\Delta t_k,\; 2 \sigma^2 \Delta t_k ).
\end{align}

\subsection{Underdamped Langevin Dynamics}
\label{sec:underdamped}
In this section, we motivate the underdamped generalisation of CMCD which is used across our experiments. This parameterisation is inspired by the underlying theory for the overdamped approach, and we leave a rigorous extension of those foundations for future work. However, we have found this heuristic parameterisation to perform very well empirically.

Following \cite{geffner2023langevin} we parametrise as:
\begin{align}
\label{eq:controlled annealing2}
\fY_0, \fZ_0 &\sim   \gN(0,I) \otimes\!\pi_0, \nonumber \\
\d \fZ_t  &= \fY_t \d t , \nonumber \\
\d \fY_t  &\!=\! \left( \sigma^2\nabla \log \pi_t(\fZ_t)\! - \sigma^2 \fY_t +\! \nabla \phi_t (\fY_t, \fZ_t)\right) \d t + \sigma \!\sqrt{2 } \,\ora{\d} \fW_t. 
\end{align}
and it's time reversal as: 
\begin{align}
\label{eq:controlled annealing2rev}
\fY_T, \fZ_T &\sim  \gN(0,I) \otimes \!\pi_T , \nonumber \\
\d \fZ_t  &= \fY_t \d t , \nonumber \\
\d \fY_t  &\!=\! \left( -\sigma^2\nabla \log \pi_t(\fZ_t)\! + \sigma^2 \fY_t +\! \nabla \phi_t (\fY_t, \fZ_t)\right) \d t + \sigma \!\sqrt{2 } \,\ola{\d} \fW_t. 
\end{align}

\subsubsection{Time Discrteisation and Objective}
To discretise the above processes we follow the exact same discretisation scheme carried out in  \cite{geffner2023langevin}, however in this case we have to adapt the forward discretisation scheme to include the non-linear drift when carrying out the momentum re-sample step, specific details for this scheme can be found in Algorithms \ref{alg:Fldvi} and \ref{alg:Bldvi}. This discretisation in turn allows us to compute the discrete RND between these two processes which we require for Framework \ref{frm2}.

% \begin{algorithm}[ht]
% \caption{Forward transition $F_k(\fZ_{t_{k+1}}, \fY_{t_{k+1}} \vert \fZ_{t_{k}}, \fY_{t_{k}})$ obtained with modified Euler-Maruyama}
% \label{alg:fsimple}
% \begin{algorithmic}
% \State Resample momentum $\fY_{t_{k+1}} \sim \mathcal{N}(\fY_{t_{k}} (1 - \gamma \delta) + \delta \nabla \log \pi_{k\delta}(\fZ_{t_{k}}), 2\gamma\delta I)$.
% \State Update position $\fZ_{t_{k+1}} = \fZ_{t_{k}} + \delta \fY_{t_{k+1}}$.
% \State \Return $(\fZ_{t_{k+1}}, \fY_{t_{k+1}})$
% \end{algorithmic}
% \end{algorithm}

% \begin{algorithm}[ht]
% \caption{Backward transition $B_k(\fZ_{t_{k}}, \fY_{t_{k}} \vert \fZ_{t_{k+1}}, \fY_{t_{k+1}})$ obtained with modified Euler-Maruyama}
% \label{alg:bwsimple}
% \begin{algorithmic}
% \State Update position $\fZ_{t_{k}} = \fZ_{t_{k+1}} - \delta \fY_{t_{k+1}}$.
% \State Resample momentum $\fY_{t_{k}} \sim \mathcal{N}(\fY_{t_{k+1}}(1-\delta\gamma) - \delta \nabla \log \pi_{k\delta}(\fZ_{t_{k}}) + 2\delta \gamma s(k\delta, \fZ_{t_{k}}, \fY_{t_{k+1}}), 2\delta\gamma I)$.
% \State \Return $(\fZ_{t_{k}}, \fY_{t_{k}})$
% \end{algorithmic}
% \end{algorithm}

% OBAB

\begin{algorithm}[t]
\caption{Forward transition $F_{t_k}(\fZ_{t_{k+1}}, \fY_{t_{k+1}} \vert \fZ_{t_{k}}, \fY_{t_{k}})$}
\label{alg:Fldvi}
\begin{algorithmic}
\Require $\fZ_{t_{k}}$, $\fY_{t_{k}}$, step-size $\Delta t_{k}$
\State Re-sample momentum $\fY_{t_{k}}'  \sim \mathcal{N}\left(  \fY_{t_{k}}(1-\sigma \Delta {t_k}) + \nabla \phi_{t_k}( \fY_{t_{k}} ,\fZ_{t_{k}}) \Delta t_k , \;2 \sigma \Delta t_{k} I\right)$\\
\hspace{-0.21cm}$\left.\begin{array}{l} \vspace{0.05cm}
\mbox{Update} \, \fY_{t_{k}}'' = \fY_{t_{k}}' + \frac{\Delta t_{k}}{2} \nabla \log \pi_{t_k}(\fZ_{t_{k}})\\\vspace{0.05cm}
\mbox{Update} \, \fZ_{t_{k+1}} = \fZ_{t_{k}} + \Delta t_{k} \fY_{t_{k}}''\\
\mbox{Update} \, \fY_{t_{k+1}} = \fY_{t_{k}}'' + \frac{\Delta t_{k}}{2} \nabla \log \pi_{t_k}(\fZ_{t_{k+1}})
\end{array}\right\}
\left . \begin{array}{l}
\mbox{Leapfrog step}\\
\Phi(\fZ_{t_{k}}, \fY_{t_{k}}')
\end{array}\right .$
\State \Return $(\fZ_{t_{k+1}}, \fY_{t_{k+1}})$
\end{algorithmic}
\end{algorithm}

% BABO

\begin{algorithm}[t]
\caption{Backward transition $B_{t_k}(\fZ_{t_{k}}, \fY_{t_{k}} \vert \fZ_{t_{k+1}}, \fY_{t_{k+1}})$}
\label{alg:Bldvi}
\begin{algorithmic}
\Require $\fZ_{t_{k+1}}$, $\fY_{t_{k+1}}$, step-size $\Delta t_{k}$
\State
\hspace{-0.21cm}$\left.\begin{array}{l} \vspace{0.05cm}
\mbox{Update} \, \fY_{t_{k}}'' = \fY_{t_{k+1}} - \frac{\Delta t_{k}}{2} \nabla \log \pi_{t_k}(\fZ_{t_{k}})\\\vspace{0.05cm}
\mbox{Update} \, \fZ_{t_{k}} = \fZ_{t_{k+1}} - \Delta t_{k} \fY_{t_{k}}''\\
\mbox{Update} \, \fY_{t_{k}}' = \fY_{t_{k}}'' - \frac{\Delta t_{k}}{2} \nabla \log \pi_{t_k}(\fZ_{t_{k+1}})
\end{array}\right\}
\left . \begin{array}{l}
\mbox{Inverse leapfrog}\\
\Phi^{-1}(\fZ_{t_{k+1}}, \fY_{t_{k+1}})
\end{array}\right .$
\State Re-sample momentum $\fY_{t_{k}}  \sim \mathcal{N}\left(\fY_{t_{k}}'(1-\sigma \Delta {t_k}) - \nabla \phi_{t_k}( \fY_{t_{k}}' ,\fZ_{t_{k}}) \Delta t_k , \;2 \sigma \Delta t_{k} I\right)$
\State \Return $(\fZ_{t_{k}}, \fY_{t_{k}})$
\end{algorithmic}
\end{algorithm}

Now via Propostion 1 in \citep{geffner2023langevin} it follows that
\begin{align}
&\frac{\pi_0(\fY_0, \fZ_0)}{\pi_T(\fY_T, \fZ_T)}    \prod_{k=0}^{K-1} \frac{F_{t_k}(\fZ_{t_{k+1}}, \fY_{t_{k+1}} \vert \fZ_{t_{k}}, \fY_{t_{k}})}{B_{t_k}(\fZ_{t_{k}}, \fY_{t_{k}} \vert \fZ_{t_{k+1}}, \fY_{t_{k+1}})}  \nonumber \\
&\;\;=\frac{\pi_0(\fY_0, \fZ_0)}{\pi_T(\fY_T, \fZ_T)} \prod_{k=0}^{K-1} \frac{\mathcal{N}\left(\fY_{t_{k}}' \mid \fY_{t_{k}}(1-\sigma \Delta {t_k}) + \nabla \phi_{t_k}( \fY_{t_{k}} ,\fZ_{t_{k}}) \Delta t_k , 2 \sigma \Delta t_{k} I\right)}{\mathcal{N}\left(\fY_{t_{k}} \mid \fY_{t_{k}}'(1-\sigma \Delta {t_k}) - \nabla \phi_{t_k}( \fY_{t_{k}}' ,\fZ_{t_{k}}) \Delta t_k , 2 \sigma \Delta t_{k} I\right)}
\end{align}
then we can use the above discrete time RND to approximate the KL divergence between SDEs (\ref{eq:controlled annealing2}) and  (\ref{eq:controlled annealing2rev}) yielding our objective for the under dampened setting:

\begin{align}
   \mathcal{L}^{\mathrm{CMCD-{UD}}}_{\KL}(\!\phi\!) \!\approx\! \E\left[\!\ln\!\frac{\pi_0(\fY_0, \fZ_0)}{\pi_T(\fY_T, \fZ_T)} \!\!\prod_{k=0}^{K-1} \!\frac{\mathcal{N}\left(\fY_{t_{k}}'\! \!\mid \!\!\fY_{t_{k}}(1-\sigma \Delta {t_k}) + \nabla \!\phi_{t_k}\!( \fY_{t_{k}} ,\fZ_{t_{k}}) \Delta t_k , 2 \sigma \Delta t_{k} I\right)}{\mathcal{N}\left(\fY_{t_{k}} \!\!\mid \!\!\fY_{t_{k}}'(1-\sigma \Delta {t_k}) - \nabla \!\phi_{t_k}\!( \fY_{t_{k}}' ,\fZ_{t_{k}}) \Delta t_k , 2 \sigma \Delta t_{k} I\right)}\!\right]
\end{align}

Where the expectation is taken with respect to the discrete-time process in Algorithm \ref{alg:Fldvi}.

\section{Proofs}
\label{app:proof}
\subsection{Proof of Proposition \ref{prop:RND} (forward-backward Radon-Nikodym derivatives)}
 \begin{proof}
We begin with the forward Radon-Nikodym  derivative 
\begin{equation}
\label{eq:RN forward}
\log\left(\frac{\mathrm{d}\overrightarrow{\mathbb{P}}^{\mu,a}}{\mathrm{d}\overrightarrow{\mathbb{P}}^{\nu,b}}\right)(\fY) = \log\left(\frac{\mathrm{d}\mu}{\mathrm{d}\nu}\right)(\fY_0) + \tfrac{1}{\sigma^2}  \int_0^T (a_t-b_t)(\fY_t) \cdot \ora{\mathrm{d}}\fY_t + \tfrac{1}{2 \sigma^2} \int_0^T \left( b_t^2 -a_t^2\right)(\fY_t) \, \mathrm{d}t,
\end{equation}
following from Girsanov's theorem (see, for instance, \citet[Lemma A.1]{nusken2021solving} and substitute $\sigma u = a - b$). To compute the backward Radon-Nikodym derivative, we temporarily introduce the time-reversal operator $\mathcal{R}$, acting as $(\mathcal{R}\fY)_t := \fY_{T-t}$ on paths\footnote{Although pathwise definitions should be treated with care (because It\^o integrals are defined only up to a set of measure zero), the arguments can be made rigorous using the machinery referred to in Appendix \ref{app:stochana}.}, and as $(\mathcal{R}a)_t(\vy) := a_{T-t}(\vy)$ on vector fields. We then observe that 
\begin{equation} 
\nonumber
\log\left(\frac{\mathrm{d}\overleftarrow{\mathbb{P}}^{\mu,\mathcal{R}a}}{\mathrm{d}\overleftarrow{\mathbb{P}}^{\nu,\mathcal{R}b}}\right)(\mathcal{R}\fY) = \log\left(\frac{\mathrm{d}\overrightarrow{\mathbb{P}}^{\mu,a}}{\mathrm{d}\overrightarrow{\mathbb{P}}^{\nu,b}}\right)(\fY), 
\end{equation}
for instance by comparing the discrete-time processes in (\ref{eq:forward chain}) and (\ref{eq:backward chain}). Equivalently,
\begin{equation} 
\nonumber
\log\left(\frac{\mathrm{d}\overleftarrow{\mathbb{P}}^{\mu,a}}{\mathrm{d}\overleftarrow{\mathbb{P}}^{\nu,b}}\right)(\fY) = \log\left(\frac{\mathrm{d}\overrightarrow{\mathbb{P}}^{\mu,\mathcal{R}a}}{\mathrm{d}\overrightarrow{\mathbb{P}}^{\nu,\mathcal{R}b}}\right)(\mathcal{R}\fY), \end{equation}
since $\mathcal{R}^2$ is the identity.
Building on (\ref{eq:RN forward}), the backward Radon-Nikodym derivative therefore reads
\begin{subequations}
\label{eq:RN backward}
\begin{align}
\nonumber
\log\left(\frac{\mathrm{d}\overleftarrow{\mathbb{P}}^{\mu,a}}{\mathrm{d}\overleftarrow{\mathbb{P}}^{\nu,b}}\right)(\fY) &  = \log\left(\frac{\mathrm{d}\mu}{\mathrm{d}\nu}\right)((\mathcal{R}\fY)_0) + \tfrac{1}{\sigma^2}  \int_0^T ((\Rc a)_t- (\Rc b)_t)(\Rc \fY_t) \cdot \ora{\mathrm{d}} (\Rc \fY)_t 
\\
\nonumber
& + \tfrac{1}{2 \sigma^2} \int_0^T \left( (\Rc b)_t^2 -(\Rc a)_t^2\right)((\Rc \fY)_t) \, \mathrm{d}t,   \\
\nonumber
& = \log\left(\frac{\mathrm{d}\mu}{\mathrm{d}\nu}\right)(\fY_T) + \tfrac{1}{\sigma^2}  \int_0^T (a_t-b_t)(\fY_t) \cdot \ola{\mathrm{d}}\fY_t + \tfrac{1}{2 \sigma^2} \int_0^T \left( b_t^2 -a_t^2\right)(\fY_t) \, \mathrm{d}t,
\tag{\ref{eq:RN backward}}
\end{align}    
\end{subequations}
where the integrals have been transformed using the substitution $t \mapsto T - t$. The result in (\ref{eq:RND fb}) now follows by writing
\begin{equation}
\nonumber
\log\left(\frac{\mathrm{d}\overrightarrow{\mathbb{P}}^{\mu,a}}{\mathrm{d}\overleftarrow{\mathbb{P}}^{\nu,b}}\right)(\fY)  = \log\left(\frac{\mathrm{d}\overrightarrow{\mathbb{P}}^{\mu,a}}{\mathrm{d}\overrightarrow{\mathbb{P}}^{\Gamma_0,\gamma^+}}\right)(\fY)  + \log\left(\frac{\mathrm{d}\overleftarrow{\mathbb{P}}^{\Gamma_T,\gamma^-}}{\mathrm{d}\overleftarrow{\mathbb{P}}^{\nu,b}}\right)(\fY), 
\end{equation}
using the assumption $\ora{\P}^{\Gamma_0,\gamma^+} = \ola{\P}^{\Gamma_T,\gamma^-}$, and inserting (\ref{eq:RN forward}) as well as (\ref{eq:RN backward}).
\end{proof}

\subsubsection{Discretisation and connection to DNFs (diffusion normalising flows)} \label{app:dnf}

In this section we derive the main discretisation formula used in our implementations for the forward-backwards Radon-Nikodym derivative (RND).
\begin{proposition}\label{prop:euler}
Letting $\Gamma_0=\Gamma_T=\mathrm{Leb}$ and $\gamma^{\pm} = 0$, we have that the RND in (\ref{eq:RND fb}) is given by 
\begin{subequations}
\begin{align*}
\log \left(\frac{\d\fP^{\mu,a}}{\d\bP^{\nu,b}}\right)(\fY) & = \log\mu(\fY_0) - \log \nu(\fY_T) + \tfrac{1}{\sigma^2}\int_0^T a_t(\fY_t) \cdot  \ora{\d} \fY_t - \tfrac{1}{2\sigma^2} \int_0^T|| a_t(\fY_t)||^2 \, \d t
\\
&
-  \tfrac{1}{\sigma^2}\int_0^T b_t (\fY_t) \cdot \ola{\d} \fY_t  +\tfrac{1}{2\sigma^2}\int_0^T ||b_t (\fY_t)|||^2 \, \d t, \qquad \ora{\P}^{\mu,a}\text{-almost surely,}
\end{align*}
\end{subequations}
and admits the following discrete-time approximation up to constant terms in $a_t$ and $b_t$ (following Remark \ref{rem:conversion}),
\begin{align*}
 \log\! \left( \!  \widehat{\frac{\d\fP^{\mu,a}}{\d\bP^{\nu,b}}}\!\right)\!\!(\fY) \! = \!\!- \!\log \nu(\fY_T) \!+\!\!\!\sum_{i=0}^{K-1}\!\!\tfrac{1}{2 \sigma^2{(t_{i+1} - t_{i}})}  ||  \mY_{t_{i}} \! \! - \!\mY_{t_{i+1}} \!\!+b_{t_{i+1}}\!(\mY_{t_{i+1}}) (t_{i+1} \!- t_{i})||^2 \!\!+ \!\mathrm{const},
\end{align*}
when using the Euler-Maruyama discretisation: 
\begin{align*}
    \mY_{t_{i+1}} = \mY_{t_{i}} + a_{t_{i}}(\mY_{t_{i}}) (t_{i+1} - t_{i}) + \sqrt{(t_{i+1} - t_{i})}\sigma \xi, \;\; \qquad \xi \sim \gN(0,I).
\end{align*}

\end{proposition}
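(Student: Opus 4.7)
The plan is to apply Proposition~\ref{prop:RND} with the specified reference choice $\Gamma_0 = \Gamma_T = \mathrm{Leb}$ and $\gamma^{\pm} = 0$. With this choice the path-integral terms (\ref{eq:forward path integral})--(\ref{eq:backward path integral}) simplify (since $\gamma^\pm$ vanishes) to $\tfrac{1}{\sigma^2}\int_0^T a_t(\fY_t)\cdot\ora{\d}\fY_t - \tfrac{1}{2\sigma^2}\int_0^T \Vert a_t(\fY_t)\Vert^2 \,\d t$ and its backward analogue, while the boundary terms $\log(\d\mu/\d\Gamma_0)(\fY_0)$ and $\log(\d\nu/\d\Gamma_T)(\fY_T)$ reduce to $\log\mu(\fY_0)$ and $\log\nu(\fY_T)$, with $\mu,\nu$ identified with their Lebesgue densities. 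Since $\mathrm{Leb}$ is not a probability measure, I would justify this identification via the Gaussian-limit regularisation of \citet[Appendix A.1]{leonard2013survey} --- approximating $\mathrm{Leb}$ by $\mathcal{N}(0,R I)$, letting $R \to \infty$ and checking that the extraneous Gaussian contributions drop out --- exactly as already invoked in Appendix~\ref{sec:sampling}. Collecting terms yields the first displayed identity, $\ora{\P}^{\mu,a}$-almost surely.

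For the Euler--Maruyama approximation, I would discretise each of the four integrals consistently with Remark~\ref{rem:conversion}: the forward It\^o integral via the left-endpoint rule $\sum_i a_{t_i}(\mY_{t_i})\cdot(\mY_{t_{i+1}}-\mY_{t_i})$, the backward It\^o integral via the right-endpoint rule $\sum_i b_{t_{i+1}}(\mY_{t_{i+1}})\cdot(\mY_{t_{i+1}}-\mY_{t_i})$, and the quadratic Lebesgue integrals using the matching endpoint. The key algebraic manoeuvre is then to complete the square in each summand. Writing $\Delta := t_{i+1}-t_i$, $x := \mY_{t_{i+1}}$, $y := \mY_{t_i}$, $a := a_{t_i}(\mY_{t_i})$ and $b := b_{t_{i+1}}(\mY_{t_{i+1}})$, a direct manipulation yields
\begin{equation*}
\tfrac{1}{\sigma^2}\, a \cdot (x-y) - \tfrac{\Delta}{2\sigma^2}\Vert a\Vert^2 \;=\; -\tfrac{1}{2\sigma^2\Delta}\Vert (x-y) - a\Delta\Vert^2 + \tfrac{1}{2\sigma^2\Delta}\Vert x-y\Vert^2,
\end{equation*}
and under the Euler--Maruyama update $\mY_{t_{i+1}}-\mY_{t_i} = a\Delta + \sigma\sqrt{\Delta}\,\xi_i$ the first piece collapses to $-\tfrac{1}{2}\Vert\xi_i\Vert^2$, which does not depend on $a$ or $b$. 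The analogous backward completion reads
\begin{equation*}
-\tfrac{1}{\sigma^2}\, b \cdot (x-y) + \tfrac{\Delta}{2\sigma^2}\Vert b\Vert^2 \;=\; \tfrac{1}{2\sigma^2\Delta}\Vert y - x + b\Delta\Vert^2 - \tfrac{1}{2\sigma^2\Delta}\Vert x-y\Vert^2.
\end{equation*}

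The crucial observation that drives the compact final form is that the two $\tfrac{1}{2\sigma^2\Delta}\Vert x-y\Vert^2$ residuals from the forward and backward sides carry opposite signs and cancel exactly upon summing. What survives is precisely the backward square $\tfrac{1}{2\sigma^2\Delta t_i}\Vert \mY_{t_i} - \mY_{t_{i+1}} + b_{t_{i+1}}(\mY_{t_{i+1}})\Delta t_i\Vert^2$, together with the $a,b$-independent residual $-\tfrac{1}{2}\sum_i\Vert\xi_i\Vert^2$ and the boundary contribution $\log\mu(\mY_0) - \log\nu(\mY_T)$. Absorbing $\log\mu(\mY_0)$ and the noise residual into the $\mathrm{const}$ label --- both are random but independent of $a_t$ and $b_t$ --- recovers exactly the stated discrete-time approximation. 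The main obstacle, modest though it is, lies in the Lebesgue-reference regularisation in the continuous-time step; the rest reduces to careful bookkeeping of the completed-square identity and the exact cancellation noted above.
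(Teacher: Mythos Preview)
Your proposal is correct and follows essentially the same route as the paper: substitute $\Gamma_{0,T}=\mathrm{Leb}$, $\gamma^\pm=0$ into Proposition~\ref{prop:RND} for the continuous identity, then discretise per Remark~\ref{rem:conversion}, complete the square by adding and subtracting $\tfrac{1}{2\sigma^2\Delta}\Vert \mY_{t_{i+1}}-\mY_{t_i}\Vert^2$, and observe that under the Euler--Maruyama update the forward square reduces to the $a,b$-independent noise term $\tfrac{1}{2}\Vert\xi_i\Vert^2$. Your explicit treatment of the $\Vert x-y\Vert^2$ cancellation and the Lebesgue regularisation is slightly more detailed than the paper's terse ``direct computation,'' but the argument is the same.
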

\begin{proof}
The first part follows by direct computation.

From here on, we will use the notation $f_{t_i} = f_{t_i}(\fY_{t_i})$ for brevity. Following Remark \ref{rem:conversion} we have that
\begin{subequations}
\begin{align*}
\log \left(\frac{\d\fP^{\mu,a}}{\d\bP^{\nu,b}}\right)(\fY)  & \approx\log\mu(\fY_0) - \log \nu(\fY_T) \\
&+ \tfrac{1}{\sigma^2}\sum_{i=0}^{K-1} a_{t_i} \cdot  (\mY_{t_{i+1}} - \mY_{t_{i}}) - \tfrac{1}{2\sigma^2}\sum_{i=0}^{K-1} || a_{t_i}||^2 \, (t_{i+1} - t_{i})
\\
&
-  \tfrac{1}{\sigma^2}\sum_{i=0}^{K-1} b_{t_{i+1}}  \cdot  (\mY_{t_{i+1}} - \mY_{t_{i}})  +\tfrac{1}{2\sigma^2}\sum_{i=0}^{K-1} ||b_{t_{i+1}}|||^2 \, (t_{i+1} - t_{i}). 
\end{align*}
\end{subequations}
Adding and subtracting $||\mY_{t_{i+1}} - \mY_{t_{i}}||^2 / (\sigma^2(t_{i+1} - t_{i}))$ allows us to complete the square in each sum, resulting in: 
% \begin{subequations}
\begin{align} \label{eq:drnd}
\log \left(\frac{\d\fP^{\mu,a}}{\d\bP^{\nu,b}}\right)(\fY)  & \approx\log\mu(\fY_0) - \log \nu(\fY_T) - \sum_{i=0}^{K-1} \tfrac{1}{2 \sigma^2{(t_{i+1} - t_{i})}} || \mY_{t_{i+1}} - \mY_{t_{i}} - a_{t_i} (t_{i+1} - t_{i})||^2 \nonumber
\\
&
+ \sum_{i=0}^{K-1}\tfrac{1}{2 \sigma^2{(t_{i+1} - t_{i})}}  ||  \mY_{t_{i}}   - \mY_{t_{i+1}} +b_{t_{i+1}} (t_{i+1} - t_{i})||^2.
\end{align}
Now notice that under the Euler-Maruyama discretisation $|| \mY_{t_{i+1}} - \mY_{t_{i}} - a_{t_i} (t_{i+1} - t_{i})||^2 = (t_{i+1} - t_{i})\sigma^2 ||\xi||^2$ where $\xi \sim \gN(0,I)$ does not depend on $a_t$ or $b_t$; in particular when using $\KL$ for the divergence we have that $\E_{\fP^{\mu,a}_{\mathrm{EM}}}|| \mY_{t_{i+1}} - \mY_{t_{i}} - a_{t_i} (t_{i+1} - t_{i})||^2 = \sigma^2$ and thus: 
\begin{align}
     \log \left(   \widehat{\frac{\d\fP^{\mu,a}}{\d\bP^{\nu,b}}}\right)(\fY) \propto  \log\mu(\fY_0) - \log \nu(\fY_T) +  \sum_{i=0}^{K-1}\tfrac{1}{2 \sigma^2{(t_{i+1} - t_{i})}}  ||  \mY_{t_{i}}   - \mY_{t_{i+1}} +b_{t_{i+1}} (t_{i+1} - t_{i})||^2. \label{eq:dnf}
\end{align}
% \end{subequations}
\end{proof}
Notice that in expectation (for computing $\KL$), equation (\ref{eq:dnf}) matches equation (15) in \citet{zhang2021diffusion} and thus provides a theoretical backing to the objective used in \citet{zhang2021diffusion}. Resolving the term  $\E_{\fP^{\mu,a}_{\mathrm{EM}}}|| \mY_{t_{i+1}} - \mY_{t_{i}} - a_{t_i} (t_{i+1} - t_{i})||^2$ analytically may offer a variance reduction similar to the analytic calculations in \citet[Equation 14]{sohl2015deep} and the Rao-Blackwelizations of $\KL$ in \citet{ho2020denoising}.

% We can compute the path term in the RND sequentially via
% \begin{align}
%     \mY_{t_{i+1}} &= \mY_{t_{i}} + a_{t_{i}}(\mY_{t_{i}}) (t_{i+1} - t_{i}) + \sqrt{(t_{i+1} - t_{i})}\sigma \xi, \;\; \qquad \xi \sim \gN(0,I),\\
%     l_{i+1} &= l_i + \tfrac{1}{2 \sigma^2{(t_{i+1} - t_{i})}}  ||  \mY_{t_{i}}   - \mY_{t_{i+1}} +b_{t_{i+1}} (t_{i+1} - t_{i})||^2 \\
%     z_{i+1} &= z_i + \tfrac{1}{2 \sigma^2{(t_{i+1} - t_{i}})} || \mY_{t_{i+1}} - \mY_{t_{i}} - a_{t_i} (t_{i+1} - t_{i})||^2
% \end{align}
% and thus the training objective is given by (averaged over samples):
% \begin{align}
%     l_{K-1} + \log\mu(\fY_0) - \log \nu(\fY_{K-1})
% \end{align}
% and the IS estimator of the normalizing constant should be (averaged over samples):
% \begin{align}
%     e^{-(l_{K-1}- z_{K-1} + \log\mu(\fY_0) - \log \nu(\fY_{K-1}) )}
% \end{align}

\begin{remark} \label{rem:discrete_elbo}
    The time discretised RND in equation (\ref{eq:drnd}) can be expressed as the ratio of the transition densities corresponding to two discrete-time Markov chains $\mu(\vy_0)q^a(\vy_{1:K}|\vy_0) / p^b(\vy_{0:K-1}|\vy_{K})\nu(\vy_{K})$ with $\vy_{0:K}\sim q^a(\vy_{1:K}|\vy_0)\mu(\vy_0)$. As a result considering $\nu(x) =\hat{\nu}(\vx)/Z$ and the IS estimator $\hat{Z} = p^b(\vy_{0:K-1}|\vy_{K})\hat{\nu}(\vy_{K}) /\mu(\vy_0)q^a(\vy_{1:K}|\vy_0) $ it follows that  $\mathbb{E}_{ q^a(\vy_{1:K}|\vy_0)\mu(\vy_0)} [\ln \hat{Z}]$ is an ELBO of $\hat{Z}$ (e.g. $\mathbb{E}_{ q^a(\vy_{1:K}|\vy_0)\mu(\vy_0)} [\ln \hat{Z}] \leq  \ln {Z}$).
\end{remark}

Whilst superficially simple, Remark \ref{rem:discrete_elbo} guarantees that normalizing constant estimators arising from our discretisation do not overestimate the true normalizing constant. This result is beneficial in practice as it allows us to compare estimators possessing this property by selecting the one with the largest value. As highlighted in \cite{vargas2023denoising} many SDE discretisations can result in estimators that do not yield an ELBO: for example, the estimators used in \cite{berner2022optimal} can result in overestimating the normalising constant. Note similar remarks have been established in the context of free energy computation and the Jarzynski equality see \citet[Remark 4.5]{stoltz2010free}.

\subsection{Proof of Proposition \ref{prop:crooks} (Controlled Crooks' fluctuation theorem and the Jarzinky equality)}
\label{app:fluc}
\begin{proof}

Following the computations in Appendix \ref{app:cmcd kl} and using the formulae (\ref{eq:conversion app}), we compute 
\begin{subequations}
\begin{align*}
\log \left(\frac{\d\fP^{\mu,\sigma^2 \nabla \ln {{\pi}}+\nabla \phi}}{\d\bP^{\nu,-\sigma^2 \nabla \ln {{\pi}}+\nabla \phi}}\right)(\fY) = \log \mu(\fY_0) - \log \nu(\fY_T)
\\
+ \int_0^T \nabla \log {\pi}_t(\fY_t) \circ \d \fY_t - \tfrac{1}{\sigma^2} \int_0^T \nabla \phi_t(\fY_t) \cdot \sigma^2 \nabla \log {\pi}_t(\fY_t) \, \d t - \int_0^T \Delta \phi(\fY_t) \, \d t.
\end{align*}
\end{subequations}

Then via It\^o's lemma applied to the unnormalised annealed log target $\ln \hat{\pi}_t = \ln {\pi}_t - \ln \gZ_t$ we have
\begin{align*}
\ln \hat{\pi}_T(\fY_T) &- \ln \hat{\pi}_0(\fY_0) - \int_0^T \partial_t \ln \hat{\pi}_t(\fY_t) \,\d t =\int_0^T \nabla \log \hat{\pi}_t(\fY_t) \circ \d \fY_t = \int_0^T \nabla \log \pi_t(\fY_t) \circ \d \fY_t,
\end{align*}
%\begin{align*}
%    \ln \hat{\pi}_T(\fY_T) &- \ln \hat{\pi}_0(\fY_0) - \int_0^T \partial_t \ln \hat{\pi}_t(\fY_t) \,\d t \\
%    &=\frac{\sigma^2}{2} \int_0^T || \nabla \ln \pi_t ||^2(\fY_t)\,\d t + \frac{\sigma^2}{2}\int_0^T \Delta \ln \pi (\fY_t)\,\d t + \sigma \int_0^T \nabla \ln \pi_t(\fY_t) \cdot \ora{\d} \fW_t,
%\end{align*}
thus we arrive at
\begin{align} \label{eq:crooks_gen}
\nonumber
& \log \left(\frac{\d\fP^{\mu,\sigma^2 \nabla \ln {{\pi}}+\nabla \phi}}{\d\bP^{\nu,-\sigma^2 \nabla \ln {{\pi}}+\nabla \phi}}\right)(\fY) = \log\mu(\fY_0) - \log \nu(\fY_T) + \ln \hat{\pi}_T(\fY_T) - \ln \hat{\pi}_0(\fY_0)
\\
\nonumber
& - \int_0^T \partial_t \ln \hat{\pi}_t(\fY_t)\,\d t - \tfrac{1}{\sigma^2} \int_0^T \nabla \phi_t(\fY_t) \cdot \sigma^2 \nabla \log \pi_t(\fY_t) \, \d t - \int_0^T \Delta \phi_t(\fY_t) \, \d t.,
\end{align}
for arbitrary  initial and final densities $\mu$ and $\nu$. Crooks' generalised fluctuation theorem \citep{crooks1999entropy} now follows from taking $\phi = 0$, and the controlled version in Proposition \ref{prop:crooks} follows from $\mu = \pi_0$ and $\nu = \pi_T$.
Finally notice that:
\begin{align*}
1 & =   \E_{\fP^{\mu,\sigma^2 \nabla \ln {{\pi}}}} \left[\left(\frac{\d\fP^{\mu,\sigma^2 \nabla \ln {{\pi}}}}{\d\bP^{\nu,-\sigma^2 \nabla \ln {{\pi}}}}\right)^{-1} \right]
\\
& = \E_{\fP^{\mu,\sigma^2 \nabla \ln {{\pi}}}} \left[\exp\left(-\log\mu(\fY_0) + \log \nu(\fY_T) -  \ln \hat{\pi}_T(\fY_T) + \ln \hat{\pi}_0(\fY_0) + \int_0^T \partial_t \ln \hat{\pi}_t(\fY_t)\,\d t\right)\right],
\end{align*}
which implies the Jarzynski equality when considering the boundaries $\mu = \pi_0$ and $\nu=\pi_T$, resulting in: 
\begin{align}
\nonumber
\E_{\fP^{{\pi}_0,\sigma^2 \nabla \ln {{\pi}}}} \left[\exp\left( \int_0^T \partial_t \ln \hat{\pi}_t(\fY_t)\, \d t\right)\right]=e^{-(\ln \gZ_0 - \ln \gZ_T)}.
\end{align}
%Similarly considering the same boundaries in (\ref{eq:crooks_gen}) yields Crooks' identity:
%\begin{align}
%\nonumber
%\left(\frac{\d\fP^{\pi_0,\sigma^2 \nabla \ln {\pi}}}{\d\bP^{\pi_T, -\sigma^2 \nabla \ln {\pi}}}\right)(\fY) =e^{-\frac{1}{\sigma^2}(\gF_T -\gF_0)}e^{\frac{1}{\sigma^2} \gW_T}.
%\end{align}
\end{proof}

We want to highlight that in \cite[Equations 10-14]{vaikuntanathan2008escorted} ; we can see a similar formulation to our proposed generalised Crooks' fluctuation theorem, that said \cite{vaikuntanathan2008escorted} seems to pose this as a conjecture providing no rigorous proof. Furthermore, unlike our work, they do not formulate this result through SDEs, which we believe we are the first to do. In short, our work and concurrently \cite{zhong2023time} are the first to provide a rigorous treatment in establishing the escorted version of Crooks' fluctuation theorem.

Finally, we also note that \citep{yang2020unified} derive an akin fluctuation theorem for the uncontrolled (not escorted) setting (i.e. $\phi_t =0$) using the classic Girsanov Theorem for forward time SDEs. Whilst superficially it bares some similarity to our sketch  their content and overall result is quite different. In  \citep[Section A]{yang2020unified} rather than estimating $\frac{\d\fP^{\pi_0,\sigma^2 \nabla \ln {{\pi}}}}{\d\bP^{\pi_T,-\sigma^2 \nabla \ln {{\pi}}}}$ as we do they instead compute the RND between $\fP^{\pi_0,\sigma^2 \nabla \ln {{\pi}}}$ and $\fP^{\pi_T,-\sigma^2 \nabla \ln {{\pi}_{T-t}} + \sigma^2 \nabla \ln {{\rho}_{T-t}} }$, in simpler terms they compare the original forward time process with its reversal but simulated also going forward in time, effectively measuring how out of equilibrium the process is, whilst this has a nice physical interpretation the result is significantly less general than ours which allows to compare the RND between processes in different time directions as well as seamlessly derive a controlled variant of the fluctuation theorem. Furthermore, \cite{yang2020unified} uses symmetry arguments to obtain the RND between forward and reverse Brownian motions (see paragraph leading to Equation 78); which can be made more formal via the disintegration theorem and the use of Brownian bridges.

 \subsection{Proof of Proposition \ref{prop:em ipf}: EM $\iff$ IPF} 

In applications, IPF is faced with the following challenges:
\begin{enumerate}
\item 
The sequential nature of IPF, coupled with the need for each iteration to undergo comprehensive training as outlined in Section \ref{sec:Foellmer}, results in significant computational demands.
\item The reference distribution $r(\vx,\vz)$ (or the reference vector field $f_t$) enters the iterations in (\ref{eq:IPF}) only through the initialisation. As a consequence, numerical errors accumulate, and it is often observed that the Schr\"odinger prior is `forgotten' as IPF proceeds \citep{vargas2021solving,fernandes2021shooting,shi2023diffusion}. 
\end{enumerate}

Thus to address these challenges this section will focus on establishing the connection between EM and IPF which in turn will provide us with a family of algorithms that circumvent the sequential nature of IPF, further bridging variational inference and entropic optimal transport.

\begin{proof}
The proof proceeds by induction. 

To begin with, the update formula in (\ref{eq:ipf1}) implies that
\begin{equation}
\nonumber
\pi^{1}(\vx,\vz)  = \argmin_{\pi(\vx,\vz)} \left\{ \KL(\pi(\vx,\vz) || r(\vx,\vz)): \,\, \pi_\vx (\vx)  = \mu(\vx) \right\},
\end{equation}
recalling the initialisation $\pi(\vx,\vz) = r(\vx,\vz)$. To take account of the marginal constraint, we may write $\pi(\vx,\vz) = \mu(\vx) \pi(\vz|\vx)$ and vary over the conditionals $\pi(\vz|\vx)$. By the chain rule for $\KL$, we see that 
\begin{equation}
\label{eq:KL ipf chain}
\KL(\mu(\vx) \pi(\vz|\vx) || r(\vx,\vz)) = \KL(\mu(\vx)||r(\vx)) + \mathbb{E}_{\vx \sim \mu(\vx)} [\KL(\pi(\vz|\vx)|| r(\vz|\vx))],    
\end{equation}
which is minimised at $\pi(\vz|\vx) = r(\vz|\vx)$. From this, it follows that $\pi^1(\vx,\vz) = \mu(\vx)r(\vz|\vx)$ for the first IPF iterate. By assumption, the EM iteration is initialised in such a way that $q^{\phi_0}(\vz|\vx) = r(\vz|\vx)$, so that indeed $\pi^1(\vx,\vz) = q^{\phi_0}(\vz|\vx)\mu(\vx)$. 

The induction step is split (depending on whether $n$ is odd or even):

1.) First assume that the first line of (\ref{eq:em ipf}) holds for a fixed odd $ n \ge 1$. Our aim is to show that this implies that
\begin{equation}
\label{eq:em ipf proof1}
\pi^{n+1}(\vx,\vz) = p^{\theta_{(n+1)/2}}(\vx|\vz)\nu(\vz),    
\end{equation}
that is, the second line of (\ref{eq:em ipf}) with $n$ replaced by $n+1$. From (\ref{eq:ipf2}), we see that 
\begin{equation}
\nonumber
\pi^{n+1}(\vx,\vz) = \argmin_{\pi(\vx,\vz)} \left\{ \KL(\pi(\vx,\vz) || \pi^{n}(\vx,\vz)): \,\, \pi_\vz(\vz)  = \nu(\vz) \right\}.
\end{equation}
Again, we enforce the marginal constraint by setting $\pi(\vx,\vz) = \pi(\vz|\vx) \nu(\vz)$ and proceed as in (\ref{eq:KL ipf chain}) to obtain $\pi^{n+1}(\vx,\vz) = \pi^n(\vx|\vz)\nu(\vz)$. The statement in (\ref{eq:em ipf proof1}) is therefore equivalent to $\pi^n(\vx|\vz) = p^{\theta_{(n+1)/2}}(\vx|\vz)$. To show this, we observe from the EM-scheme in (\ref{eq:EM}) that
\begin{equation}
\nonumber
  \theta_{(n+1)/2} = \argmin_\theta \mathcal{L}_{\KL}(\phi_{(n-1)/2},\theta).  
\end{equation}
In combination with the second line of (\ref{eq:em ipf}) and the definition of $\mathcal{L}_{D}(\phi,\theta)$ in (\ref{eq:abstract loss}), we obtain
\begin{equation}
\nonumber
\theta_{(n+1)/2} = \argmin_{\theta} \KL (\pi^n(\vx,\vz)|| p^\theta(\vx|\vz)\nu(\vz)) =  \argmin_{\theta} \mathbb{E}_{\vz \sim \pi^n_\vz(\vz)} \left[\KL (\pi^n(\vx|\vz)|| p^\theta(\vx|\vz)) \right],  
\end{equation}
where the second equality follows from the chain rule for $\KL$ as in (\ref{eq:KL ipf chain}). Since by assumption the parameterisation of $p^\theta(\vx|\vz)$ is flexible, we indeed conclude that $\pi^n(\vx|\vz) = p^{\theta_{(n+1)/2}}(\vx|\vz)$.

2.) Assume now that the second line of (\ref{eq:em ipf}) holds for a fixed even $n \ge 2$. We need to show that the first line holds with $n$ replaced by $n+1$, that is, 
\begin{equation}
\nonumber
\pi^{n+1}(\vx,\vz) = q^{\phi_{n/2}}(\vz|\vx)\mu(\vx).    
\end{equation}
Using similar arguments as before, we see that $\pi^{n+1}(\vx,\vz) = \pi^n(\vx|\vz)\mu(\vx)$, so that it is left to show that $\pi^n(\vx|\vz) = q^{\phi_{n/2}}(\vz|\vx)$. Along the same lines as in 1.), we obtain
\begin{subequations}
\begin{align*}
\phi_{n/2} = \argmin_\phi \mathcal{L}_{\KL}(\phi,\theta_{n/2}) & = \argmin_\phi  \KL (q^\phi(\vz|\vx) \mu(\vx) || \pi^n(\vx,\vz))
\\
& =  \argmin_\phi \mathbb{E}_{\vx \sim \mu(\vx)} \left[ q^\phi(\vz|\vx) || \pi^n(\vz|\vx)\right].
\end{align*}
\end{subequations}
Again, this allows us to conclude, since the parameterisation in $q^\phi(\vz|\vx)$ is assumed to be flexible enough to allow for $q^{\phi_{n/2}}(\vz|\vx) = \pi^n(\vx|\vz)$. 

The proof for the path space IPF scheme is verbatim the same after adjusting the notation. For completeness, we consider a drift-wise version below.
\end{proof}

\textcolor{blue}{
\begin{remark}
[Extension to $f$-divergences]
\label{rem:fdiv}
The proof does not make use of specific properties of $D_{\mathrm{KL}}$, other than that it satisfies the chain rule. As a consequence, the statement of Proposition \ref{prop:em ipf} straightforwardly extends to other divergences with this property, in particular to $f$-divergences, see Proposition 6 in \citep{baudoin2002conditioned}.
\end{remark}}

\subsection{Drift based EM} \label{app:em_init}

As remarked in the previous subsection, the proof of the equivalence between IPF and EM in path space follows the exact same lines, replacing the chain rule of $\KL$ with the (slightly more general) disintegration theorem \citep{leonard2014some}. In this section, we provide a direct extension to the control setting, yielding yet another IPF-type algorithm and motivating certain design choices for the family of methods we study. 

% \begin{corollary}
% For the intialisation $\phi_0 = 0$, the alternating scheme
% \begin{align}
% \nonumber
% \theta_{n+1} = \argmin_{\theta} \KL(\ora{\P}^{\mu,f + \sigma^2\nabla \phi_n}, \ola{\P}^{\nu,f + \sigma^2\nabla \theta}), \quad   \phi_{n+1} =  \argmin_{\phi} \KL(\ora{\P}^{\mu,f + \sigma^2\nabla \phi}, \ola{\P}^{\nu,f + \sigma^2\nabla \theta_{n+1}})
% \end{align}
% agrees with the path space IPF iterations in \citet{bernton2019schr,vargas2021solving,de2021diffusion}.
% \end{corollary}
\begin{corollary}[Path space EM]
\label{cor:path space EM}
For the initialisation $\phi_0 = 0$ the alternating scheme
\begin{align}
\nonumber
    \theta_{n+1} &= \argmin_{\theta} \KL(\ora{\P}^{\mu,f+ \sigma^2\nabla \phi_n}, \ola{\P}^{\nu,f + \sigma^2 \nabla \theta}),  \\
    \phi_{n+1} &=  \argmin_{\phi} \KL(\ora{\P}^{\mu,f + \sigma^2\nabla \phi}, \ola{\P}^{\nu,f + \sigma^2 \nabla \theta_{n+1}})
\end{align}
agrees with the path space IPF iterations in \citep{bernton2019schr,vargas2021solving,de2021diffusion}.
\end{corollary}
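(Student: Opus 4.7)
The plan is to mirror the proof of Proposition \ref{prop:em ipf} in the path-space setting, replacing the chain rule for $\KL$ on joint distributions with the disintegration theorem for path measures \citep{leonard2014some}, and invoking Nelson's relation (Proposition \ref{prop:Nelson}) to translate between the drift parameterisation and projections of path measures. First I would set up the base case: with the initialisation $\phi_0 = 0$, the first forward measure is $\ora{\P}^{\mu, f}$, which coincides with the reference path measure used to seed the IPF iterates in \citep{bernton2019schr,vargas2021solving,de2021diffusion}, so the two schemes start from the same object.

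For the inductive step I would argue as follows. The constraint that the backward SDE (\ref{eq:back_sde}) terminates at $\fY_T \sim \nu$ plays the role of the marginal constraint $\pi_\vz = \nu$ in (\ref{eq:ipf2}). Using disintegration of the path measure over its terminal marginal, one obtains the analogue of (\ref{eq:KL ipf chain}),
\begin{equation}
\KL\bigl(\ora{\P}^{\mu,f+\sigma^2\nabla\phi_n} \,\big\|\, \ola{\P}^{\nu, f+\sigma^2\nabla\theta}\bigr) = \KL(\rho_T^{\mu,f+\sigma^2\nabla\phi_n} \| \nu) + \mathbb{E}\bigl[\KL(\ora{\P}^{\mu,f+\sigma^2\nabla\phi_n,|\,T=\vz} \| \ola{\P}^{\nu,f+\sigma^2\nabla\theta,|\,T=\vz})\bigr], \nonumber
\end{equation}
so that minimisation over $\theta$ corresponds to matching the bridge conditional on the endpoints, which is exactly the content of the IPF projection (\ref{eq:ipf2}) transcribed to path space. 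The symmetric argument handles the update for $\phi_{n+1}$ and the marginal constraint at $t=0$, giving the correspondence with (\ref{eq:ipf1}).

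The remaining point is to check that the drift parameterisation $f + \sigma^2 \nabla\theta$ (resp.\ $f + \sigma^2 \nabla \phi$) is sufficiently rich to realise the IPF minimiser. This is where the gradient form is essential: by the Helmholtz decomposition argument used in the proof of Proposition \ref{prop:sch inf} (combined with Proposition \ref{prop:Nelson}, which dictates how the forward and backward drifts are related through the score of the interpolating marginals), the minimising backward path measure admits a drift differing from $f$ by a gradient, and hence lies within our class. Under this, the inductive identities
\begin{equation}
\pi^n = \ora{\P}^{\mu,f+\sigma^2\nabla\phi_{(n-1)/2}} \,\,(n \text{ odd}), \qquad \pi^n = \ola{\P}^{\nu,f+\sigma^2\nabla\theta_{n/2}} \,\,(n \text{ even}), \nonumber
\end{equation}
propagate verbatim as in Proposition \ref{prop:em ipf}.

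The main obstacle I expect is precisely the parameterisation-sufficiency step: one needs to justify that, for a given forward path measure, the optimal backward drift is of gradient type relative to $f$. This relies on $f$ itself being assumed to arise from an SDE whose time-reversal (via Proposition \ref{prop:Nelson}) has drift lying in the same affine class, and on absolute continuity holding throughout the iterations so that Girsanov-type representations of the $\KL$ remain valid. Once this technical point is pinned down, the induction is a direct translation of the discrete-space argument, and the statement of the corollary follows.
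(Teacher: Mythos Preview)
Your proposal is correct and follows essentially the same route as the paper: induction, disintegration of the path measure at the constrained endpoint (the $\KL$ chain rule on path space via \citet{leonard2014some}), and identification of the minimiser as the time-reversal of the current forward measure reweighted to hit the prescribed marginal.

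The only point where you take a slight detour is the ``parameterisation-sufficiency'' step. You invoke the Helmholtz argument from Proposition~\ref{prop:sch inf} to justify that the optimal backward drift is of gradient form relative to $f$. The paper gets this for free, directly from Nelson's relation: the minimising backward measure is $\ora{\P}^{\mu,f+\sigma^2\nabla\phi_{n-1}}\tfrac{\d\nu}{\d\rho_T}$, whose backward drift is, by Proposition~\ref{prop:Nelson}, $f + \sigma^2\nabla\phi_{n-1} - \sigma^2\nabla\log\rho_t^{\mu,f+\sigma^2\nabla\phi_{n-1}}$. This is manifestly $f + \sigma^2\nabla(\phi_{n-1} - \log\rho_t)$, so $\theta_n = \phi_{n-1} - \log\rho_t$ and no Helmholtz decomposition is needed. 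Your concern about this step being the ``main obstacle'' is therefore resolved more simply than you anticipate.
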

\begin{proof}
For brevity let $\mathcal{L}_{\mathrm{FB}}(\phi,\theta) := \KL(\ora{\P}^{\mu,f + \sigma^2\nabla \phi}, \ola{\P}^{\nu,f + \sigma^2 \nabla \theta})$. Additionally, we parameterise the forwards and backwards SDEs with respective path distributions $\ora{\P}^{\mu,f + \sigma^2 \nabla \phi}$, $\ola{\P}^{\nu,f + \sigma^2 \nabla \theta}$ as:
\begin{align}
%\label{eq:Schr SDE 2}
\nonumber
\mathrm{d}\fY_t\!&= f_t(\fY_t) \, \mathrm{d}t + \sigma^2 \nabla \phi_t(\fY_t)\, \mathrm{d}t + \sigma \, \ora{\mathrm{d}}\fW_t, \quad \bY_0 \!\sim\! \mu,  \\
%\label{eq:Schr SDE back2}
\nonumber
\mathrm{d}\fY_t &= f_t(\fY_t) \, \mathrm{d}t + \sigma^2\nabla \theta_t(\fY_t)\, \mathrm{d}t + \sigma \, \ola{\mathrm{d}}\fW_t, \quad \fY_T \sim \nu.    
\end{align}

The proof will proceed quite similarly, so instead we will consider just the inductive step for the odd half bridge:
\begin{align}
\nonumber
     \theta_{n} = \argmin_{\theta} \mathcal{L}_{\mathrm{FB}}(\phi_{n-1},\theta).
\end{align}
We can show via the $\KL$ chain rule and the disintegration theorem  \citep{leonard2014some} that the above is minimised when $\theta$ satisfies $\ola{\P}^{\nu,f + \sigma^2\nabla \theta} = \ora{\P}^{\mu,f + \sigma^2 \nabla \phi_{n-1} } \frac{\dd\nu}{\dd \rho_T^{\mu, f + \sigma^2 \nabla \phi_{n-1}}}$ which corresponds to $\nabla \theta_{n} = \sigma^2 \nabla \phi_{n-1} - \sigma^2 \nabla \ln \rho_t^{\mu, f + \sigma^2 \nabla \phi_{n-1}}$ following Observation 1 in \citet{vargas2021solving}. Similarly as per Proposition \ref{prop:em ipf} the results will follow for the even half bridges.

\end{proof}

\paragraph{EM initialisation:}{
The above corollary provides us with convergence guarantees when performing coordinate descent on $\KL(\ora{\P}^{\mu,f + \sigma^2 \nabla \phi}, \ola{\P}^{\nu,f + \sigma^2\nabla \theta})$ subject to initialising $\phi_0=0$. n practice, this indicates that the way of initialising $\phi$ has a major impact on which bridge we converge to.

Thus as a rule of thumb we propose initialising $\phi_0=0$ such that we initialise at the Schr{\"o}dinger prior: then one may carry out joint updates as an alternate heuristic, we call this approach DNF (EM Init), as it is effectively a clever initialisation of DNF inspired by the relationship between IPF and EM.
}

% \subsection{Proof of Proposition 
% \ref{prop:HJB} (HJB-regularisers)}

\subsection{HJB-Regularizers }

\label{app:HJB}

% \textbf{Optimality conditions and HJB-regularisers.}
As per Section \ref{sec:IPF}, IPF resolves the nonquniqueness in minimising $\mathcal{L}_D(\phi,\theta)$ by performing the coordinate-wise updates (\ref{eq:EM}) starting from an initialisation informed by the Schr{\"o}dinger prior. On the basis of this observation, the joint updates $(\phi_{n+1},\theta_{n+1}) \leftarrow (\phi_n,\theta_n) - h \nabla_{\phi,\theta} \mathcal{L}_D(\phi,\theta)$ suggest themselves, in the spirit of VAEs \citep{kingma2019introduction} and as already proposed in this setting by \citet{neal1998view}. However, as is clear from the introduction, the limit $\lim_{n \rightarrow \infty}(\phi_n,\theta_n)$, can merely be expected to respect the marginals in (\ref{eq:coupling}), and no optimality in the sense of (\ref{eq:dynamic sch}) is expected. As a remedy, we present the following result:
\begin{proposition}
\label{prop:sch objective}
For $\lambda > 0$, a divergence $D$ on path space, and $\phi,\psi \in  C^{1,2}([0,T] \times \mathbb{R}^d;\mathbb{R})$,  let  
\begin{align}
\label{eq:sch objective} 
\mathcal{L}&^{\mathrm{Schr}}(\phi,\theta) := D(\ora{\P}^{\mu,f + \sigma^2\nabla \!\phi},\! \!\ola{\P}^{\nu,f + \sigma^2\nabla \theta}) + \lambda \mathrm{Reg}(\phi),
\end{align}
where $\mathrm{Reg}(\phi) = 0$ if and only if the HJB-equation $\partial_t \phi + f \cdot \nabla \phi + \frac{\sigma^2}{2} \Delta \phi + \tfrac{\sigma^2}{2} |\nabla \phi|^2 = 0$ holds. Then $\mathcal{L}_{\mathrm{Schr}}(\phi,\theta) = 0$ implies that the drift $a_t := \sigma^2 \nabla \phi_t$ solves (\ref{eq:dynamic sch}).
\end{proposition}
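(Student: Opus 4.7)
The plan is to exploit the additive structure of $\mathcal{L}^{\mathrm{Schr}}$: the divergence term enforces the marginal constraints of the dynamic Schr\"odinger problem (\ref{eq:dynamic sch}), while the HJB regulariser upgrades this to optimality via a standard stochastic-control verification argument. Setting $a^* := f + \sigma^2 \nabla \phi$, the hypothesis $D(\ora{\P}^{\mu, f + \sigma^2 \nabla \phi}, \ola{\P}^{\nu, f + \sigma^2 \nabla \theta}) = 0$ feeds directly into Framework \ref{frm2} to give $\ora{\P}^{\mu, a^*}_T = \nu$, so that $a^*$ is an admissible candidate in (\ref{eq:dynamic sch}); what remains is upgrading admissibility to minimality.

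For this I would compare $a^*$ against an arbitrary admissible competitor $\tilde{a} = f + \tilde{u}$ with $\ora{\P}^{\mu,\tilde{a}}_T = \nu$, driven by $\d \tilde{\fY}_t = \tilde{a}_t(\tilde{\fY}_t)\,\d t + \sigma\,\ora{\d}\fW_t$, and apply It\^o's formula to $t \mapsto \phi_t(\tilde{\fY}_t)$. The HJB identity forced by $\mathrm{Reg}(\phi) = 0$ cancels the $\left(\partial_t \phi + f\cdot\nabla\phi + \tfrac{\sigma^2}{2}\Delta \phi\right)$ contribution, leaving
\begin{equation}
\nonumber
\d \phi_t(\tilde{\fY}_t) = \left(\tilde{u}_t \cdot \nabla \phi_t - \tfrac{\sigma^2}{2}|\nabla \phi_t|^2\right)\!(\tilde{\fY}_t)\, \d t + \sigma\, \nabla \phi_t(\tilde{\fY}_t) \cdot \ora{\d} \fW_t.
\end{equation}
Integrating on $[0,T]$, taking expectation to annihilate the forward It\^o martingale, and invoking the elementary identity $u \cdot v - \tfrac{1}{2}|v|^2 = \tfrac{1}{2}|u|^2 - \tfrac{1}{2}|u - v|^2$ with $v = \sigma^2 \nabla \phi$ then produces the completion-of-squares relation
\begin{equation}
\nonumber
\E\!\left[\int_0^T \tfrac{1}{2\sigma^2}|\tilde{u}_t|^2(\tilde{\fY}_t)\, \d t \right] = \int \phi_T \, \d \nu - \int \phi_0 \, \d \mu + \E\!\left[\int_0^T \tfrac{1}{2\sigma^2}\bigl|\tilde{u}_t - \sigma^2 \nabla \phi_t \bigr|^2\!(\tilde{\fY}_t)\, \d t\right].
\end{equation}

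Because $\tilde{\fY}_0 \sim \mu$ and $\tilde{\fY}_T \sim \nu$ hold for \emph{every} admissible competitor, the boundary contribution $\int \phi_T\,\d\nu - \int \phi_0\,\d\mu$ depends only on the prescribed marginals and is a constant across the admissible class. Hence the Schr\"odinger cost on the left is minimised precisely when the nonnegative square-deviation term on the right vanishes, i.e.\ when $\tilde{u}_t = \sigma^2 \nabla \phi_t$ almost surely along $\tilde{\fY}$, which is exactly $\tilde{a} = a^*$. This pins $a^*$ down as the minimiser of (\ref{eq:dynamic sch}), as claimed.

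The main technical hurdle will be justifying the vanishing of the forward It\^o integral under expectation and the legitimacy of the It\^o expansion for arbitrary admissible competitors: these amount to integrability of $|\nabla \phi_t|^2$ along the competitor's law together with local Lipschitz/linear-growth control on $\tilde{a}$ in the spirit of Assumption \ref{ass:vector fields}. Restricting the admissible class of drifts accordingly, as is standard in stochastic-control verification theorems, handles both issues cleanly, while the $C^{1,2}$ regularity already assumed on $\phi$ underpins the pointwise validity of It\^o.
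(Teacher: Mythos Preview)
Your proposal is correct and takes a genuinely different route from the paper. The paper's argument (given as Proposition \ref{prop:HJB}) applies It\^o's formula to $\phi_t(\fY_t)$ along the \emph{Schr\"odinger prior} process with drift $f$, combines this with the HJB equation and Girsanov's theorem to show that the Radon--Nikodym derivative $\tfrac{\d \P^{\phi}}{\d \P}$ factorises as $\exp(-\phi_0(\fY_0))\exp(\phi_T(\fY_T))$, and then invokes the characterisation of the Schr\"odinger bridge as the unique admissible path measure whose density with respect to the prior has this product form \citep[Section 2]{leonard2013survey}. You instead apply It\^o along an \emph{arbitrary admissible competitor}, substitute the HJB identity, and complete the square to produce an explicit cost comparison---the classical verification-theorem route from stochastic control. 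Your approach is more self-contained, since it avoids appealing to the external product-form characterisation and yields the variational inequality directly; the paper's approach is shorter once that characterisation is taken as given and, importantly for the surrounding narrative, makes the reciprocal-class/factorisation structure visible, which is what underpins the connections drawn in Remarks \ref{rem:previous sch} and \ref{rem:con}.
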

%\begin{corollary}
%\label{cor:sch objective}
%For $\lambda > 0$ and $\phi,\psi \in  C^{1,2}([0,T] \times \mathbb{R}^d;\mathbb{R})$, let 
%\begin{align}
%\label{eq:sch objective}
%\;\mathcal{L}&_{\mathrm{Schr}}(\!\phi,\theta)\! :=\! D(\!\ora{\P}^{\mu,f + \sigma^2\nabla \!\phi},\! \!\ola{\P}^{\nu,f + \sigma^2\nabla \!\theta}) \!+ \!\lambda \mathrm{Reg}(\!\phi),
%\end{align}
%where $D$ is a divergence on path space, and $\mathrm{Reg}(\phi) = 0$ if and only if $\partial_t \phi + f \cdot \nabla \phi + \frac{\sigma^2}{2} \Delta \phi + \tfrac{\sigma^2}{2} |\nabla \phi|^2 = 0$ is satisfied. Then $\mathcal{L}_{\mathrm{Schr}}(\phi,\theta) = 0$ implies that the drift $a_t = \sigma^2 \nabla \phi_t$ solves (\ref{eq:dynamic sch}).
%\end{corollary}
The proof rests on an optimal control  reformulation of the Schr{\"o}dinger problem (see  Appendix \ref{app:proof}), identifying the HJB-equation as the missing link that renders joint minimisation of (\ref{eq:sch objective}) theoretically sound for solving (\ref{eq:dynamic sch}). 
The loss in (\ref{eq:sch objective}) has two important benefits compared to standard IPF. First, it circumvents the need for the sequential updates used in IPF, thereby simplifying and speeding up the optimisation procedure. Second, it enforces the Schr{\"o}dinger prior drift $f$ directly, rather than recursively via eq.  (\ref{eq:ipf1}), (\ref{eq:ipf2}). This prevents the prior from being forgotten, as is usually the case in regular IPF.
In Appendix \ref{app:HJB}, we detail possible constructions of $\mathrm{Reg}(\phi)$, discuss relationships to previous work, and evaluate the performance of the suggested approach in numerical experiments.

This result can be found in \citet[Proposition 5.1]{chen2021stochastic}, for instance, but since it is relevant to the connections pointed out in Remark \ref{rem:con} below, we present an independent proof:

\begin{proposition}[Mean-field game formulation]
\label{prop:HJB} 
    Assume that $\phi \!\in \!C^{1,2}([0,T] \!\!\times\!\! \mathbb{R}^d;\!\mathbb{R})$ satisfies the conditions:
\begin{enumerate}
\item
The forward SDE\begin{equation}
\label{eq:Schr SDE} \mathrm{d}\fY_t\!= f_t(\fY_t) \, \mathrm{d}t + \sigma^2 \nabla \phi_t(\fY_t)\, \mathrm{d}t + \sigma \ora{\mathrm{d}}\!\fW_t, \; \bY_0 \!\sim\! \mu    
\end{equation}admits a unique strong solution on $[0,T]$, satisfying moreover the terminal constraint $\fY_T \sim \nu$.
\item
The Hamilton-Jacobi-Bellmann (HJB) equation\begin{equation}\label{eq:HJB}
\partial_t \phi + f \cdot \nabla \phi + \frac{\sigma^2}{2} \Delta \phi + \tfrac{\sigma^2}{2} |\nabla \phi|^2 = 0 \end{equation}holds for all $(t,x) \in [0,T] \times \mathbb{R}^d$. 
\end{enumerate}
Then $a = \sigma^2\nabla \phi$ provides the unique solution to the dynamical Schr{\"o}dinger problem as posed in (\ref{eq:dynamic sch}). 
\end{proposition}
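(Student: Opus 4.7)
The plan is to treat Proposition \ref{prop:HJB} as a verification theorem for the stochastic optimal control problem underlying (\ref{eq:dynamic sch}). Writing a competitor drift as $a_t = f_t + u_t$, admissibility becomes: $u$ is such that $\d\fY_t = (f_t + u_t)(\fY_t)\,\d t + \sigma \ora{\d}\fW_t$ with $\fY_0 \sim \mu$ admits a strong solution satisfying $\fY_T \sim \nu$, and the objective reads $J(u) := \E\bigl[\tfrac{1}{2\sigma^2}\int_0^T |u_t(\fY_t)|^2\,\d t\bigr]$. I will show that $u^* := \sigma^2 \nabla \phi$ is the unique minimiser of $J$ over admissible controls, which gives both existence (via condition 1) and uniqueness of the Schr\"odinger drift.

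\textbf{It\^o + HJB, then complete the square.} First I would apply It\^o's formula to $t \mapsto \phi(t,\fY_t)$ along the $u$-controlled dynamics, obtaining
\[
\d\phi(t,\fY_t) = \bigl(\partial_t \phi + (f_t + u_t)\cdot \nabla \phi + \tfrac{\sigma^2}{2}\Delta \phi\bigr)(t,\fY_t)\,\d t + \sigma \nabla \phi(t,\fY_t)\cdot \ora{\d}\fW_t.
\]
Using (\ref{eq:HJB}) to replace $\partial_t \phi + f_t\cdot \nabla \phi + \tfrac{\sigma^2}{2}\Delta \phi$ by $-\tfrac{\sigma^2}{2}|\nabla \phi|^2$, integrating on $[0,T]$, and taking expectations (so the forward It\^o integral contributes zero) produces the $u$-independent identity
\[
C := \int \phi(T,\cdot)\,\d\nu - \int \phi(0,\cdot)\,\d\mu = \E\!\left[\int_0^T \bigl(u_t\cdot \nabla \phi_t - \tfrac{\sigma^2}{2}|\nabla \phi_t|^2\bigr)(\fY_t)\,\d t\right],
\]
since the initial and terminal marginals are fixed by admissibility. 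The algebraic identity $u\cdot \nabla \phi - \tfrac{\sigma^2}{2}|\nabla \phi|^2 = \tfrac{1}{2\sigma^2}|u|^2 - \tfrac{1}{2\sigma^2}|u - \sigma^2 \nabla \phi|^2$ then rearranges this into
\[
J(u) = C + \E\!\left[\tfrac{1}{2\sigma^2}\int_0^T |u_t - \sigma^2 \nabla \phi_t|^2(\fY_t)\,\d t\right],
\]
so $J(u) \ge C$ with equality iff $u_t = \sigma^2 \nabla \phi_t$ almost surely under the path measure. Combined with condition 1 (which says $u^* = \sigma^2 \nabla \phi$ is itself admissible and attains $C$), this gives both optimality and uniqueness.

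\textbf{Main obstacle.} The chief technical subtlety is justifying that the expectation of the forward stochastic integral vanishes and that It\^o's formula applies: for a generic admissible $u$ I only know $\phi \in C^{1,2}$, not uniform integrability of $\sigma \nabla \phi(t,\fY_t)$ along the controlled diffusion. I would handle this by a standard localisation along stopping times $\tau_n := \inf\{t : |\fY_t| \ge n\} \wedge T$, pass to the limit using Fatou's lemma on the nonnegative remainder $|u_t - \sigma^2 \nabla \phi_t|^2$, and restrict attention without loss of generality to competitors with $J(u) < \infty$, since any $u$ of infinite cost is automatically suboptimal. Everything else reduces to routine manipulation of the It\^o formula and Gaussian noise as in Appendix \ref{app:stochana}.
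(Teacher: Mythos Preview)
Your proof is correct and takes a genuinely different route from the paper's. The paper proceeds via Girsanov and the product-form characterisation of the Schr\"odinger bridge: it applies It\^o's formula to $\phi$ along the \emph{prior} SDE $\d\fY_t = f_t(\fY_t)\,\d t + \sigma\,\ora{\d}\fW_t$, combines this with the HJB equation to show that $\log\frac{\d\P^\phi}{\d\P}(\fY) = \phi_T(\fY_T) - \phi_0(\fY_0)$, and then concludes by invoking the fact (citing L\'eonard) that the Schr\"odinger bridge is uniquely characterised among couplings with the right marginals by its Radon--Nikodym derivative factoring as a product of functions of the two endpoints. Your argument is instead a direct verification theorem: It\^o along an arbitrary \emph{controlled} trajectory, then completion of the square to obtain $J(u) = C + \E\bigl[\tfrac{1}{2\sigma^2}\int_0^T |u_t - \sigma^2\nabla\phi_t|^2\,\d t\bigr]$ with $C$ fixed by the marginal constraints. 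Your route is more self-contained---it does not appeal to the Schr\"odinger potential decomposition---but pays for this by having to justify the martingale property along every competitor's dynamics (the localisation/Fatou step you flag), whereas the paper only applies It\^o once, along the fixed prior diffusion, so integrability is a one-off issue rather than uniform in $u$.
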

\begin{proof}
We denote the path measures associated to the SDE
\begin{equation}
\label{eq:Schr prior}
\mathrm{d}\fY_t = f_t(\fY_t) \, \mathrm{d}t  + \sigma \, \mathrm{d}\fW_t
\end{equation}
by $\mathbb{P}$ and the SDE (\ref{eq:Schr SDE}) by $\mathbb{P}^{\phi}$, respectively. According to Girsanov's theorem, the Radon-Nikodym derivative satisfies   \begin{equation}
\label{eq:Girsanov HJB}
\frac{\d \P^{\phi}} {\d \P} = \exp \left( \sigma \int_0^T  \nabla \phi_t (\fY_t) \cdot \d \fW_t - \tfrac{\sigma^2}{2} \int_0^T |\nabla \phi_t|^2(\fY_t) \, \mathrm{d}t  \right),  
\end{equation} 
provided that the marginals agree at initial time, $\P_0 = \P^\phi_0$.
Along solutions of (\ref{eq:Schr prior}), we have by It{\^o}'s formula
\begin{subequations}
\label{eq:proof HJB}
\begin{align}
\phi_T(\fY_T)\! - \!\phi_0(\fY_0) & =\! 
\int_0^T \!\!\!\!\partial_t \phi_t (\fY_t) \, \d t +\! \int_0^T \!\!\!\!(f_t \cdot \nabla \phi_t) (\fY_t) \, \d t + \tfrac{\sigma^2}{2}\!\!\int_0^T\! \!\!\!\!\!\Delta \phi_t (\fY_t) \, \d t +
\sigma \!\!\int_0^T\!\!\! \!\nabla \phi_t(\fY_t) \!\cdot\! \d \fW_t 
\nonumber
\\
& = - \tfrac{\sigma^2}{2} \int_0^T |\nabla \phi_t|^2(\fY_t) \, \d t  +
\sigma \int_0^T \nabla \phi_t(\fY_t) \cdot \d \fW_t =\log \left(\frac{\d \P^{\phi}} {\d \P} \right),
\nonumber \tag{\ref{eq:proof HJB}}
\end{align}
\end{subequations}
where we have used the HJB-equation (\ref{eq:HJB})  in the second line. Combining this with (\ref{eq:Girsanov HJB}), we see that 
\begin{equation}
\label{eq:product form}
\frac{\d \P^{\phi}} {\d \P}(\fY) = \exp\left( - \phi_0(\fY_0)\right)     \exp\left(\phi_T(\fY_T)\right).
\end{equation}
The claim now follows, since the unique solution to the Schr{\"o}dinger problem is characterised  by the product-form expression in  (\ref{eq:product form}, see \citet[Section 2]{leonard2013survey}, together with the marginal constraints $\P^\phi_0 = \mu$ and $\P^\phi_T = \nu$, which are satisfied by assumption. 
\end{proof}

\begin{remark}[Summarised relationship to previous work]
\label{rem:previous sch}
For $\lambda = 0$, coordinate-wise updates of $\mathcal{L}_{\mathrm{Schr}}(\phi,\theta)$ recover the IPF updates from \citet{de2021diffusion,vargas2021solving} according to Corollary \ref{prop:em ipf}. Note that $\mathcal{L}_{\mathrm{Schr}}$ is an unconstrained objective, in contrast to (\ref{eq:dynamic sch}); previous works \citep{koshizuka2022neural,zhang2023mean} have suggested incorporating the marginal constraints softly by adding penalising terms to the running cost in (\ref{eq:dynamic sch}). Those approaches require a limiting argument (from an algorithmic standpoint, adaptive tuning of a weight parameter) to recover the solution to (\ref{eq:dynamic sch}). In contrast, the conclusion of Proposition \ref{prop:sch objective} holds for arbitrary $\lambda > 0$. \citet{shi2023diffusion,peluchetti2023diffusion} suggest an algorithm involving reciprocal projections onto the reciprocal class associated to $f_t$. From \cite{clark1991local,thieullen2002reciprocal,roelly2013reciprocal}, the HJB-equation (\ref{eq:HJB}) is a local characteristic ($\mathrm{Reg}(\phi) = 0$ forces (\ref{eq:Schr SDE}) to be in the reciprocal class); hence $\mathrm{Reg}(\phi)$ in (\ref{eq:sch objective}) plays a similar role as the reciprocal projection \citep[Definition 3]{shi2023diffusion}, see Remark \ref{rem:con}. \citet{liu2022deep} suggest an iterative IPF-like scheme involving a temporal difference term \citep[Chapter 6]{sutton2018reinforcement}. As in \citet{nusken2021interpolating}, this is a an HJB-regulariser in the sense of Proposition \ref{prop:sch objective}, see Remark \ref{rem:con}. Finally, \citet[Theorem 5.3]{albergo2023stochastic} and \citet{gushchin2022entropic} develop saddle-point objectives for (\ref{eq:dynamic sch}).
\end{remark}

\begin{remark}[Connection to \emph{reciprocal classes} \citep{shi2023diffusion,peluchetti2023diffusion} and  \emph{TD learning} \citep{liu2022deep}]
\label{rem:con}
The calculation in equation (\ref{eq:proof HJB}) makes the relationship between the HJB equation (\ref{eq:HJB}) and reciprocal classes manifest (since reciprocal classes can essentially be  defined through the relationship (\ref{eq:product form}), see \citet{leonard2014reciprocal,roelly2013reciprocal}). Moreover, equation (\ref{eq:proof HJB}) showcases the relationship between TD learning \citep[Chapter 6]{sutton2018reinforcement} as suggested in  \citet{liu2022deep} and HJB regularisation. Indeed, 
\begin{equation}
\mathrm{Reg}_{\mathrm{BSDE}}(\phi) := \Var \left( \phi_T(\fY_T) - \phi_0(\fY_0) + \tfrac{\sigma^2}{2} \int_0^T |\nabla \phi_t|^2(\fY_t) \, \d t  -
\sigma \int_0^T \nabla \phi_t(\fY_t) \cdot \d \fW_t  \right),
\end{equation}
where the variance is taken with respect to the path measure induced by (\ref{eq:Schr prior}),
is a valid HJB-regulariser in the sense of Proposition \ref{prop:sch objective}. The equivalence between $\mathrm{Reg}_{\mathrm{BSDE}}(\phi) = 0$ and the HJB equation (\ref{eq:HJB}) follows from the theory of backward stochastic differential equations (BSDEs)\footnote{... not to be confused with reverse-time SDEs as in (\ref{eq:SDEs}).}, see, for example, the proof of Proposition 3.4 in \citet{nusken2021interpolating} and the discussion in \citet[Section 3.2]{nusken2021solving}.
\end{remark}

In the following, we present an analogue of Proposition \ref{prop:HJB} involving the backward drift \citep{chen2019stochastic}:
\begin{proposition}
\label{cor:backward HJB}
Assume that $\theta \in C^{1,2}([0,T] \times \mathbb{R}^d;\mathbb{R})$ satisfies the following two conditions:
\begin{enumerate}
\item
The backward SDE
\begin{equation}
\label{eq:Schr SDE back} \mathrm{d}\fY_t = f_t(\fY_t) \, \mathrm{d}t + \sigma^2\nabla \theta_t(\fY_t)\, \mathrm{d}t + \sigma \, \ola{\mathrm{d}}\fW_t, \qquad \fY_T \sim \nu    
\end{equation}
admits a unique strong solution on $[0,T]$, satisfying moreover the initial constraint $\fY_0 \sim \mu$.
\item
The Hamilton-Jacobi-Bellmann (HJB) equation
\begin{equation}
\label{eq:HJB back}
\partial_t \theta + f \cdot \nabla \theta - \frac{\sigma^2}{2} \Delta \theta + \tfrac{\sigma^2}{2} |\nabla \theta|^2 - \nabla \cdot f = 0 \end{equation}
holds for all $(t,x) \in [0,T] \times \mathbb{R}^d$. 
\end{enumerate}
Assuming furthermore that the solution to (\ref{eq:Schr SDE back}) admits a smooth positive density $\rho$,  
we have that $a_t = \nabla \theta_t + \sigma^2 \nabla \log \rho_t$ provides the unique solution to the Schr{\"o}dinger problem as posed in (\ref{eq:dynamic sch}).
\end{proposition}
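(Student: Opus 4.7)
\textbf{Proof plan for Proposition \ref{cor:backward HJB}.} The plan is to reduce the backward statement to the forward Proposition \ref{prop:HJB} via Nelson's relation, and thereby avoid repeating the Girsanov/It\^{o} argument in reverse time. First I would invoke Proposition \ref{prop:Nelson}: since the solution of (\ref{eq:Schr SDE back}) admits a smooth positive density $\rho_t$, the backward path measure $\ola{\P}^{\nu, f + \sigma^{2}\nabla \theta}$ coincides with a forward path measure $\ora{\P}^{\mu, a}$ whose drift is $a_t = (f_t + \sigma^{2} \nabla \theta_t) + \sigma^{2}\nabla \log \rho_t$. Setting $\phi_t := \theta_t + \log \rho_t$, this drift takes the form $a_t = f_t + \sigma^{2}\nabla \phi_t$ required by Proposition \ref{prop:HJB}, and the marginals $\fY_0 \sim \mu$, $\fY_T \sim \nu$ are inherited verbatim from the hypotheses of Proposition \ref{cor:backward HJB}.

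The crux is to verify that $\phi_t = \theta_t + \log \rho_t$ satisfies the forward HJB equation (\ref{eq:HJB}). I would pursue this via the Hopf-Cole substitution $u_t := e^{\phi_t} = e^{\theta_t} \rho_t$, under which (\ref{eq:HJB}) is equivalent to the linear equation $\partial_t u + f \cdot \nabla u + \tfrac{\sigma^{2}}{2}\Delta u = 0$. Applying the product rule to $u = v\rho$ with $v := e^{\theta}$, the left-hand side splits into three pieces: a $v$-part that, after Hopf-Cole, is controlled by the backward HJB (\ref{eq:HJB back}) for $\theta$ (which contributes $v(\nabla \cdot f + \sigma^{2}\Delta \theta)$ to $\partial_t v + f\cdot \nabla v + \tfrac{\sigma^{2}}{2}\Delta v$ after eliminating the $|\nabla \theta|^{2}$ term); a $\rho$-part controlled by the Fokker-Planck equation $\partial_t \rho + \nabla \cdot (\rho a) = \tfrac{\sigma^{2}}{2}\Delta \rho$ associated with the forward drift $a$ (which contributes $-\rho(\nabla \cdot f + \sigma^{2}\Delta \theta + \sigma^{2}\nabla \log \rho \cdot \nabla \theta)$); and a cross term $\sigma^{2}\nabla v \cdot \nabla \rho = \sigma^{2}v \nabla \theta \cdot \nabla \rho$. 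The $\nabla \cdot f$ and $\sigma^{2}\Delta \theta$ terms cancel between the first two pieces, and the remaining cross terms are equal and opposite, so the total vanishes.

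With the forward HJB for $\phi$ established, Proposition \ref{prop:HJB} applies to the forward SDE with drift $f + \sigma^{2}\nabla \phi$, which by Nelson's relation is the same path measure as (\ref{eq:Schr SDE back}); hence this process solves the dynamic Schr\"{o}dinger problem (\ref{eq:dynamic sch}). The main obstacle is the HJB verification: the backward HJB and the Fokker-Planck equation each carry several divergence, gradient, and Laplacian terms that must balance delicately, in particular the $-\nabla \cdot f$ contribution and the $-\tfrac{\sigma^{2}}{2}\Delta \theta$ sign flip (relative to the forward HJB). The Hopf-Cole substitution turns this otherwise tedious bookkeeping into the verification of a single linear identity for $v\rho$, which I expect to be the cleanest route. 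A secondary, more technical obstacle is ensuring that Nelson's relation applies rigorously in the generality assumed here; under the smoothness of $\rho$ and the standing Assumption \ref{ass:vector fields} on vector fields (see Appendix \ref{app:stochana}), this is standard.
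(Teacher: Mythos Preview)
Your reduction via Nelson's relation is correct and takes a genuinely different route from the paper. The paper does not reduce to Proposition \ref{prop:HJB}; instead it works directly with the forward--backward Radon--Nikodym derivative of Proposition \ref{prop:RND}: choosing the Lebesgue reference $\gamma^\pm = 0$, it computes $\log(\d\ora{\P}^{\mu,f}/\d\ola{\P}^{\nu,f+\sigma^2\nabla\theta})$ along the Schr\"odinger prior, applies the \emph{backward} It\^o formula to $t\mapsto\theta_t(\fY_t)$, and uses the backward HJB (\ref{eq:HJB back}) to collapse the expression to $\theta_0(\fY_0)-\theta_T(\fY_T)$ plus boundary terms, concluding via the product-form characterisation of the Schr\"odinger bridge exactly as in the proof of Proposition \ref{prop:HJB}. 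Your approach trades that stochastic-calculus argument for a PDE verification (Hopf--Cole plus Fokker--Planck for $\rho$), which is more elementary and makes the role of the $-\nabla\cdot f$ term and the sign flip on $\tfrac{\sigma^2}{2}\Delta\theta$ very transparent---they are precisely what is needed for the $v$-part and the $\rho$-part to cancel. The paper's argument is shorter and never introduces $\rho$ into the computation, but requires comfort with backward It\^o calculus and the forward/backward integral conversion. Both are valid; yours has the pedagogical advantage of explaining \emph{why} the backward HJB must look the way it does.
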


\begin{remark}
\label{rem:backward HJB}
 As opposed to \citet[equation (41)]{chen2016relation}, the HJB-equation  (\ref{eq:HJB back}) does not involve the time reversal of the Schr\"odinger prior; the form of the HJB equations is not uniquely determined. On the other hand, (\ref{eq:HJB back}) contains the divergence term $\nabla \cdot f$, which discourages us from enforcing this constraint in the same way as (\ref{eq:HJB}). An akin result can be found in \cite{liu2022deep} stated in terms of BSDEs.
\end{remark}

\begin{proof}[Proof of Corollary \ref{cor:backward HJB}]
Using the forward-backward Radon-Nikodym derivative in (\ref{eq:RND fb}), we compute 
\begin{align}
&\log \left( \frac{\d \ora{\P}^{\mu,f}}{\d \ola{\P}^{\nu,f + \sigma^2\nabla \psi}}\right) \!(\fY) \!=\! \log \left( \frac{\d \mu}{\d \mathrm{Leb}} \right) \!-\! \log \left( \frac{\d \nu}{\d \mathrm{Leb}}\right)  + \sigma\int_0^T  \! \! \!f_t(\fY_t) \cdot \d \fW_t - \sigma \!\!\int_0^T \! \! \! f_t(\fY_t)\! \cdot \!\ola{\d}\fW_t  \nonumber
\\
 &- \sigma \int_0^T \nabla \theta_t(\fY_t) \cdot \ola{\d}\fW_t + \tfrac{\sigma^2}{2} \int_0^T |\nabla \theta_t|^2 (\fY_t) \, \d t
 \nonumber
 \\
 &=  \log \left( \frac{\d \mu}{\d \mathrm{Leb}} \right) \! -  \!\log \left( \frac{\d \nu}{\d \mathrm{Leb}}\right) - \sigma \!\!\int_0^T   \!\!\!\! \!(\nabla \cdot f_t)(\fY_t) \, \d t - \sigma \int_0^T  \! \! \! \! \! \!\nabla\theta_t(\fY_t) \cdot \ola{\d}\fW_t + \tfrac{\sigma^2}{2} \!\!\int_0^T  \!\! \!|\nabla \theta_t|^2 (\fY_t) \, \d t.
 \nonumber
\end{align}
Here we have chosen $\ora{\gamma} = \ola{\gamma} = 0$, and $\Gamma_0 = \Gamma_T = \mathrm{Leb}$. The initial measure for the Schr{\"o}dinger prior is $\mu$, but the argument is unaffected by this choice (as the solution is independent of this). We now use the (backward) It{\^o} formula along the Schr{\"o}dinger prior,
\begin{equation}
\theta_t(\fY_T) - \theta_0(\fY_0)\! = \!\int_0^T \!\!\!\partial_t \theta_t (\fY_t) \, \d t + \int_0^T \!\!\!\nabla \theta_t(\fW_t) \cdot \ola{\d} \fW_t + \int_0^T \!\!\!\nabla \theta_t(\fY_t) \cdot f_t(\fY_t) \, \d t - \tfrac{1}{2} \!\int_0^T \!\!\!\Delta \theta_t(\fY_t) \, \d t.
\nonumber
\end{equation}
Using the HJB-equation (\ref{eq:HJB back}), we see that 
\begin{equation}
\log \left( \frac{\d \ora{\P}^{\mu,f}}{\d \ola{\P}^{\nu,f + \sigma^2\nabla \theta}}\right) (\fY) = \log \left( \frac{\d \mu}{\d \mathrm{Leb}} \right) - \log \left( \frac{\d \nu}{\d \mathrm{Leb}}\right) - \theta_t(\fY_T) + \theta_0(\fY_0), 
\end{equation}
and we can conclude as in the proof of Proposition \ref{prop:HJB}.
\end{proof}

%2.) To double-check, write the Fokker-Planck/HJB-system from Proposition \ref{prop:HJB},
%\begin{subequations}
%\begin{align}
%\partial_t \rho + \nabla \cdot %\left( \rho (f + \nabla %\phi)\right) = \tfrac{\sigma^2}%{2} \Delta \rho
%\\
%\partial_t \phi + f \cdot \nabla \phi + \tfrac{\sigma^2}{2} \Delta \phi + \tfrac{\sigma^2}{2} %|\nabla \phi|^2 = 0,
%\end{align} 
%\end{subequations}
%with boundary conditions $\rho_0 = \mu$, $\rho_T = \nu$.
%Now substituting $\theta = \phi - \log \rho$ gives the system
%\begin{subequations}
%\begin{align}
%\partial_t \rho + \nabla \cdot \left( \rho (f + \nabla \theta)\right) = - \tfrac{\sigma^2}{2} \Delta \rho
%\\
%\partial_t \psi + f \cdot \nabla \theta - \frac{\sigma^2}{2} \Delta \theta + \tfrac{\sigma^2}{2} |\nabla \theta|^2 - \nabla \cdot f = 0
%\end{align}
%\end{subequations}
%by direct calculation. On the way, we use 
%\begin{equation}
%\partial_t \log \rho = - \tfrac{\sigma^2}{2} \Delta \log \rho - \tfrac{\sigma^2}{2} |\nabla \log \rho|^2 - \nabla \log \rho \cdot (f + \nabla \psi) - \nabla \cdot(f + \nabla \psi),
%\end{equation}
%obtained from $\frac{1}{\rho} \Delta \rho = \Delta \log \rho + |\nabla \log \rho|^2$, in order to compute $\partial_t \psi = \partial_t (\phi - \log \rho) = \ldots$
%\end{proof}

\section{CMCD Experiments} \label{app:cmcd_exps}

In this section, we will cover further details pertaining to our experimental setup.

\subsection{ELBO experiments and Comparison to  \cite{geffner2023langevin} }

We compare our underdamped and overdamped CMCD variants against 5 datasets from \citet{geffner2023langevin}, which we describe in further detail below.
\begin{itemize}
\item \texttt{log\_sonar} ($d=61$) and \texttt{log\_ionosphere} ($d=35$) are Bayesian logistic regression models: $ x \sim \mathcal{N}(0, \sigma_w^2 I),y_i \sim \mathrm{Bernoulli}(\mathrm{sigmoid}( x^\top u_i))$ with posteriors conditioned on the 
\textit{sonar} and \textit{ionosphere} datasets respectively.

\item \texttt{brownian} ($d=32$) corresponds to the time discretisation of a Brownian motion:
\begin{align*}
    \alpha_{\mathrm{inn}} &\sim \mathrm{LogNormal}(0,2),\\
    \alpha_{\mathrm{obs}} &\sim \mathrm{LogNormal}(0,2) ,\\
    x_1 &\sim \gN(0, \alpha_{\mathrm{inn}}),\\
    x_i &\sim \gN(x_{i-1}, \alpha_{\mathrm{inn}}), \quad i=2,\hdots 20,\\
    y_i &\sim \gN(x_{i}, \alpha_{\mathrm{obs}}), \quad i=1,\hdots 30.
\end{align*}
inference is performed over the variables $\alpha_{\mathrm{inn}}, \alpha_{\mathrm{obs}}$ and $\{x_i\}_{i=1}^{30}$ given the observations $\{y_i\}_{i=1}^{10} \cup \{y_i\}_{i=20}^{30}$.

\item  \texttt{lorenz} ($d=90$)  is the discretisation of a highly stiff 3-dimensional SDE that models atmospheric convection:
$$
\begin{array}{rlrl}
x_1 & \sim \mathcal{N}(\text { loc }=0, \text { scale }=1) & \\
y_1 & \sim \mathcal{N}(\text { loc }=0, \text { scale }=1) & \\
z_1 & \sim \mathcal{N}(\text { loc }=0, \text { scale }=1) & & \\
x_i & \sim \mathcal{N}\left(\text { loc }=10\left(y_{i-1}-x_{i-1}\right), \text { scale }=\alpha_{\text {inn }}\right) & i=2, \ldots, 30 \\
y_i & \left.\sim \mathcal{N}\left(\text { loc }=x_{i-1}\left(28-z_{i-1}\right)-y_{i-1}\right), \text { scale }=\alpha_{\text {inn }}\right) & i=2, \ldots, 30 \\
z_i & \sim \mathcal{N}\left(\text { loc }=x_{i-1} y_{i-1}-\frac{8}{3} z_{i-1}, \text { scale }=\alpha_{\text {inn }}\right) & i=2, \ldots, 30, \\
o_i & \sim \mathcal{N}\left(\text { loc }=x_i, \text { scale }=1\right) & i=2, \ldots, 30
\end{array}
$$
where $\alpha_{\text {inn }}=0.1$ (determined by the discretization step-size used for the original SDE). The goal is to do inference over $x_i, y_i, z_i$ for $i=1, \ldots, 30$, given observed values $o_i$ for $i \in\{1, \ldots, 10\} \cup\{20, \ldots, 30\}$.

\item \texttt{seeds} ($d=26$) is a random effect regression model trained on the \textit{seeds} dataset:
$$
\begin{aligned}
& \tau \sim \operatorname{Gamma}(0.01,0.01) \\
& a_0 \sim \mathcal{N}(0,10) \\
& a_1 \sim \mathcal{N}(0,10) \\
& a_2 \sim \mathcal{N}(0,10) \\
& a_{12} \sim \mathcal{N}(0,10) \\
& b_i \sim \mathcal{N}\left(0, \frac{1}{\sqrt{\tau}}\right) \\
& i=1, \ldots, 21 \\
& \operatorname{logits}_i=a_0+a_1 x_i+a_2 y_i+a_{12} x_i y_i+b_1 \\
& i=1, \ldots, 21 \\
& r_i \sim \operatorname{Binomial}\left(\operatorname{logits}_i, N_i\right) \\
& i=1, \ldots, 21 \text {. } \\
&
\end{aligned}
$$

The goal is to do inference over the variables $\tau, a_0, a_1, a_2, a_{12}$ and $b_i$ for $i=1, \ldots, 21$, given observed values for $x_i, y_i$ and $N_i$.
% The data used was obtained from (Crowder, 1978), which models the germination proportion of seeds arranged in a factorial layout by seed and type of root.
\end{itemize}
For all target distributions, we follow the hyperparameter setup from \citet{geffner2023langevin} from their code repository\footnote{\url{https://github.com/tomsons22/LDVI}} for all baseline methods (ULA, MCD, UHA, and LDVI) as well as our overdamped and underdamped variants. We first pretrain the source distribution to a mean-field Gaussian distribution trained for $150,000$ steps with ADAM and a learning rate of $10^{-2}$. Following \citep{zhang2021differentiable, geffner2023langevin} we parametrise our curve of bridging densities $\pi_t = \pi_0^{1 - \beta_t} \pi_T^{\beta_t}$, where $\beta_t$ are a parametrised grid between 0 and 1 that are also trained jointly with other parameters. We then train for $150000$ iterations with a batch size of $5$, tuning learning rate between $[10^{-5}, 10^{-4}, 10^{-3}]$ picking the best one based on mean ELBO after training. For all methods, during training the mean-field source distribution is continued to be trained, as well as the discretisation step size and $\epsilon = \delta t\sigma$. For the underdamped methods we also train the damping coefficient $\gamma$, and for methods involving a score network, i.e. MCD, LDVI, CMCD and CMCD (UD), we train the networks which are chosen to be fully-connected residual networks with layer sizes of $[20, 20]$. In order to report the mean ELBO after training, we obtain 500 samples with 30 seeds and report an averaged value over them.

\textcolor{teal}{\subsection{$\ln Z$, sample quality experiments and comparison to  \citep{zhang2021path,vargas2023denoising}}}

\textcolor{teal}{
 Furthermore, we also include comparisons to a large-dimensional target distribution and two standard distributions with known $\ln Z$ replicated from \citet{vargas2023denoising}, which we summarise below.}
\textcolor{teal}{
\begin{itemize}
    \item \texttt{lgcp} ($d=1600$) is a high-dimensional Log Gaussian Cox process popular in spatial statistics \citep{moller1998log}. Using a $d = M \times M = 1600$ grid, we obtain the unnormalised target density $\mathcal{N}(x ; \mu, K) \prod_{i \in[1: M]^2} \exp \left(x_i y_i-a \exp \left(x_i\right)\right)$.
    \item \texttt{funnel} ($d=10$) is a challenging distribution given by \ \ $\pi_T(x_{1:10} = \mathcal{N}(x_1 ; 0, \sigma_f^2) \mathcal{N}(x_{2: 10} ; 0, \exp 
 (x_1) I)$, with $\sigma_f^2 = 9$
 \citep{neal2003slice}.
 \item \texttt{gmm} ($d=2$) is a two-dimensional Gaussian mixture model with three modes, given by the following target distribution 
 \begin{align*}
     \pi_T(x) &= \frac{1}{3} \mathcal{N}\left(x; \begin{bmatrix}3 \\ 0\end{bmatrix}, \begin{bmatrix}
         0.7 & 0 \\ 0 & 0.05
     \end{bmatrix} \right) + \frac{1}{3}\mathcal{N}\left(x; \begin{bmatrix}-2.5 \\ 0\end{bmatrix}, \begin{bmatrix}
         0.7 & 0 \\ 0 & 0.05
     \end{bmatrix} \right) \\ & \ \ + \mathcal{N}\left(x; \begin{bmatrix}2 \\ 3\end{bmatrix}, \begin{bmatrix}
         1 & 0.95 \\ 0.95 & 1
     \end{bmatrix} \right)
 \end{align*}
\end{itemize}
}

\textcolor{teal}{
For these target distributions, we follow the hyperparameter setup from \citet{vargas2023denoising} from their code repository\footnote{\url{https://github.com/franciscovargas/denoising_diffusion_samplers}}for the baseline methods of DDS and PIS, and replicate them as closely as possible for CMCD. Unlike the previous, we don't pretrain the mean-field Gaussian source distribution $\mathcal{N}(0, \sigma_{\text{init}}^2 I)$. We select the optimal learning rate in $[10^{-3}, 10^{-4}, 10^{-5}]$, the optimal standard deviation of the source distribution $\sigma_{\text{init}}$ in $[1, 2, 3, 4, 5]$ and the optimal $\alpha$ in $[0.1, 0.5, 1, 1.5, 2]$. Instead of training $\epsilon = \delta t \sigma$, we sweep over an optimal value in $[10^{-2}, 10^{-1}, 1]$. The models are trained with a batch size of $300$ for $11000$ steps, where we keep the source distribution parameters fixed, as well as $\epsilon$. For evaluation, we use 30 seeds with a batch size of 2000, and report average performance over the seeds. DDS and PIS use a 128-dimensional positional embedding, along with an additional network for the time parameters, however MCMD uses a regular score network. In order to make exact comparisons, we select differing network architecture sizes that result in an equivalent number of parameters for \texttt{funnel} and \texttt{gmm}. For \texttt{lgcp}, due to the high dimensionality of the dataset, we choose a small network for CMCD. We summarise these below. For \texttt{gmm} and \texttt{funnel}, it is possible to sample from the target distribution, and we report an OT-regularised distance ($\mathcal{W}_2^{\gamma}$) with a regularisation $\gamma = 10^{-2}$. Similar to the mean ELBO, we draw 2000 samples from the models and the targets, and average $\mathcal{W}_2^{\gamma}$ over 30 seeds. We use the Python Optimal Transport\footnote{\url{https://pythonot.github.io/}} library's default implementation of entropy-regularised distance. Results for comparisons to DNF can be found in Table \ref{tab:dnf}.
}

\begin{figure}
    \centering
    \includegraphics{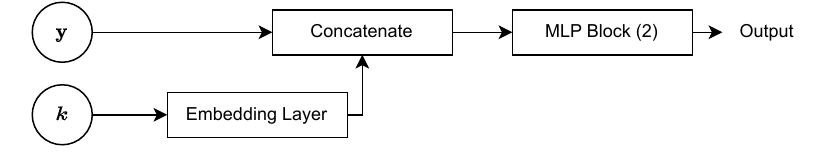}
    \caption{ \textcolor{orange}{Architecture from \citep{geffner2023langevin} used across experiments for our CMCD drift network. Softplus activations are used.}}
    \label{fig:cmcd_net}
\end{figure}

\begin{table}[htb!]
\caption{\textcolor{orange}{\textbf{Network Sizes for comparison.} Note that CMCD has less parameters for the despite the Funnel target despite the larger drift due to the PIS and DDS networks having an additional grad network.}}
\label{table:dds_hp}
\begin{center}
\begin{small}
\begin{sc}
\resizebox{0.4 \textwidth}{!}{%
\begin{tabular}{lccc}
\toprule
    & GMM & LGCP & Funnel \\ 
    \midrule
    DDS
    & $[10, 10]$
    & $[64, 64]$
    & $[64, 64]$
    \\
    PIS
    & $[10, 10]$
    & $[64, 64]$
    & $[64, 64]$
    \\
    CMCD
    & $[38, 38]$
    & $[64, 64]$
    & $[110, 110]$
    \\
    \bottomrule
  \end{tabular}
 }
\end{sc}
\end{small}
\end{center}
\vskip -0.1in
\end{table}

\textcolor{olive}{
\subsection{Comparisons with the log-variance loss - Mode collapse failure mode}}

\textcolor{olive}{
Here, we report performance using the log-variance divergence-based loss  \citep{nusken2021solving} introduced at the end of Section \ref{sec:annealing}, 
\begin{align}
  \mathcal{L}^{\mathrm{CMCD}}_{\mathrm{Var}}(\phi)\!  \approx\! \mathrm{Var}\!\left[\!\ln\! \frac{{\pi}_0(\mY_0)}{\hat{\pi}(\mY_T)} \!\!\prod_{k=0}^{K-1} \!\!\frac{\gN(\mY_{t_{k+1}} | \mY_{t_{k}} + (\nabla \ln \pi_{t_k} + \nabla \phi_{t_k}) (\mY_{t_{k}})\Delta t_k, 2 \sigma^2 \Delta t_k )}{\gN\!(\mY_{t_{k}} | \mY_{t_{k+1}}\!\! \!+ \!(\nabla\! \ln \pi_{t_{k+1}}\! \!\!- \!\nabla\! \phi_{t_{k+1}})(\mY_{t_{k+1}})\Delta t_k, 2 \sigma^2 \Delta t_k )}\!\right]\!,
\tag{\ref*{eq:main objective}}
\end{align}}

\textcolor{olive}{
A careful reader will note this loss simply consists of replacing the expectation in the KL loss with a variance. A major computational advantage of this loss is that the measure that the expectations are taken with respect to can be any measure and is not restricted to the forward or backward SDEs like in KL \citep{richter2023improved,richter2020vargrad,nusken2021solving}, this allows us to detach the samples and thus accommodating for a much more computational objective.}

\textcolor{olive}{
which we find performs quite well compared to our default loss function, especially for multimodal target distributions. We consider the very multi-modal mixture of Gaussian target distribution from \citet{midgley2022flow}, and report the ELBO and $\ln Z$ numbers in the table below. For this experiment, we use a batch size of 2000 and train neural networks with a size $[130, 130]$ for $150k$ iterations.}
\begin{table}[htb!]
\caption{\textcolor{olive}{\textbf{ELBO and $\ln Z$ on 40-GMM}}}
\label{table:logvar}
\begin{center}
\begin{sc}
\resizebox{0.7 \textwidth}{!}{%
\begin{tabular}{lccc}
\toprule
    & ELBO & $\ln Z$ & $\mathcal{W}_2$ \\ 
    \midrule
    \textit{log-variance} loss
    & -1.279 $\pm$ 0.096
    & -0.065 $\pm$ $0.101$
    & 0.0143 $\pm$ 0.001
    \\
    KL loss
    & -2.286 $\pm$ 0.1109
    & -0.244 $\pm$ 0.3309
    & 0.0441 $\pm$ 0.012
    \\
    \bottomrule
  \end{tabular}
 }
\end{sc}
\end{center}
\vskip -0.1in
\end{table}

\begin{figure}[H]
    \centering
    \includegraphics[width=0.4\textwidth]{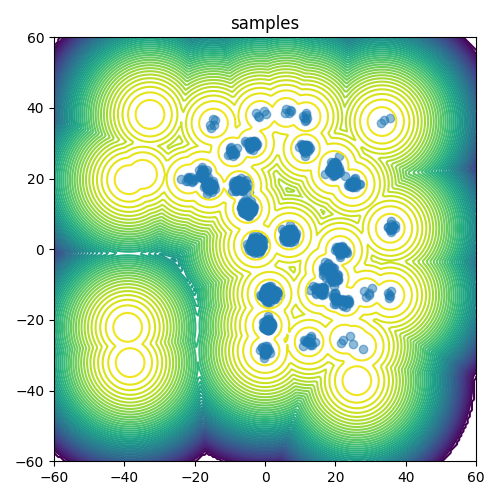}
    \includegraphics[width=0.4\textwidth]{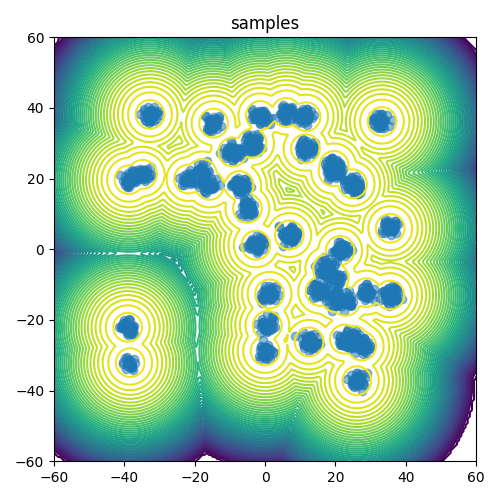}
    \caption{\textcolor{olive}{(left) 2000 samples drawn from the CMCD algorithm trained with the default loss function, and (right) 2000 samples drawn from the algorithm trained with the log-variance divergence-based loss. We can see that the default loss function misses many modes in the target distribution, whereas the log-variance loss has not missed any modes. We report final results after sweeping over $\Delta_{t_k}$ and learning rates for both methods, picking the one with the lowest training loss. We highlight that concurrent work by \cite{richter2023improved} explores the log variance divergence in more detail and proposes an akin general framework for diffusion-based sampling.}}
    \label{fig:vargrad_plot}
\end{figure}

\begin{figure}[H]
    \centering
    \includegraphics[width=0.6\textwidth]{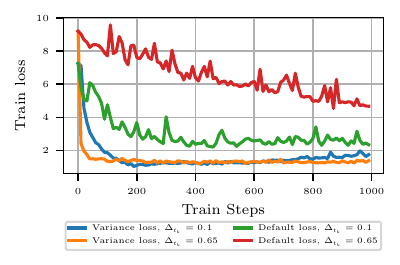}
    \caption{\textcolor{olive}{Plots showing training loss curves for the log-variance loss and the default loss for different values of $\Delta_{t_k}$. We find that a low value of $\Delta_{t_k}=0.1$ is needed in order to obtain a low training loss for the default loss, whereas the log-variance loss is much more robust to different values of $\Delta_{t_k}$. The x-axis reports an evaluation every 150 steps of training}}
    \label{fig:vargrad_training}
\end{figure}

\textcolor{teal}{\subsection{Specification and Tuning: SMC, AFT, and NF-VI}}
\textcolor{teal}{
We adopted the implementations\footnote{\url{https://github.com/google-deepmind/annealed_flow_transport}} provided by the studies in \cite{arbel2021annealed,matthews2022continual} and initialized them with default hyperparameters before fine-tuning.}

\textcolor{teal}{
\textbf{Sequential Monte Carlo (SMC).}
For SMC, we utilized 2000 particles sampled from a zero-mean unit Gaussian distribution, implementing re-sampling if the effective sample size (ESS) fell below 0.3. We employed Hamiltonian Monte-Carlo (HMC) for particle mutation, executing one Markov Chain Monte Carlo (MCMC) step after each annealing step. The number of leapfrog steps was fixed at 10, and an extensive grid search over different step sizes was conducted, consistent with \cite{arbel2021annealed}. This search spanned four different step sizes, contingent on the temperature, resulting in a grid search over 256 parameters. The finalized values are presented in Table \ref{table:mdn_hp}. For SMC, $K$ is defined by the number of temperatures. }

\textcolor{teal}{Furthermore, we report the results for SMC in Table \ref{table:mdn_hp} for the tuned hyperparameters used for each target. Please note that we were able to obtain similar $\ln Z$ values as in \cite{vargas2023denoising} suggesting SMC was well-tuned. Finally, for each result, we report the mean and standard deviations across 30 different seeds, results can be seen in Table \ref{table:mdn_hp}.}

\textcolor{teal}{\textbf{Annealed Flow Transport Monte Carlo (AFT).}
We maintained a similar setup to SMC, with a few adjustments: using 500 particles for training and 2000 for evaluation to accommodate the added complexity from the normalizing flows. We also decreased the number of temperatures and increased the number of MCMC steps to mitigate memory requirements from the flows. $K$ is defined as the number of temperatures $\times$ MCMC steps, with the latter fixed at 4, resulting in a maximum of 64 flows trained simultaneously. Inverse autoregressive flows (IAFs) were employed in all experiments except for \textit{lgcp}, using a neural network with one hidden layer whose dimension matches the problem's dimensionality. For \textit{lgcp}, a diagonal affine flow was used due to memory constraints arising from the high dimensionality. AFT flows were trained for 300 iterations until convergence. }

%while CRAFT used 800 iterations, ensuring the convergence of both algorithms.

\textcolor{teal}{\textbf{Variational Inference with Normalizing Flows (VI-NF).}
We utilized the same flows as for AFT. In this case, $K$ denotes the number of flows to stack. The flows were trained over a total of 2000 iterations with a batch size of 500. For some targets}

\begin{table}[htb!]
\caption{\textcolor{teal}{\textbf{Tuned MCMC Step Sizes.}}}
\label{table:mdn_hp}
\begin{center}
\begin{small}
\begin{sc}
\resizebox{\textwidth}{!}{%
\begin{tabular}{lcccccccccc}
\toprule
    & gmm & lgcp & lorenz & brownian  & log\_sonar & log\_ionosphere & seeds & funnel \\ 
    \midrule
    $\Delta t$
    & $[0.5, 0.5, 0.5, 0.3]$
    & $[0.3, 0.3, 0.2, 0.2]$
    & $[0.01, 0.01, 0.008, 0.01]$
    & $[0.2,0.2,0.05,0.05]$
    & $[0.2,0.05,0.2,0.2]$
    & $[0.1,0.2,0.2,0.2]$
    & $[0.2,0.1,0.05,0.01]$
    & $[0.05, 0.2, 0.2, 0.05]$
    \\
    \bottomrule
  \end{tabular}
 }
\end{sc}
\end{small}
\end{center}
\vskip -0.1in
\end{table}

\begin{table}[htb!]
\caption{\textcolor{teal}{\textbf{SMC Results.} $ \text{ELBO}$ and $\ln Z$ values for a different number of steps $K$ and experiments.}}
\label{table:mdn_hp_2}
\begin{center}
\begin{small}
\begin{sc}
\resizebox{\textwidth}{!}{%
\begin{tabular}{lcccccccccc}
\toprule
    $\ln Z$ & gmm & lgcp & lorenz & brownian  & log\_sonar & log\_ionosphere & seeds & funnel \\ 
    \midrule
    $K=8$
& $-0.536 \pm 0.042$ 
& $-364.074 \pm 7.797$ 
& $-87502.352 \pm 4004.495$ 
& $-63.32 \pm 8.016$ 
& $-178.589 \pm 2.784$ 
& $-204.594 \pm 3.049$ 
& $-108.676 \pm 1.221$ 
& $-1.013 \pm 0.116$ 
    \\
    $K=16$
& $-0.255 \pm 0.034$ 
& $-135.207 \pm 4.665$ 
& $-42148.287 \pm 1047.478$ 
& $-28.714 \pm 3.71$ 
& $-137.691 \pm 1.656$ 
& $-149.107 \pm 1.088$ 
& $-88.068 \pm 0.467$ 
& $-0.65 \pm 0.1$ 
    \\
    $K=32$
& $-0.119 \pm 0.017$ 
& $86.106 \pm 5.989$ 
& $-19288.267 \pm 834.52$ 
& $-12.23 \pm 2.212$ 
& $-120.557 \pm 0.613$ 
& $-127.964 \pm 0.394$ 
& $-79.89 \pm 0.273$ 
& $-0.408 \pm 0.17$ 
    \\
    $K=64$
& $-0.059 \pm 0.015$ 
& $269.566 \pm 7.832$ 
& $-8894.525 \pm 119.723$ 
& $-4.76 \pm 1.042$ 
& $-113.835 \pm 0.167$ 
& $-118.812 \pm 0.192$ 
& $-76.275 \pm 0.189$ 
& $-0.359 \pm 0.087$ 
    \\
    $K=128$
& $-0.029 \pm 0.009$ 
& $390.33 \pm 5.427$ 
& $-5419.678 \pm 90.362$ 
& $-1.675 \pm 0.442$ 
& $-110.901 \pm 0.094$ 
& $-114.827 \pm 0.307$ 
& $-74.774 \pm 0.097$ 
& $-0.255 \pm 0.108$ 
    \\
    $K=256$
& $-0.013 \pm 0.006$ 
&$477.162 \pm 4.998$ 
& $-3745.218 \pm 68.342$ 
& $-0.131 \pm 0.22$ 
& $-109.562 \pm 0.072$ 
& $-113.123 \pm 0.172$ 
& $-74.049 \pm 0.088$ 
& $-0.211 \pm 0.074$ 
    \\
    \midrule
     ELBO &  &  &  &   & &  &  &  \\
     \midrule
    $K=8$
& $0.002 \pm 0.066$ 
& $-236.087 \pm 9.623$ 
& $-56122.917 \pm 5402.094$ 
& $-10.147 \pm 3.427$ 
& $-117.499 \pm 4.049$ 
& $-123.772 \pm 3.689$ 
& $-75.183 \pm 1.447$ 
& $-0.417 \pm 0.236$ 
    \\
    $K=16$
& $-0.003 \pm 0.037$ 
& $-23.219 \pm 7.756$ 
& $-27397.2 \pm 1987.523$ 
& $-3.924 \pm 2.114$ 
& $-110.707 \pm 1.823$ 
& $-113.476 \pm 1.361$ 
& $-73.524 \pm 0.543$ 
& $-0.322 \pm 0.184$ 
    \\
    $K=32$
& $0.003 \pm 0.018$ 
& $174.797 \pm 7.241$ 
& $-12110.983 \pm 1204.2$ 
& $-0.426 \pm 1.415$ 
& $-108.574 \pm 0.547$ 
& $-112.048 \pm 0.519$ 
& $-73.459 \pm 0.29$ 
& $-0.215 \pm 0.222$ 
    \\
    $K=64$
& $0.001 \pm 0.015$ 
& $332.187 \pm 9.025$ 
& $-5360.819 \pm 306.407$ 
& $0.884 \pm 0.778$ 
& $-108.424 \pm 0.154$ 
& $-111.715 \pm 0.184$ 
& $-73.375 \pm 0.214$ 
& $-0.267 \pm 0.101$ 
    \\
    $K=128$
& $0.001 \pm 0.009$ 
& $430.838 \pm 6.441$ 
& $-3624.167 \pm 168.119$ 
& $1.008 \pm 0.27$ 
& $-108.395 \pm 0.087$ 
& $-111.603 \pm 0.298$ 
& $-73.436 \pm 0.095$ 
& $-0.2 \pm 0.124$ 
    \\
    $K=256$
& $0.002 \pm 0.006$ 
& $453.395 \pm 4.43$
& $-2811.161 \pm 106.68$ 
& $1.142 \pm 0.125$ 
& $-108.368 \pm 0.071$ 
& $-111.611 \pm 0.171$ 
& $-73.413 \pm 0.087$ 
& $-0.181 \pm 0.081$ 
    \\
    \bottomrule
  \end{tabular}
 }
\end{sc}
\end{small}
\end{center}
\vskip -0.1in
\end{table}

\textcolor{teal}{
\subsection{Further Ablation with NF-style methods and AFT}
}

\textcolor{teal}{
We further run both flow models (AFT and NFVI) on all possible target distributions (subject to OOM errors). Results can be found in Table \ref{tab:allres}.}

\begin{table}[!ht]
\caption{\textcolor{teal}{\textbf{$\ln Z$ comparison.} $\ln Z$ values for a different number of steps $K$, experiments and methods. Not all methods could be evaluated on every $K$/experiment combination due to numerical instabilities or out-of-memory (OOM) problems.}}\label{tab:dnf}
    \centering
    \resizebox{\textwidth}{!}{%
    \begin{tabular}{l|l|cccccc}
\toprule
        \textbf{Dataset} & \textbf{Method} & $K=8$ & $K=16$ & $K=32$ & $K=64$ & $K=128$ & $K=256$ \\ \midrule
        funnel & CMCD & -0.3037 $\pm$ 0.1507 & -0.223 $\pm$ 0.1041 & -0.1805 $\pm$ 0.0773 & -0.1085 $\pm$ 0.1143 & -0.0573 $\pm$ 0.0444 & -0.01928 $\pm$ 0.0641 \\ 
        \textcolor{white}{log\_ionosphere} & VI-DNF & -0.3768 $\pm$ 0.2157 & -0.3517 $\pm$ 0.1627 & -0.2919 $\pm$ 0.0999 & -0.6941 $\pm$ 0.6841 & -0.1947 $\pm$ 0.1325 & -0.2124 $\pm$ 0.0637 \\ 
        ~ & VI-NF & -0.206$\pm$ 0.079 & -0.206$\pm$ 0.082 & -0.206$\pm$ 0.087 & -0.194$\pm$ 0.101 & -0.182$\pm$ 0.097 & -0.197$\pm$ 0.099 \\ 
        ~ & AFT & -0.875$\pm$ 0.543 & -0.395$\pm$ 0.351 & -0.348$\pm$ 0.192 & -0.271$\pm$ 0.227 & -0.235$\pm$ 0.139 & -0.196$\pm$ 0.111 \\ 
        % ~ & CRAFT & $-0.979 \pm 0.289$ & $-0.309 \pm 0.233$ & $-0.114 \pm 0.35$ &  $-0.035 \pm 0.407$ & $-0.143 \pm 0.122$ & $-0.119 \pm 0.102$ \\ 
        \midrule
        gmm & CMCD & -0.1358 $\pm$ 0.0839 & -0.01331 $\pm$ 0.1292 & 0.0095 $\pm$ 0.0495 & 0.00736 $\pm$ 0.0477 & -0.0004 $\pm$ 0.0368 & -0.0081 $\pm$ 0.0520 \\ 
        ~ & VI-DNF & -0.3676 $\pm$ 0.6314 & -0.258 $\pm$ 0.412 & -0.4983 $\pm$ 0.3878 & -0.4449 $\pm$ 0.5379 & -0.4652 $\pm$ 0.3223 & -0.204 $\pm$ 0.6381 \\ 
        ~ & VI-NF & -0.355$\pm$ 0.698 & -0.455$\pm$ 0.258 & -0.064$\pm$ 0.138 & -0.054$\pm$ 0.15 & -0.066$\pm$ 0.188 & -0.045$\pm$ 0.177 \\ 
        ~ & AFT & -0.336$\pm$ 0.372 & -0.006$\pm$ 0.082 & 0.02$\pm$ 0.068 & -0.016$\pm$ 0.042 & -0.003$\pm$ 0.029 & 0.001$\pm$ 0.026 \\ 
        % ~ & CRAFT & $0.012 \pm 0.087$ & $-0.011 \pm 0.041$ & $0.002 \pm 0.026$ & $0.001 \pm 0.025$ & $0.001 \pm 0.025$ & $0.001 \pm 0.029$ \\ 
        \midrule
        lgcp & CMCD & 491.059 $\pm$ 3.553 & 498.147 $\pm$ 2.624 & 502.705 $\pm$ 2.482 & 506.045 $\pm$ 1.761 & 508.165 $\pm$ 1.553 & 509.43 $\pm$ 1.242 \\ 
        ~ & VI-DNF & 424.733 $\pm$ 5.858 & 424.719 $\pm$ 5.855 & 424.714 $\pm$ 5.861 & 424.719 $\pm$ 5.860 & 424.7 $\pm$ 5.869 & 424.705 $\pm$ 5.896 \\ 
        ~ & AFT & 126.651$\pm$ 5.764 & 344.145$\pm$ 23.95 & 191.613$\pm$ 173.873 & 420.259$\pm$ 91.43 & 480.126$\pm$ 33.059 & 491.028$\pm$ 8.057 \\ 
%         ~ & CRAFT & $418.708 \pm 4.645$ &
% $476.4 \pm 3.919$ &
% $487.72 \pm 2.034$ &
% $465.453 \pm 4.925$ &
% OOM &OOM \\
\bottomrule
        % ~ & AFT &  &  &  &  &  &  \\ \bottomrule
    \end{tabular}
    }
\end{table}
\textcolor{teal}{
\begin{table}[!ht]
\caption{\textcolor{teal}{\textbf{ELBO comparison.} ELBO values for a different number of steps $K$, experiments and methods. Not all methods could be evaluated on every $K$/experiment combination due to numerical instabilities or out-of-memory (OOM) problems.}} \label{tab:allres}
    \centering
    \resizebox{\textwidth}{!}{%
    \begin{tabular}{l|l|cccccc}
\toprule
        \textbf{Dataset} & \textbf{Method} & $K=8$ & $K=16$ & $K=32$ & $K=64$ & $K=128$ & $K=256$ \\ 
        \midrule
        seeds & 
        CMCD &  $-74.501 \pm 0.049$ &
$-74.327 \pm 0.065$ &
$-74.142 \pm 0.05$ &
$-73.967 \pm 0.038$ &
$-73.8 \pm 0.032$ &
$-73.684 \pm 0.033$ \\ 
        ~ & 
        VI-NF&  $-73.563 \pm 0.013$  &  $-73.547 \pm 0.012$  &  $-73.574 \pm 0.012$  &  $-73.58 \pm 0.014$  &  $-73.621 \pm 0.014$  &  $-73.675 \pm 0.014$ \\ 
        ~ & AFT &  $-147.457 \pm 24.808$  &  $-116.134 \pm 8.157$  & $-99.032 \pm 6.321$ & $-87.436 \pm 1.53$  &  $-79.847 \pm 0.419$  &  $-76.364 \pm 0.188$ \\ 
        ~ & CRAFT &  $-146.973 \pm 1.531$ &
$-94.2 \pm 0.505$ &
$-80.985 \pm 0.344$ &
$-76.555 \pm 0.175$ &
$-74.979 \pm 0.143$ &
$-74.225 \pm 0.133$ \\
        \midrule
        log\_ionosphere & 
        CMCD &  $-113.211 \pm 0.089$ &
$-112.643 \pm 0.062$ &
$-112.643 \pm 0.062$ &
$-112.22 \pm 0.046$ &
$-111.98 \pm 0.04$ &
$-111.925 \pm 0.046$ \\ 
        ~ & VI-NF  & $-111.903 \pm 0.022$  &  $-111.902 \pm 0.022$  &  $-111.892 \pm 0.017$  &  $-111.881 \pm 0.017$ &  OOM & OOM \\ 
        ~ & AFT &  $-168.174 \pm 21.249$  &  $-138.733 \pm 8.374$  &  $-123.013 \pm 3.771$  &  $-118.644 \pm 0.891$  &  $-116.497 \pm 0.495$  &  $-114.905 \pm 0.781$ \\ 
%         ~ & CRAFT & $-523.681 \pm 7.42$ &
% $-241.686 \pm 3.525$ &
% $-159.795 \pm 1.177$ &
% $-130.35 \pm 0.575$ &
% $-118.997 \pm 0.305$ &
% $-114.558 \pm 0.226$  \\
        \midrule
        log\_sonar & 
        CMCD & $-112.274 \pm 0.124$ &
$-110.904 \pm 0.111$ &
$-110.459 \pm 0.106$ &
$-109.503 \pm 0.075$ &
$-109.608 \pm 0.066$ &
$-109.25 \pm 0.052$  \\ 
        ~ & VI-NF &  $-109.353 \pm 0.035$  &  $-109.346 \pm 0.031$  &  $-109.441 \pm 0.035$  &  $-109.94 \pm 0.044$  &  $-109.711 \pm 0.039$  &  OOM \\ 
        ~ & AFT &  $-203.249 \pm 12.506$  &  $-148.357 \pm 8.096$  &  $-129.772 \pm 3.057$  &  $-121.653 \pm 2.505$  &  $-114.911 \pm 0.331$  &  $-112.021 \pm 0.182$ \\ 
%         ~ & CRAFT & $-194.746 \pm 1.422$ &
% $-139.391 \pm 0.705$ &
% $-122.001 \pm 0.5$ &
% $-114.723 \pm 0.343$ &
% $-111.778 \pm 0.176$ &
% $-110.722 \pm 0.163$ \\
        \midrule
        lgcp & 
        CMCD &  $469.475 \pm 0.259$ &
$479.246 \pm 0.237$ &
$486.739 \pm 0.249$ &
$492.745 \pm 0.239$ &
$497.074 \pm 0.267$ &
$499.708 \pm 0.236$ \\ 
        ~ & AFT & $75.896 \pm 0.863$  &  $265.005 \pm 34.254$  &  $62.898 \pm 200.991$  &  $340.687 \pm 126.853$  & $417.916 \pm 50.35$  & $424.705 \pm 12.416$ \\ 
%         ~ & CRAFT & $382.327 \pm 0.317$ &
% $436.482 \pm 1.232$ &
% $445.207 \pm 1.947$ &
% $399.314 \pm 5.599$ &
% OOM & OOM  \\
        \midrule
        lorenz & 
        CMCD &  $-1180.797 \pm 0.184$ &
$-1180.797 \pm 0.184$ &
$-1176.514 \pm 0.154$ &
$-1174.309 \pm 0.148$ &
$-1172.453 \pm 0.153$ &
$-1170.826 \pm 0.15$ \\ 
        ~ & VI-NF & $-1499.102 \pm 0.84$  &  $-1471.798 \pm 0.582$  &  $-1439.648 \pm 0.274$  &  $-1433.536 \pm 0.316$  &  OOM &  OOM \\ 
        % ~ & AFT & 126.651$\pm$ 5.764 & 344.145$\pm$ 23.95 & 191.613$\pm$ 173.873 & 420.259$\pm$ 91.43 & 480.126$\pm$ 33.059 & 491.028$\pm$ 8.057 \\ 
        % ~ & CRAFT &  &  &  &  &  &  \\
        \midrule
        brownian & 
        CMCD &  $-0.753 \pm 0.075$ &
$-0.209 \pm 0.059$ &
$0.153 \pm 0.045$ &
$0.376 \pm 0.038$ &
$0.578 \pm 0.046$ &
$0.722 \pm 0.032$ \\ 
        ~ & VI-NF &  $0.733 \pm 0.019$  & $0.797 \pm 0.018$  & $0.816 \pm 0.018$  &  OOM  &  OOM  &  OOM \\ 
        % ~ & AFT & 126.651$\pm$ 5.764 & 344.145$\pm$ 23.95 & 191.613$\pm$ 173.873 & 420.259$\pm$ 91.43 & 480.126$\pm$ 33.059 & 491.028$\pm$ 8.057 \\ 
%         ~ & CRAFT & $-43.783 \pm 0.763$ &
% $-2.03 \pm 0.285$ &
% $-1.209 \pm 0.313$ &
% $-3.728 \pm 0.29$ &
% $-0.287 \pm 0.145$ &
% $-0.454 \pm 0.143$ \\
        %
        \bottomrule
    \end{tabular}
    }
\end{table}
}

% \textcolor{teal}{
\textcolor{teal}{
\subsection{Wallclock times for $\log Z$ calculation}
}
\textcolor{teal}{
In order to calculate the average wall-clock time for $\log Z$ calculation, we calculate the time it takes to draw $30$ seeds of $2000$ samples each from the methods below, and use these samples to calculate the mean and standard deviation of $\log Z$ across $30$ seeds.}
% }
\begin{table}[]
    \centering
    \begin{tabular}{lccc}
\toprule Method & Average Time (s) & Min Time (s) & Max Time (s) \\
\midrule CMCD (OD) & 9.665 & 5.592 & 21.475 \\
 % CMCD (UD) & 8.419 & 5.392 & 16.549 \\
 ULA & 9.204 & 4.673 & 20.721 \\
 UHA & 9.427 & 5.588 & 20.263 \\
 MCD & 9.204 & 4.673 & 20.721  \\
 % LDVI & 8.419 & 5.392 & 16.549 \\
\bottomrule
\end{tabular}
    \caption{\textcolor{teal}{Wallclock times for evaluation in seconds.}}
    \label{tab:my_label}
\end{table}

\begin{table}[]
\centering
\adjustbox{max width=\textwidth}{
\begin{tabular}{@{}lllll@{}}
\toprule
Mode     & ULA                         & MCD                                   & CMCD                                       & DDS / PIS                                         \\ \midrule
Sampling & $\mathcal{O}(K \cdot (d+ G(d)))$ & $\mathcal{O}(K \cdot (d+G(d)))$           & $\mathcal{O}(K \cdot ( d + G(d) + N(d)) )$ & $\mathcal{O}(K \cdot ( N_2 + G(d) + N_1(d)) )$ \\
ELBO     & $\mathcal{O}(K \cdot (d+G(d)))$ & $\mathcal{O}(K \cdot (d +G(d) + N(d)) )$ & $\mathcal{O}(K \cdot ( d + G(d) + N(d)) )$ & $\mathcal{O}(K \cdot ( N_2 + G(d) + N_1(d)) )$ \\ \bottomrule
\end{tabular}}
\caption{\textcolor{orange}{Sampling and loss calculation complexity across SDE based methods,  $K$ represents the number of integration steps, $G(d)$ represents the cost of evaluating the score of the target and $N(d)$ for evaluating the drift/score networks both quantities are dimension dependant. PIS and DDS have an additional grad network cost $N_2$ which is dimension independant. }}
\end{table}

\textcolor{orange}{
\subsection{Training time comparisons to SMC}}

\textcolor{orange}{In this section, we explore a total time comparison between our approach CMCD and SMC. }

\textcolor{orange}{As both methods are quite inherently different it is not immediately obvious how to carry out an insightful comparison. In order to do so we chose the LGCP which is our most numerically intense target and we phrase the following question:}

\begin{center}
\textcolor{orange}{“For how long do we have to train CMCD to outperform the best-run SMC”}
\end{center}

\textcolor{orange}{For this, we look at our Figure \ref{fig:simple} pane c) and we can see that at $K=8$ CMCD already outperforms SMC at $K=256$ with 2000 particles. So we choose these two approaches to compare to.  In Table \ref{tab:cmcdsmc} is a brief comparison of total time calculations, note we have included tuning time for SMC which is akin to our training time as without tuning SMCs hyperparameters ELBOs and ln Z estimations were much worse. We can observe that the total runtime for training and sampling CMCD to reach a better $\ln Z$ value does not exceed the time required to tune SMC.}

\begin{table}[]
\centering
\begin{tabular}{@{}llll@{}}
\toprule
Method & Train + Sample Time (min) & ln Z                & ELBO                  \\ \midrule
CMCD   & $33.12 \pm 0.12$          & $491.059 \pm 3.553$  & $469.475 \pm  0.2589$ \\ 
SMC    & $62.62 \pm 0.10$           & $477.162 \pm 4.998$ & $453.395 \pm 4.4300$    \\ \bottomrule
\end{tabular}
\caption{\textcolor{orange}{Training+ Tuning + Sampling time comparisons for CMCD and SMC at comparable ln Z estimates.}} \centering\label{tab:cmcdsmc}
\end{table}

\section{Regularised IPF-type Experiments}

For the purpose of completeness in this section, we empirically explore the regularised IPF-type objectives proposed in the main text. We explore a series of low-scale generative modelling experiments where the goal is to retain generative modelling performance whilst improving the quality of the bridge itself (i.e. solving the SBP problem better).

Across our experiments, we use $\KL$ and let $\Gamma_0=\Gamma_T=\mathrm{Leb}$, which can be simplified to the forward-backwards KL objective used in DNF \citep{zhang2014applications}, see Appendix \ref{app:dnf}. We use the Adam optimiser \citep{kingma2014adam} trained on 50,000 samples and batches of size 5000 following  \citet{zhang2021diffusion}. For the generative modelling tasks we use 30 time steps and train for 100 epochs whilst for the double well we train all experiments for 17 epochs (early stopping via the validation set) and 60 discretisation steps.  Finally note we typically compare our approach with $\lambda>0$ to DNF ($\lambda=0$),  with DNF initialised at the reference process, which we call DNF (EM Init), see Appendix \ref{app:em_init} for further details.

\subsection{2D toy targets -- generative modelling}

Here we consider the suite of standard 2D toy targets for generative modelling explored in \citet{zhang2021diffusion} In contrast to \citet{zhang2021diffusion} we consider the SDE $\dd \mY_t = -\sigma^2 \mY_t \,\dd t + \sigma \sqrt{2} \,\dd \mW_t$  as the Schr{\"o}dinger prior across methods.
We parametrise DNF and our proposed approach with the same architectures for a fair comparison. Furthermore, we incorporate the drift of the above Schr\"odinger prior into DNF via parameterising the forward drift as in  (\ref{eq:Schr SDE}), partly motivated by Corollary \ref{cor:path space EM}.

In order to assess the quality of the bridge we consider three different error metrics. Firstly we estimate $\KL$ between the Schr{\"o}dinger prior and the learned forward process (i.e. $\mathbb{E}_{\fY \sim \ora{\P}^{\mu,a}} \left[ \tfrac{1}{2 \sigma^2}\int_0^T \Vert a_t - f_t\Vert^2(\fY_t) \, \d t \right]$). Secondly, we evaluate $\KL\!(\!\ora{\P}^{\mu,f + \sigma^2\nabla \!\phi},\! \!\ola{\P}^{\nu,f + \sigma^2\nabla \!\theta})$ to obtain a proxy error between the learned and target marginals. Finally, we estimate the cross entropy between $\ora{\P}^{\mu,a}_T$ and $\nu$ to assess how well the constraint at time $T$ is met.

In Table \ref{tab:toy} we observe that similar values of $\KL$ are attained across both approaches in the tree, sierpinski, and  checkerboard datatsets whilst achieving significantly lower values of the SBP loss across all training sets, and for tree, swirl and checkerboard validation datasets.  At the same time, we can see that the cross-entropy errors are effectively the same across both approaches. Overall we can conclude that on the empirical measures over which we train our approach, we obtain a much better fit for the target Schr\"odinger bridge, and on the validation results we can see that we generalise to 3/5 datasets in improving the bridge quality whilst preserving the marginals to a similar quality.

\subsection{Double well -- rare event}

In this task we consider the double well potential explored in \citep{vargasshro2021,hartmann2013characterization} where the Schr{\"o}dinger prior is specified via the following overdamped Langevin dynamics $\dd \mY_t = -\nabla_{\mY_t} U(\mY_t) \, \dd t + \sigma \,\dd \mW_t$. The potential $U(\vy)$ typically models a landscape for which it is difficult to transport $\mu$ into $\nu$.

This is a notably challenging task as we are trying to sample a rare event and as noted by \citet{vargas2021solving} many runs would result in collapsing into one path rather than bifurcating. In Figure \ref{fig:wells} we can observe how our proposed regularised approach (\ref{fig:pinwell}) is able to successfully transport particles across the well whilst respecting the potential, whilst both variants of DNF using the EM-Init for $\phi$ (\ref{fig:nopinnwell}) and random init (\ref{fig:badnopinwell}) fail to respect the prior as nicely and do not bifurcate, with the random init in particular sampling quite inconsistent trajectories. Finally for reference we train a DNF model with $f_t =0$ and $\phi$ (\ref{fig:noprior}) initialised at random to illustrate  the significance of the initialisation of $\phi$.

\subsubsection{Double well potential}

We used the following potential \citep{vargas2021solving}:
\begin{align}
    U\left({x \choose y}\right) = \frac{5}{2}(x^2-1)^2+y^2 +  \frac{1}{\delta}\exp\left(-\frac{  x^2 +y^2}{\delta}\right),
\end{align}
with $\delta=0.35$, furthermore, we used the boundary distributions:
\begin{align*}
    \mu \sim \calN\left(\begin{pmatrix} -1 \\
    0\\
    \end{pmatrix} ,\begin{pmatrix} 0.0125
 & 0 \\
    0 & 0.15\\
    \end{pmatrix}\right),\;\; \nu \sim 
    \calN\left(\begin{pmatrix} 1 \\
    0\\
    \end{pmatrix} ,\begin{pmatrix} 0.0125
 & 0 \\
    0 & 0.15\\
    \end{pmatrix}\right).
\end{align*}
The Schr\"odinger prior is given by:
\begin{align}
    \d \mY_t = -\nabla_{\mY_t} U(\mY_t) \, \dd t + \sigma \, \dd \mW_t,
\end{align}
with $\sigma=0.4$. The terminal time is $T=1$. Furthermore, we employ the same exponential discretisation scheme as in the generative modelling experiments.

\begin{table*}[h]
\adjustbox{max width=\textwidth}{
    \begin{tabular}{@{}llllllllll@{}}
    \toprule
    \multicolumn{1}{c}{\multirow{2}{*}{Target}} & \multicolumn{1}{c}{\multirow{2}{*}{Method}} & \multicolumn{2}{c}{KL}                              & \multicolumn{2}{c}{SBP Loss}                        & \multicolumn{2}{c}{PINN Loss}                       & \multicolumn{2}{c}{Cross Ent}                       \\ \cmidrule(l){3-10} 
    \multicolumn{1}{c}{}                        & \multicolumn{1}{c}{}                        & \multicolumn{1}{c}{Val} & \multicolumn{1}{c}{Train} & \multicolumn{1}{c}{Val} & \multicolumn{1}{c}{Train} & \multicolumn{1}{c}{Val} & \multicolumn{1}{c}{Train} & \multicolumn{1}{c}{Val} & \multicolumn{1}{c}{Train} \\ \midrule
    \multirow{2}{*}{tree}                       & $\lambda =0.5$                             &         1.67±0.02 &          1.40±0.01 &    \textbf{47.84±1.58} &     \textbf{42.31±1.52} &       \textbf{0.06±0.00} &        \textbf{0.05±0.00} &      2.87±0.01 &       2.80±0.01      \\
                                                & DNF (EM Init)                                      &          1.63±0.02 &          1.39±0.01 &    55.33±1.79 &     49.60±1.68 &      1.74±0.04 &       1.64±0.04 &      2.88±0.01 &       2.80±0.01              \\ \midrule
    \multirow{2}{*}{olympics}                   & $\lambda =0.5$                          &         2.95±0.06 &          0.12±0.01 &    39.30±0.90 &      \textbf{25.24±0.62} &      \textbf{0.26±0.01} &        \textbf{0.10±0.00} &      2.49±0.01 &       2.77±0.02       \\
                                                & DNF (EM Init)                                       &          \textbf{2.70±0.05} &          0.02±0.01 &    40.20±0.77 &     38.30±1.53 &      1.64±0.04 &       2.05±0.08 &       \textbf{2.54±0.01} &       2.77±0.02        \\ \midrule
    \multirow{2}{*}{sierpinski}                 & $\lambda =0.5$                          &         2.31±0.01 &          2.20±0.01 &    28.54±1.49 &     \textbf{26.67±0.90} &       \textbf{0.04±0.00} &        \textbf{0.03±0.00} &      2.82±0.01 &       2.83±0.01       \\
                                                & DNF (EM Init)    &         2.30±0.01 &          2.20±0.00 &    30.87±1.93 &     29.53±1.18 &      7.25±0.14 &       7.22±0.14 &      2.80±0.01 &       2.82±0.02             \\ \midrule
    \multirow{2}{*}{swirl}                      & $\lambda =0.5$                        &        15.67±0.29 &          1.95±0.03 &    \textbf{121.81±1.94} &      \textbf{40.24±1.74} &       \textbf{1.01±0.03} &        \textbf{0.14±0.00} &      2.97±0.01 &        2.69±0.02     \\
                                                & DNF (EM Init)   &         \textbf{13.77±0.38} &          1.92±0.04 &   151.67±3.68 &     67.55±1.86 &      5.89±0.15 &       2.63±0.08 &      2.95±0.02 &       2.74±0.03  \\ \midrule
    \multirow{2}{*}{checkerboard}               & $\lambda =0.5$  &         4.79±0.01 &          4.70±0.01 &    \textbf{34.47±0.80} &     \textbf{33.70±0.91} &      \textbf{0.03±0.00} &      \textbf{ 0.02±0.00} &      2.82±0.00 &       2.81±0.01      \\
                                                & DNF (EM Init)  &         4.78±0.01 &          4.70±0.02 &    39.76±0.83 &     39.20±1.10 &      3.66±0.07 &       3.68±0.06 &      2.81±0.01 &       2.81±0.02   \\ \bottomrule
    \end{tabular}
}\vspace{-0.2cm}
\centering
 \caption{Generative Modelling Results comparing DNF \citep{zhang2021diffusion} ($\lambda=0$) to our PINN regualirsed approach  with $\lambda=0.5$. We observe that PINN regularisation obtains similar KL and Cross entropy losses to DNF whilst achieving lower distances to the prior.}\label{tab:toy}
\end{table*}
\begin{figure*}
    \centering
    \begin{subfigure}[t]{0.24\textwidth}
        \centering
        \includegraphics[width=\linewidth,trim={3cm 0 2cm 2cm},clip]{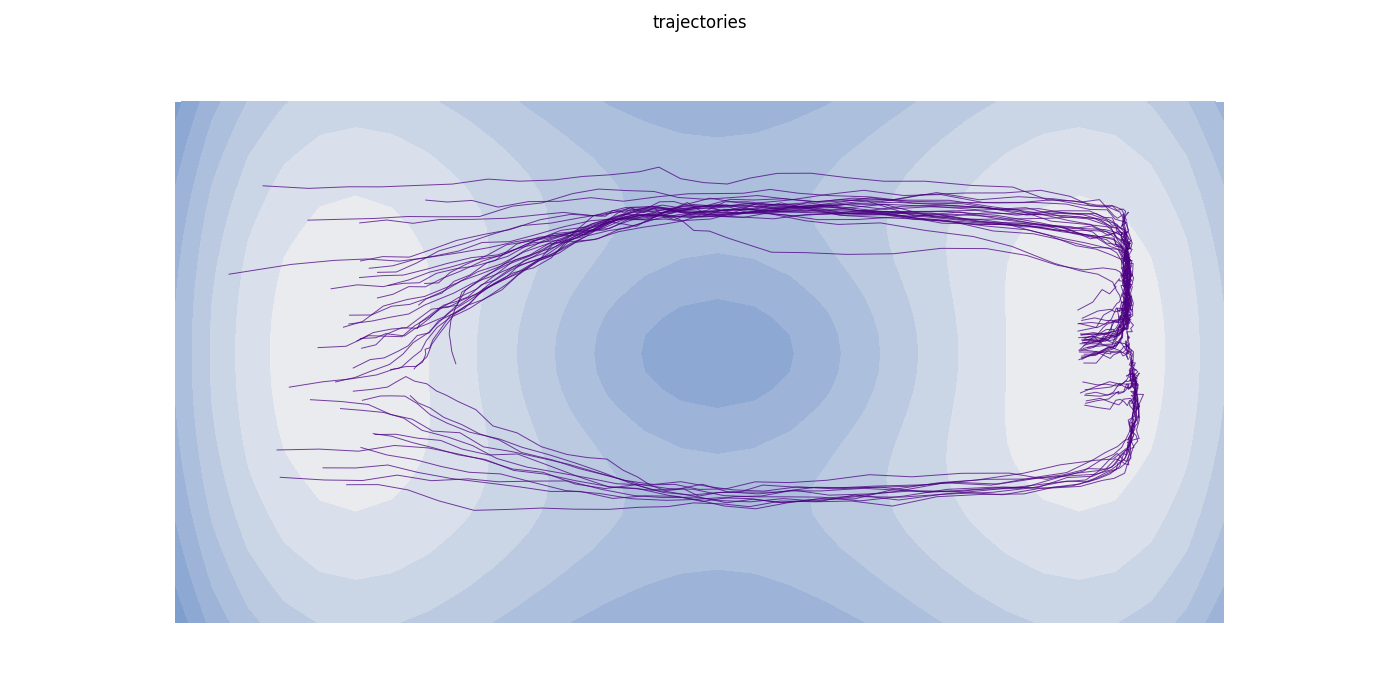} 
        \caption{$\lambda=2$} \label{fig:pinwell}
    \end{subfigure}
    \begin{subfigure}[t]{0.24\textwidth}
        \centering
        \includegraphics[width=\linewidth,trim={3cm 0 2cm 2cm},clip]{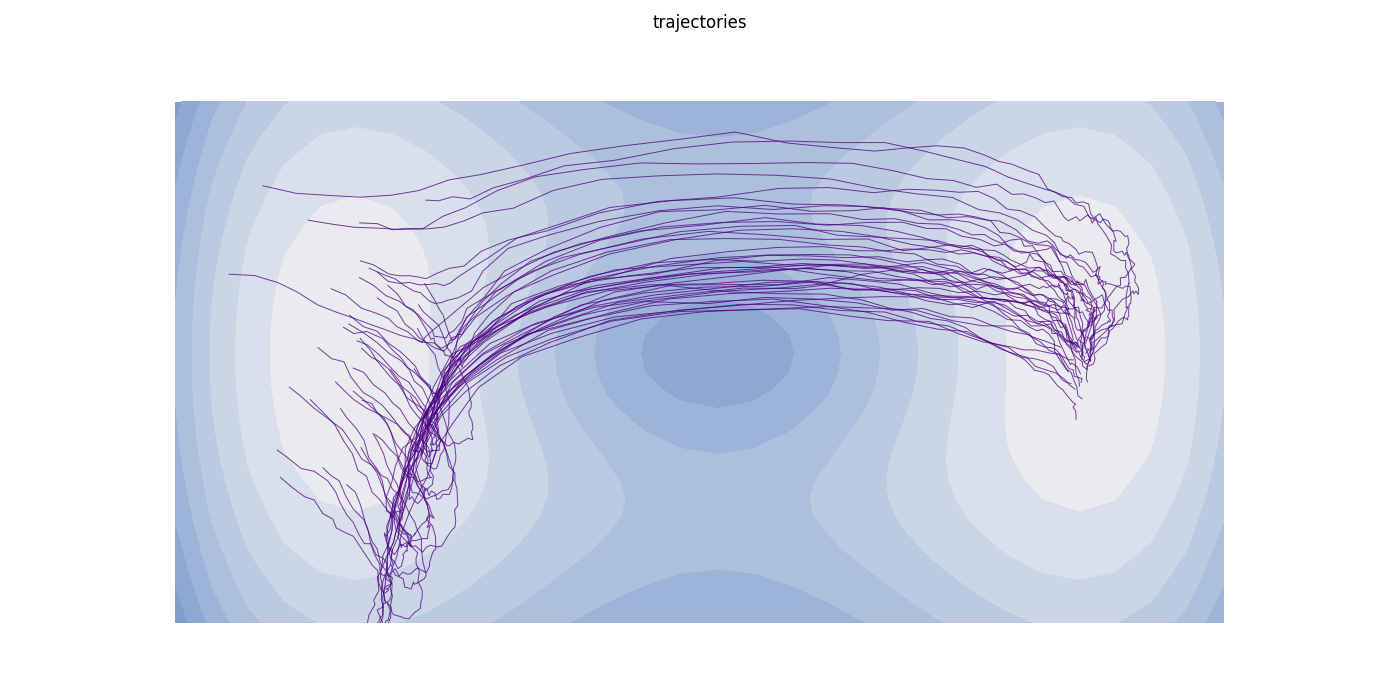} 
        \caption{$\lambda=0$ (EM Init)} \label{fig:nopinnwell}
    \end{subfigure}
    \begin{subfigure}[t]{0.24\textwidth}
    \centering
        \includegraphics[width=\linewidth,trim={3cm 0 2cm 2cm},clip]{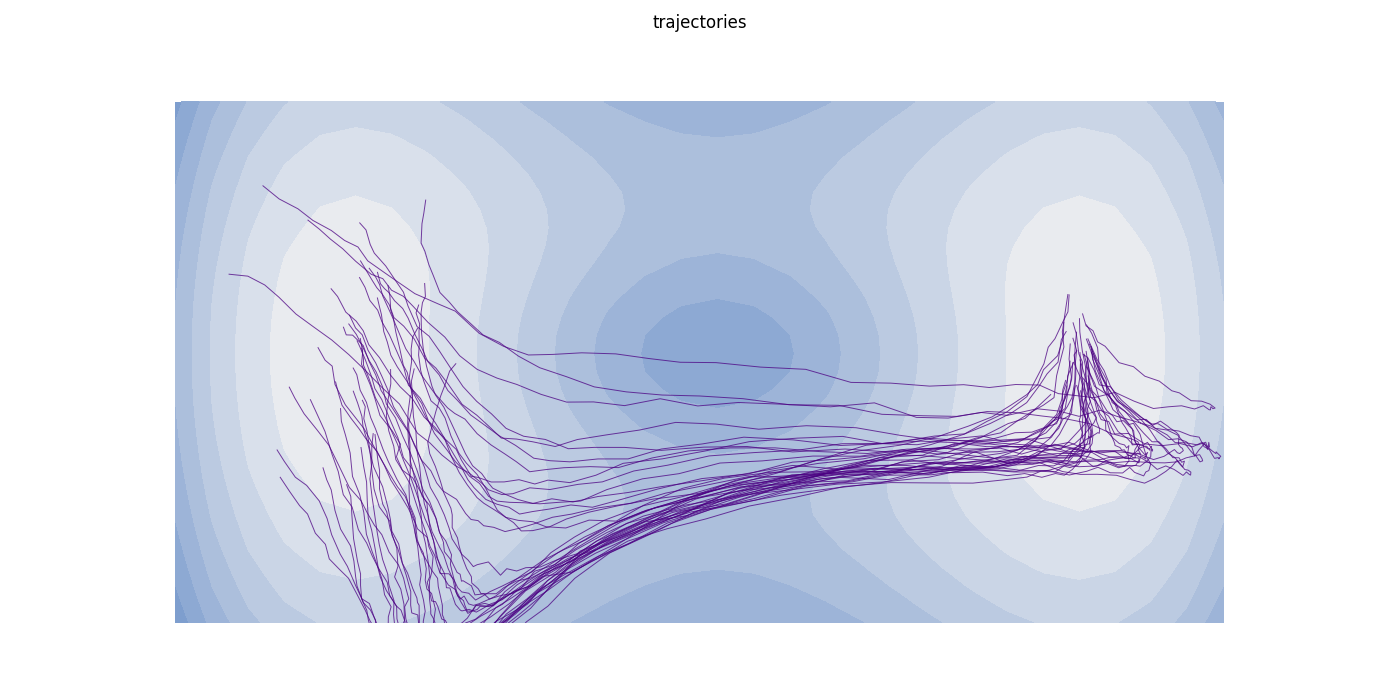} 
        \caption{$\lambda=0$ (Random Init)} \label{fig:badnopinwell}
    \end{subfigure}
    \begin{subfigure}[t]{0.24\textwidth}
    \centering
        \includegraphics[width=\linewidth,trim={3cm 0 2cm 2cm},clip]{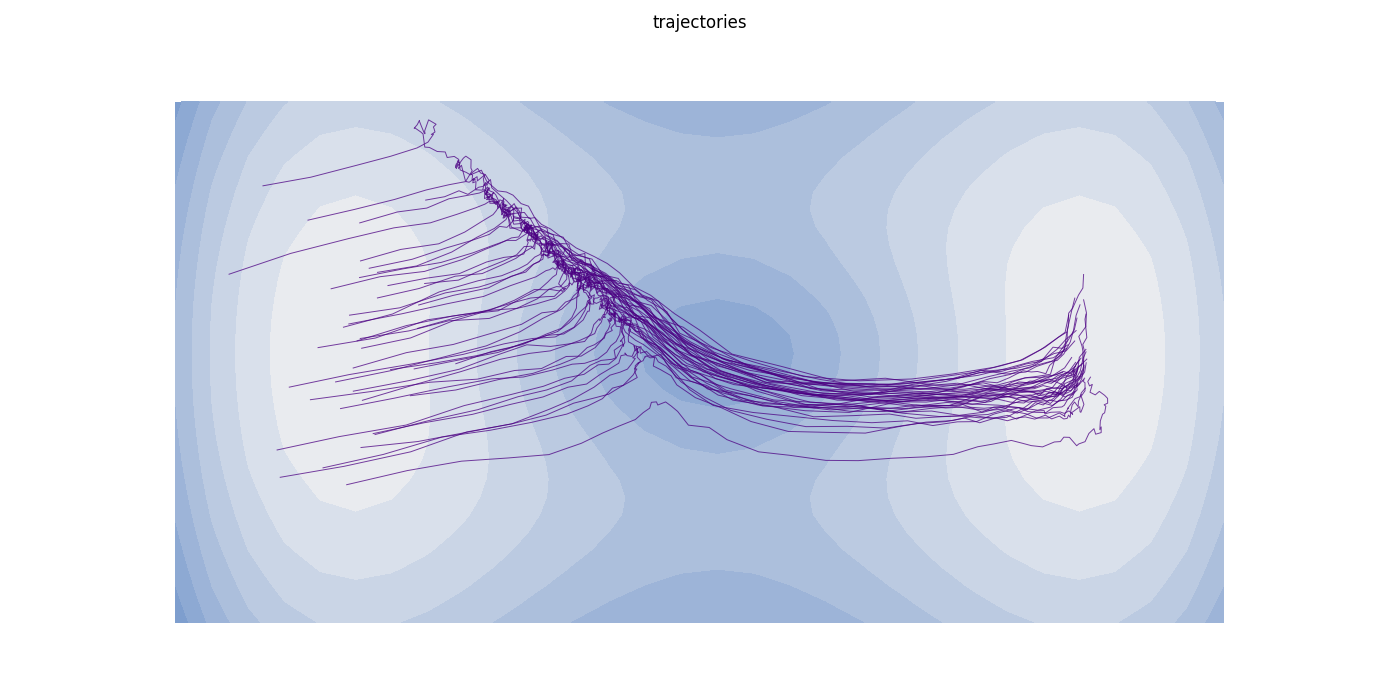} 
        \caption{DNF (No Prior)} \label{fig:noprior}
    \end{subfigure}\vspace{-0.25cm}
    \caption{ (\subref{fig:pinwell}) our proposed regularised objective,  (\subref{fig:nopinnwell}) $\lambda$ set to 0 but using clever EM motivated initialisation, (\subref{fig:badnopinwell}) $\lambda$ set to 0 with random initialisation of the forward drift, (\subref{fig:noprior}) for reference DNF with $f_t = 0$ (uninformative Schr\"odinger prior). \label{fig:wells}}
\end{figure*}

\begin{figure*}
    \centering
    \begin{subfigure}[t]{0.1515\textwidth}
        \centering
        \includegraphics[width=\linewidth,trim={25cm 0cm 2cm 2cm},clip]{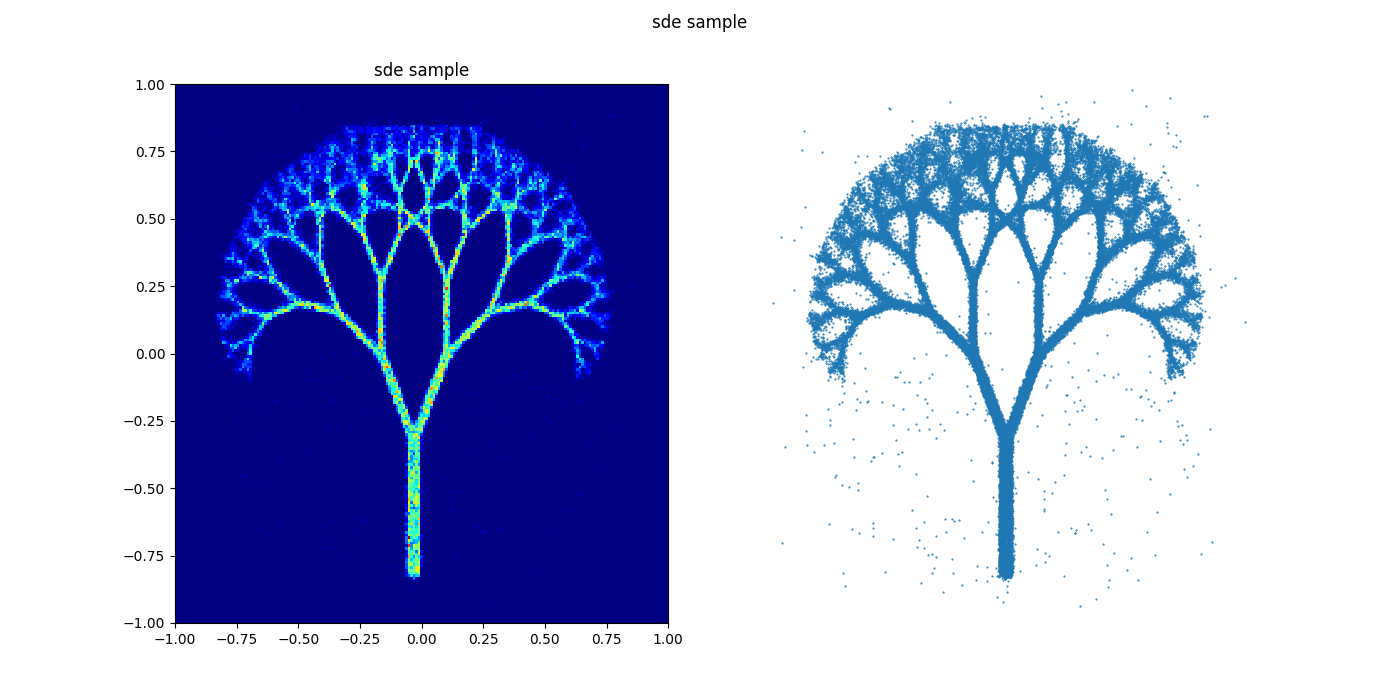} 
        % \caption{} \label{fig:tree}
    \end{subfigure}\hspace{-0.5cm}
    \begin{subfigure}[t]{0.1515\textwidth}
        \centering
        \includegraphics[width=\linewidth,trim={25cm 0 2cm 2cm},clip]{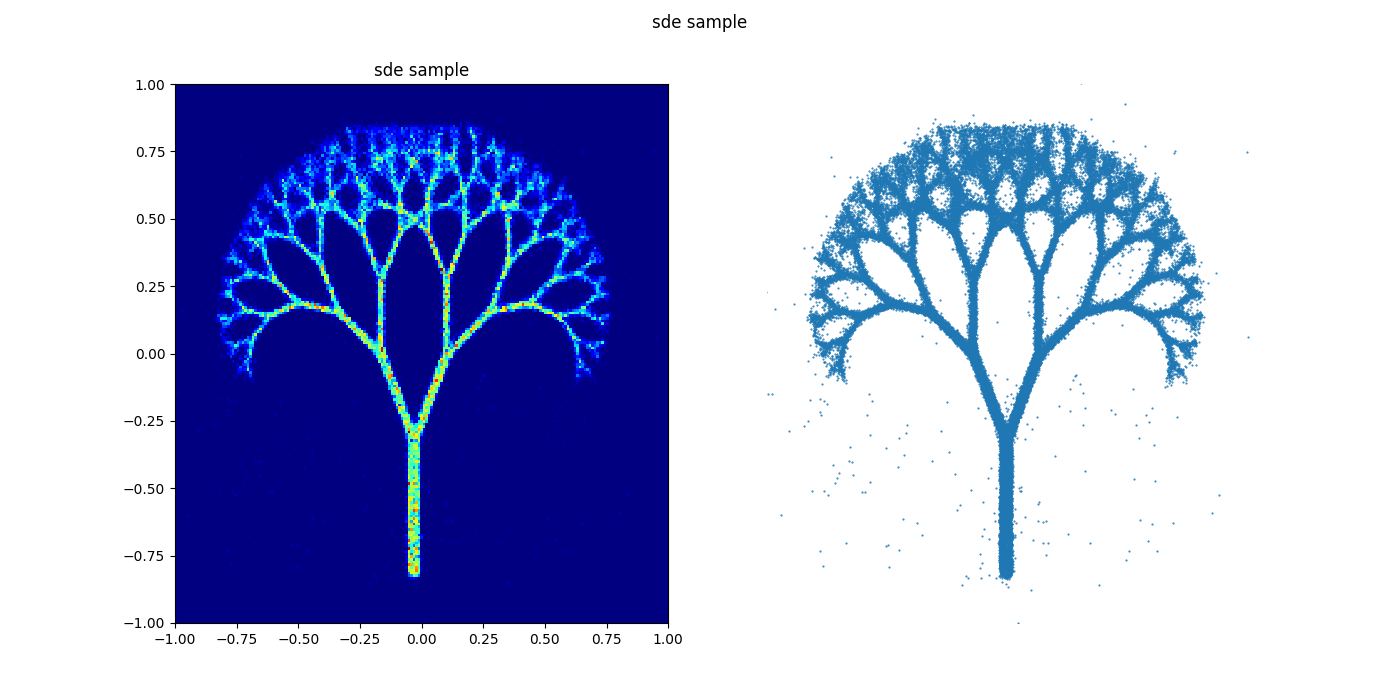} 
        % \caption{} \label{fig:treeno}
    \end{subfigure}
    \begin{subfigure}[t]{0.1515\textwidth}
    \centering
        \includegraphics[width=\linewidth,trim={25cm 0 2cm 2cm},clip]{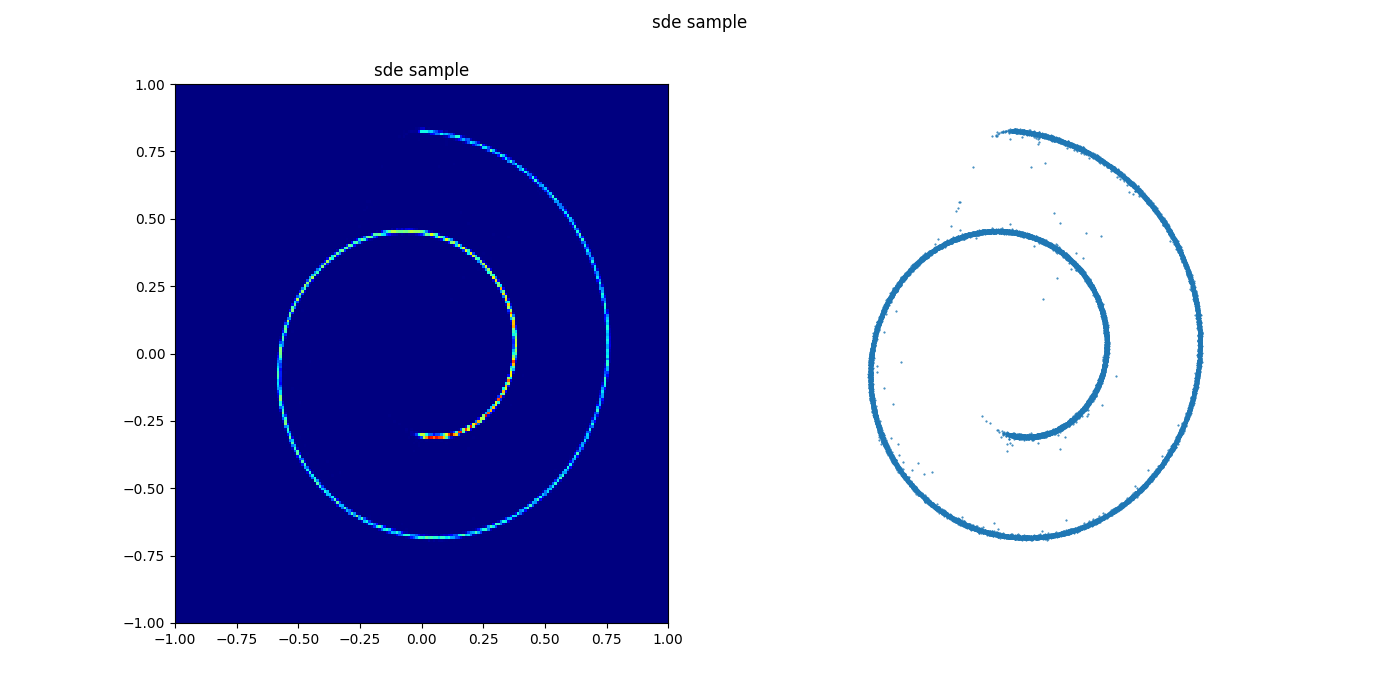} 
        % \caption{} \label{fig:swirl}
    \end{subfigure}\hspace{-0.6cm}
    \begin{subfigure}[t]{0.1515\textwidth}
    \centering
        \includegraphics[width=\linewidth,trim={25cm 0 2cm 2cm},clip]{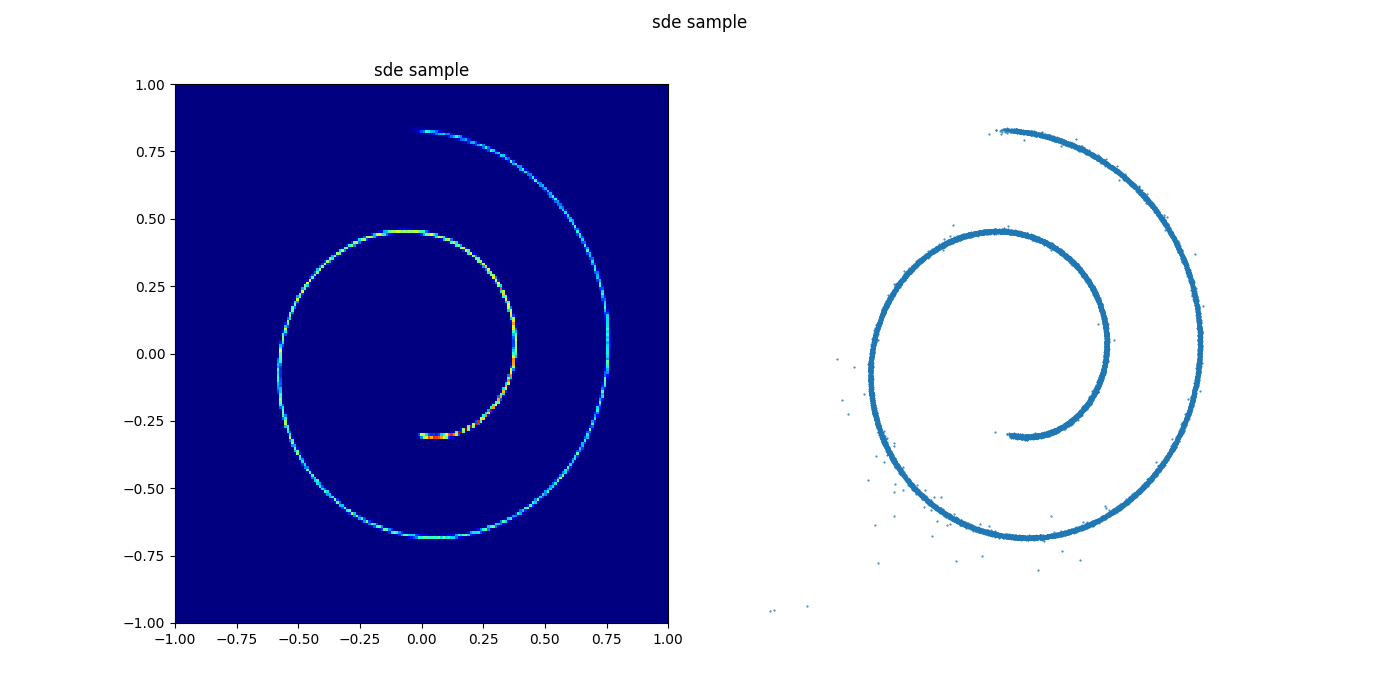} 
        % \caption{} \label{fig:swirlno}
    \end{subfigure}
    \begin{subfigure}[t]{0.1515\textwidth}
    \centering
        \includegraphics[width=\linewidth,trim={25cm 0 2cm 2cm},clip]{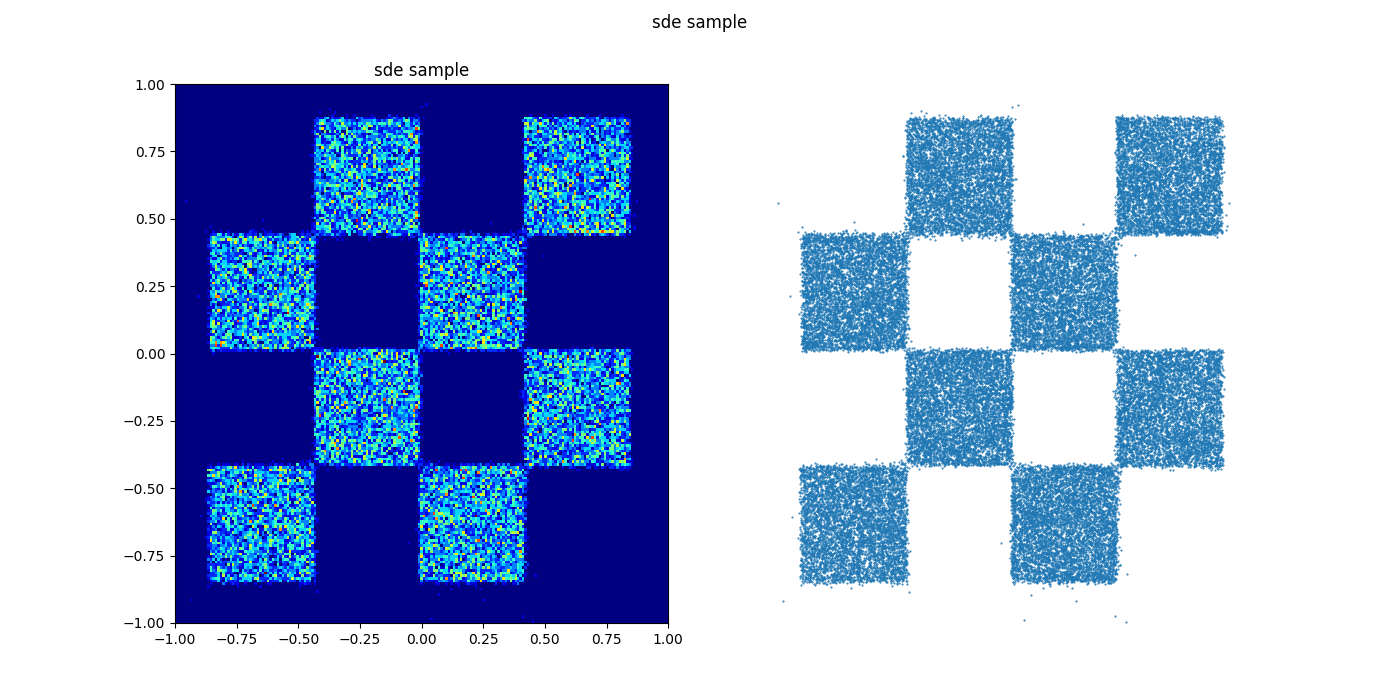} 
        % \caption{} \label{fig:swirlno}
    \end{subfigure}\hspace{-0.5cm}
    \begin{subfigure}[t]{0.1515\textwidth}
    \centering
        \includegraphics[width=\linewidth,trim={25cm 0 2cm 2cm},clip]{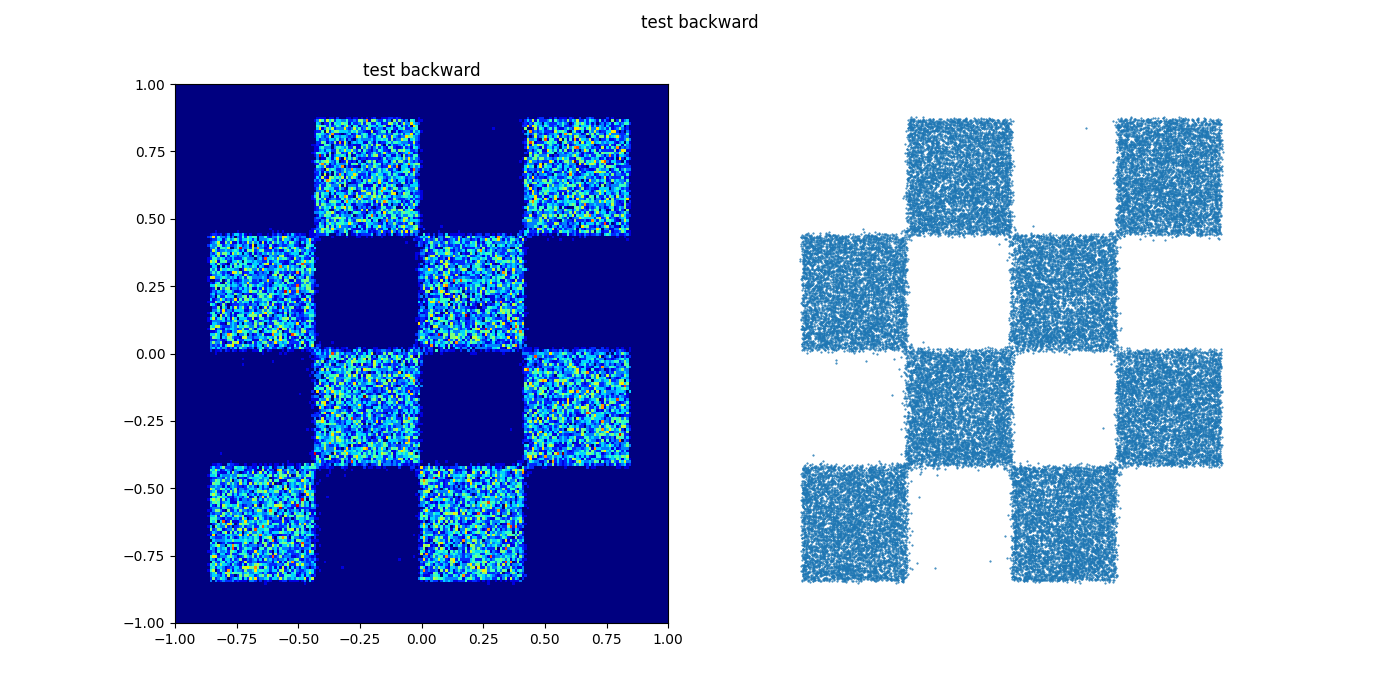} 
        % \caption{} \label{fig:swirlno}
    \end{subfigure}
    
    \vspace{0.25cm}

        \begin{subfigure}[t]{0.1515\textwidth}
    \centering
        \includegraphics[width=\linewidth,trim={25cm 0 2cm 2cm},clip]{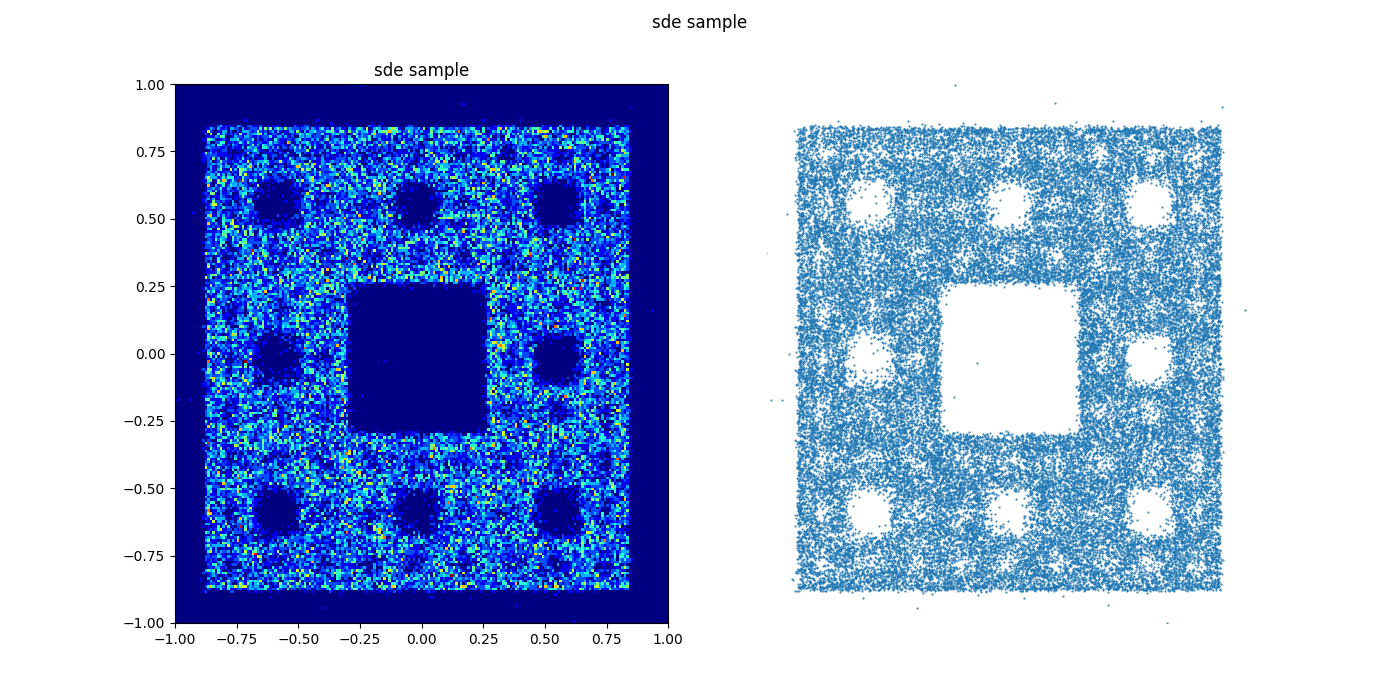} 
        % \caption{} \label{fig:sier}
    \end{subfigure}\hspace{-0.5cm}
    \begin{subfigure}[t]{0.1515\textwidth}
    \centering
        \includegraphics[width=\linewidth,trim={25cm 0 2cm 2cm},clip]{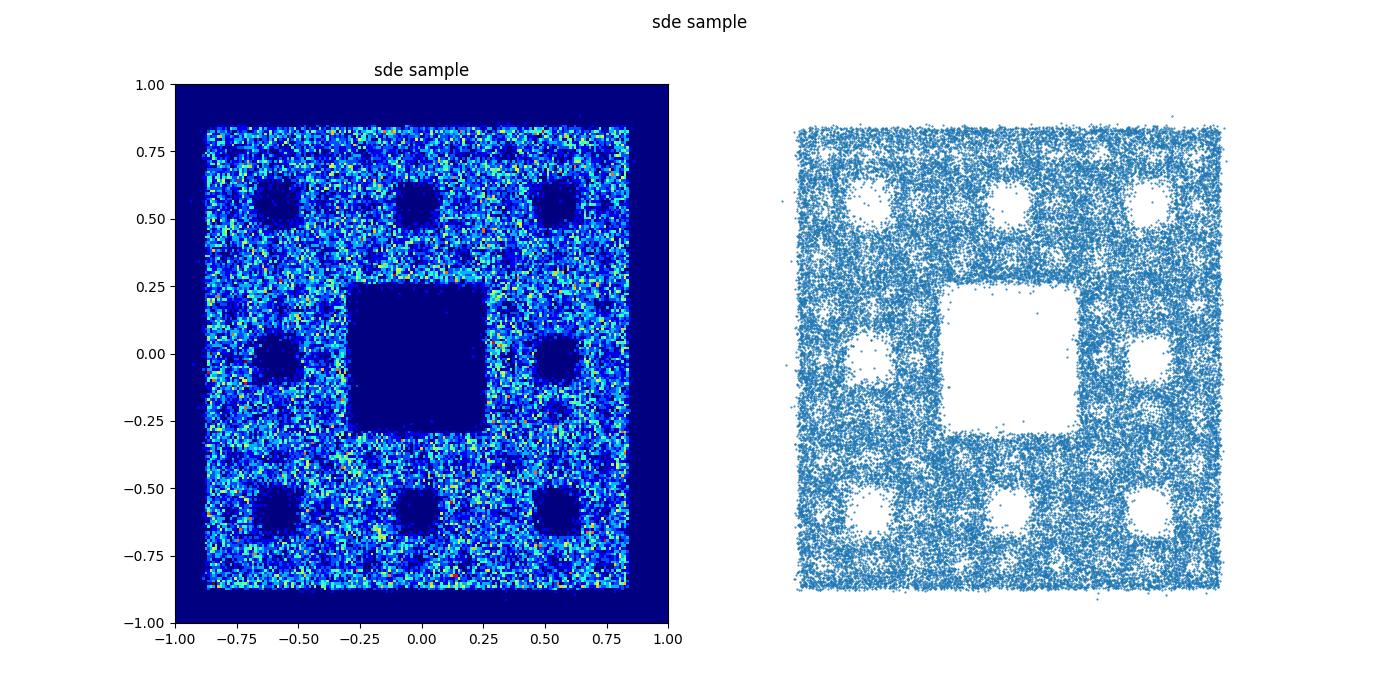} 
        % \caption{} \label{fig:sierno}
    \end{subfigure}
    \begin{subfigure}[t]{0.1515\textwidth}
    \centering
        \includegraphics[width=\linewidth,trim={25cm 0 2cm 2cm},clip]{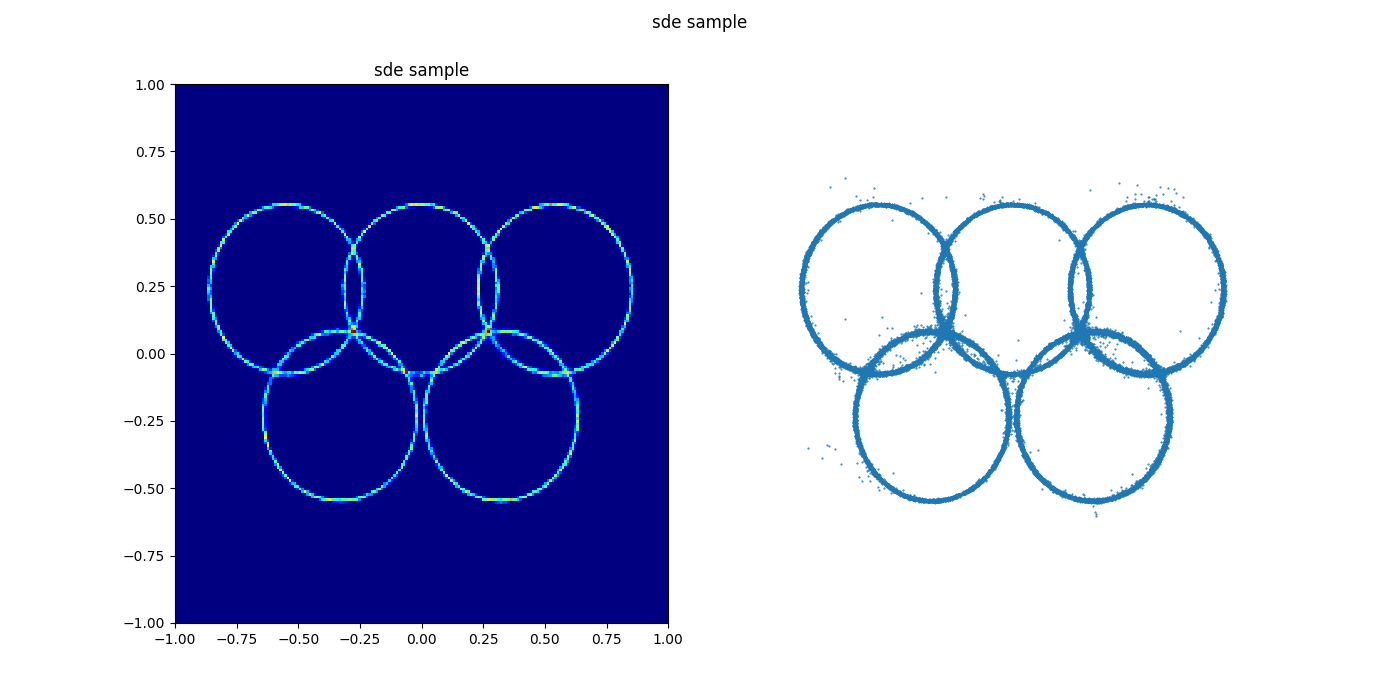} 
        % \caption{} \label{fig:ol_pinn}
    \end{subfigure}\hspace{-0.5cm}
    \begin{subfigure}[t]{0.1515\textwidth}
    \centering
        \includegraphics[width=\linewidth,trim={25cm 0 2cm 2cm},clip]{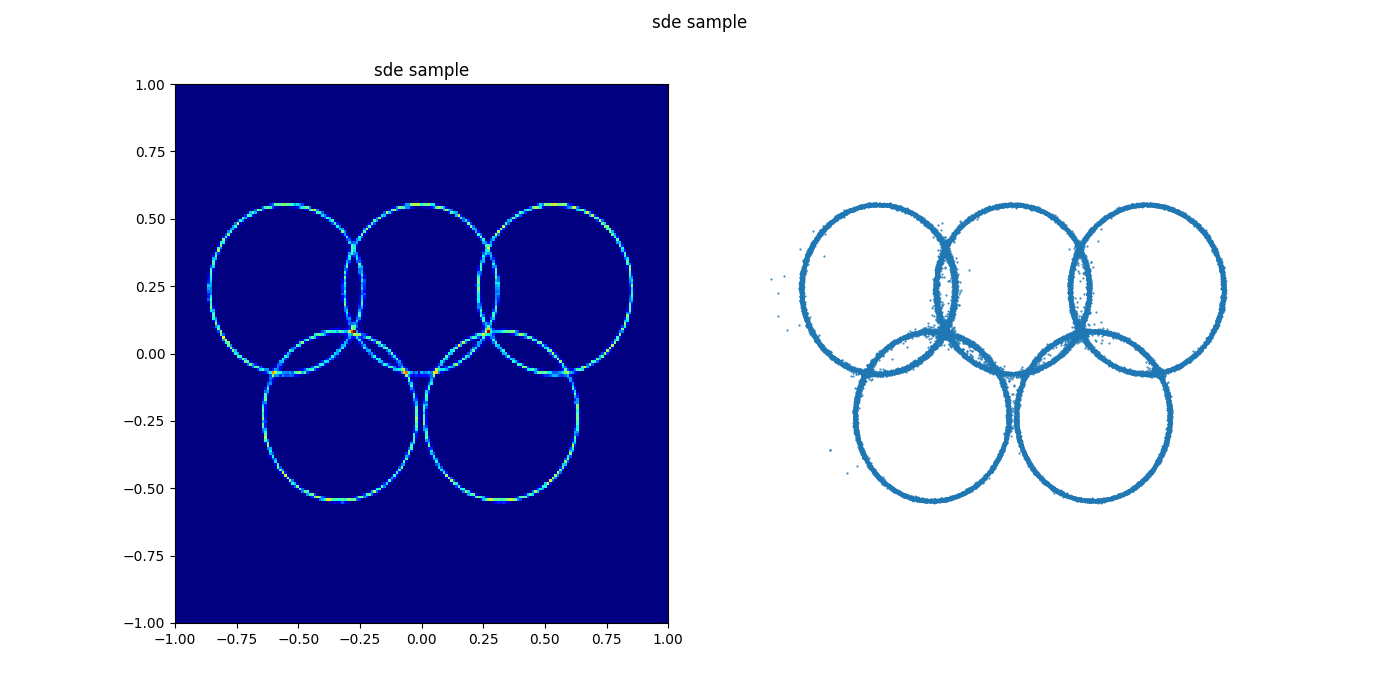} 
        % \caption{} \label{fig:ol_nopinn}
    \end{subfigure}
    \caption{Generated samples trained by our approach ($\lambda=0.5$) left and DNF ($\lambda=0$) right. Qualitatively we can observe that both learned models have similarly matched marginals.\label{fig:gen}}
\end{figure*}

\subsection{Implementation Details}
\label{app:impl}

\subsubsection{Neural network parameterisations}

Following \citet{zhang2021diffusion}  and the recent success in score generative modelling we choose the following parameterisations: 
\begin{subequations}
\begin{align}
    a_t(\vx) &= f_t(\vx)+  \sigma^2 \nabla \phi(t,\vx), \\
    b_t(\vx) &= f_t(\vx) + \sigma^2 \nabla \phi(t,\vx) - \sigma^2  s_{\theta}(t,\vx), \label{eq:score_net}
\end{align}
\end{subequations}
where $s_{\theta}$ is a score network \citep{song2020score,de2021diffusion,zhang2021diffusion} and $\phi(t,\vx)$ is a neural network potential. We adapt the architectures proposed in \citet{onken2021ot,koshizuka2022neural} to general activation functions. Note that these architectures allow for fast computation of $\Delta \phi$ comparable to that of Hutchinson's trace estimator \citep{grathwohl2018ffjord, hutchinson1989stochastic}. 

Finally, we remark that the parametrisation in  (\ref{eq:score_net}) allows us to learn the score of the learned SDE and thus seamlessly adapt our approach to using the probability flow ODE \citep{song2020score} at inference time.

% For these tasks, we mirror the experimental setup in \citet{zhang2021diffusion} where we start from $t_0=0.001$ rather than $0$ and go up to $T=0.05$ and an exponential schedule with base $0.9$ for the discretisation grid as specified in \url{https://github.com/qsh-zh/DiffFlow/blob/62200f0e1089c97e16e7fc38cf3db2526bfeae11/utils/scalars.py#L58}. Furthermore we use a diffusion coefficient of $\sigma=0.2$.

\subsubsection{PINN Loss}

For the PINN loss across all tasks, we sample the trajectories from $\mY_{0:T}^{\phi} \sim \ora{\P}^{\mu,\nabla \phi}$ and thus employ the same discretisation as used in the KL loss. However, we detach the trajectories $\mY_{0:T}^{\mathrm{detach}(\phi)}$ before calculating the gradient updates in a similar fashion to \citet{nusken2021solving}.

\end{document}

% This document was modified from the file originally made available by
% Pat Langley and Andrea Danyluk for ICML-2K. This version was created
% by Iain Murray in 2018, and modified by Alexandre Bouchard in
% 2019 and 2021 and by Csaba Szepesvari, Gang Niu and Sivan Sabato in 2022.
% Modified again in 2023 by Sivan Sabato and Jonathan Scarlett.
% Previous contributors include Dan Roy, Lise Getoor and Tobias
% Scheffer, which was slightly modified from the 2010 version by
% Thorsten Joachims & Johannes Fuernkranz, slightly modified from the
% 2009 version by Kiri Wagstaff and Sam Roweis's 2008 version, which is
% slightly modified from Prasad Tadepalli's 2007 version which is a
% lightly changed version of the previous year's version by Andrew
% Moore, which was in turn edited from those of Kristian Kersting and
% Codrina Lauth. Alex Smola contributed to the algorithmic style files.

% You may include other additional sections here.

\end{document}